\newtheorem{theorem}{Theorem}[section]
\newtheorem{corollary}{Corollary}[theorem]
\newtheorem{lemma}[theorem]{Lemma}
\newtheorem{definition}{Definition}[section]
\newtheorem*{remark}{Remark}
\newcommand{\1}[1]{\mathbbm{1}\left[#1\right]}
\newcommand{\cur}[1]{\mathcal{#1}}
\begin{document}

\title{Learning Robot Safety from Sparse Human Feedback using Conformal Prediction}

\author{Aaron O. Feldman, Joseph A. Vincent, Maximilian Adang, JunEn Low, and Mac Schwager, \textit{Senior Member, IEEE}
\thanks{This work was partly supported by ONR grant N00014-23-1-2354. Toyota Research Institute provided funds to support this work. The NASA University Leadership initiative (grant \#`80NSSC20M0163) provided funds to assist the authors with their research, but this article solely reflects the opinions and conclusions of its authors and not any NASA entity. The first author was also supported by NSF Graduate Research Fellowship grant 2146755, and the third author was supported on an NDSEG fellowship.}
\thanks{The authors are with the Department of Aeronautics and Astronautics, Stanford University, Stanford, CA 94305, USA, {\texttt\footnotesize \{aofeldma, josephav, madang, jelow, schwager\}@stanford.edu}.}
}

\markboth{IEEE Transactions on Robotics}%
{Feldman, Vincent, Adang, Low, and Schwager: Learning Robot Safety}

\maketitle

\begin{abstract}
Ensuring robot safety can be challenging; user-defined constraints can miss edge cases, policies can become unsafe even when trained from safe data, and safety can be subjective. Thus, we learn about robot safety by showing policy trajectories to a human who flags unsafe behavior. From this binary feedback, we use the statistical method of conformal prediction to identify a region of states, potentially in learned latent space, guaranteed to contain a user-specified fraction of future policy errors. Our method is sample-efficient, as it builds on nearest neighbor classification and avoids withholding data as is common with conformal prediction. By alerting if the robot reaches the suspected unsafe region, we obtain a warning system that mimics the human's safety preferences with guaranteed miss rate. From video labeling, our system can detect when a quadcopter visuomotor policy will fail to steer through a designated gate. We present an approach for policy improvement by avoiding the suspected unsafe region. With it we improve a model predictive controller's safety, as shown in experimental testing with 30 quadcopter flights across 6 navigation tasks. Code and videos are provided.~\footnote{Code: \url{https://github.com/StanfordMSL/conformal-safety-learning}
\newline Project page: \url{https://stanfordmsl.github.io/conformal-safety-learning}
}
\end{abstract}

\begin{IEEEkeywords}
Robot Safety, Human Factors and Human-in-the-Loop, Probability and Statistical Methods, Motion and Path Planning
\end{IEEEkeywords}

\section{Introduction}
\label{sec: intro}
In many robotic applications, it is challenging or infeasible to mathematically formalize constraints ensuring robot safety. This may be due to difficulty in modeling a complicated or uncertain environment. For instance, it is challenging to encode how a surgical robot should interact with deformable human tissue to remain minimally invasive. Ensuring safety may also be difficult due to the complexity of the robot policy itself. For example, it may be unclear what image observations cause a visuomotor manipulation policy to drop an object. Safety may also be subjective, hence difficult to specify. For example, users may have different tolerances for the speed a quadcopter may travel when navigating in their vicinity.

To tackle cases where robot safety cannot be easily defined or enforced, we propose to learn safety directly from human labels. Using trajectory data with instances where the policy became unsafe, we can infer a broader set of states, possibly in a learned latent space, where the human is likely to deem the policy unsafe in the future. Such data can either come from historical operation or be acquired in simulation (to make policy failure non-damaging). We rely on sparse human feedback to flag trajectories when they first become unsafe, making data labeling straightforward; the user can simply watch simulations and terminate when the robot acts unsafely. We could also use automatic labeling if a true safety definition is known but unusable at deployment (e.g., due to runtime or privileged state knowledge).


\begin{figure}
    \centering
    \includegraphics[width=\linewidth]{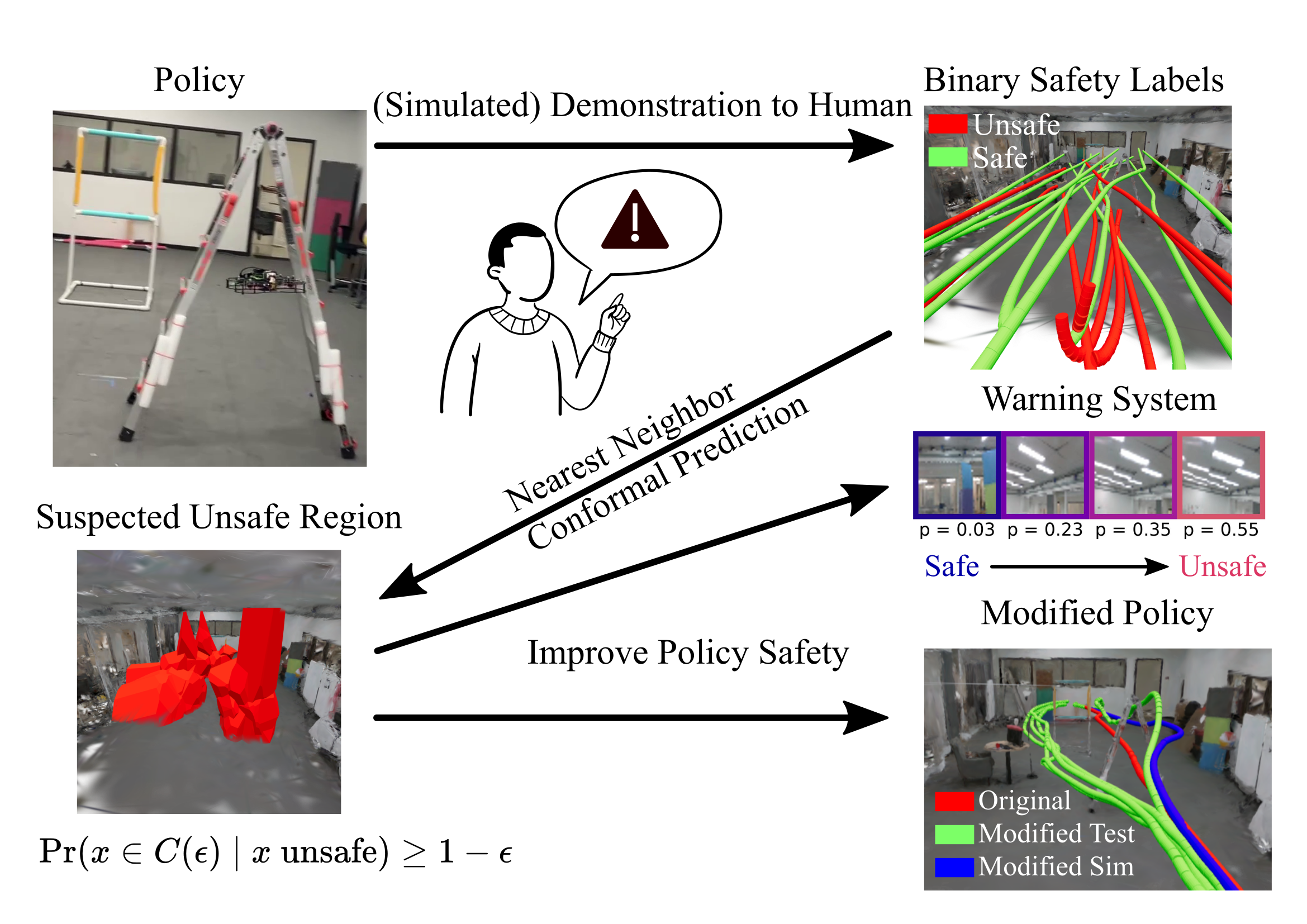}
    \caption{Our approach to learning robot safety from sparse human feedback. Given a robot policy, we repeatedly demonstrate it to a human, possibly in simulation, and have them terminate any trajectories which they deem unsafe. Using these binary labels, we apply conformal prediction to calibrate a nearest neighbor classifier and determine a suspected unsafe region $C(\epsilon)$ containing at least $1-\epsilon$ of states that would be deemed unsafe by the human. Using $C(\epsilon)$, we can improve the original policy's safety via a warning system or backup safety controller. The graphics show experiment results wherein we use human feedback to (i) develop a warning system for a visuomotor quadcopter policy and (ii) increase the safety of a quadcopter model predictive controller.}
    \label{fig: hero_fig}
\end{figure}

In any case, there may only be a few instances of policy failure, either due to a limited quantity of human-labeled data or because the policy is effective and failure is rare. To be sample-efficient, we parametrize the region of unsafe states using a variant of nearest neighbor classification (as opposed to more data-hungry machine learning methods). Furthermore, we present novel theory showing how we can, by reusing the original training data, apply the statistical procedure of conformal prediction to calibrate this classifier. This calibration provides a finite-sample generalization guarantee, ensuring that $1 - \epsilon$ of future unsafe states will get correctly flagged. It applies without distributional assumptions, even with few unsafe samples, and regardless of classifier quality. From a geometric perspective, the region suspected of being unsafe, termed the suspected unsafe sublevel (SUS) region $C(\epsilon)$, contains states falling below the calibrated classification threshold. Our approach is simple and interpretable: the user provides any dissimilarity measure (e.g., Euclidean distance) between states, and the resulting SUS region $C(\epsilon)$ often takes a geometrically convenient form (e.g., union of polyhedra) that can be visualized or used for avoidance constraints.




We discuss two approaches for using the obtained SUS region to improve policy safety: (i) as an auxiliary warning system, and (ii) for policy modification by avoiding the SUS region. The warning system alerts the user if the policy reaches the SUS region. Because of the conformal calibration it achieves a guaranteed miss rate, failing to alert in at most $\epsilon$ of unsafe policy executions. Beyond hard classification, the warning system can also act as a runtime safety monitor, providing an interpretable and calibrated scalar measure of safety throughout policy execution. The policy may also be modified to avoid entering the SUS region. As one such implementation, we consider modifying a model predictive control (MPC) policy to switch to a backup safety mode which steers to historical safe data when the MPC planned states are predicted to enter the SUS region. 



We demonstrate our approach by preemptively predicting failures of a visuomotor policy from image data using human video labeling, modifying a quadcopter policy to navigate while avoiding \textit{a priori} unknown obstacles, and in hardware experiments where we refine a quadcopter policy to more cautiously navigate in a test environment using human feedback. We provide an overview of our approach and highlight some experimental results in Figure~\ref{fig: hero_fig}.


In summary, our primary contributions are,
\begin{itemize}
    \item We present novel theory to perform conformal prediction with nearest neighbor classification in closed-form without holding out data.
    \item From human feedback, we obtain a calibrated warning system guaranteed to flag a user-specified fraction of future policy errors.
    \item We can improve the robot policy by using a backup safety mode which is triggered upon a warning system alert.
\end{itemize}



\section{Literature Review}
\label{sec: lit_rev}

We first discuss related works in the robotics literature for data-driven learning of robot safety and then turn to related applications of conformal prediction in robotics.

\subsection{Learning Robot Safety}
Learning about robot safety from data has been tackled in a variety of ways. We categorize these approaches based on how they collect data: from expert demonstration, with binary stop feedback, or via expert assistance.

\subsubsection{Learning Constraints from Expert Demonstrations}
Several approaches have focused on learning safety constraints from an expert demonstrator that acts optimally while respecting the unknown safety constraints. Adopting a probabilistic view, some methods infer constraints that maximize demonstration likelihood \cite{scobee2020maximum, stocking2021discretizing, stocking2022, anwar2021inverse}. Others reason about implied safety from optimality conditions of the demonstrations, solving mixed integer feasibility programs \cite{chou2019learning, chou2019learningHighDim} or using the Karush-Kuhn-Tucker (KKT) conditions \cite{chou2020learning, chou2022gaussian}.

Our approach contrasts with methods learning from safe, optimal demonstration data in two key respects. Firstly, we do not assume access to a known reward function nor to demonstrated trajectories which are optimal while respecting the unknown constraints. Secondly, using conformal prediction, our recovered constraint set provides guaranteed coverage of the unsafe set without parametric/distributional assumptions.

\subsubsection{Learning Safety from Binary Feedback}

Several others have considered learning safety from a binary feedback function that only indicates when a state is unsafe. \cite{srinivasan2020learning, bharadhwaj2021conservative} repeatedly query this function during reinforcement learning training to simultaneously fit a safety-constrained policy and a safety critic which predicts future failure probability. \cite{Poletti2023} instead incorporates this learned function as a penalty in policy optimization. \cite{Thananjeyan2021} pretrains a safety critic from offline trajectory data and learns a recovery policy to minimize the critic's predicted risk, subsequently refining both through environment interaction. In contrast with these methods which iterate over (typically simulated) environment interaction, we only use offline batch data, with very few unsafe samples (e.g., $25$ not hundreds), and aim to make a given policy significantly safer in one iteration. Since we assume safety is unknown and rely on human labeling for binary feedback, sample efficiency is critical in our setting and precludes methods like \cite{thananjeyan2020safety} which query the binary safety function online.


\subsubsection{Learning Safety from Human Interaction}
Other approaches learn safety from human interventions or corrections. The human may steer back to safety \cite{spencer2022expert}, provide directional guidance \cite{xie2024safe}, roughly abide constraints during shared control \cite{negar2016}, or physically outline safe boundaries \cite{Saveriano_2019}. While we too use human feedback for learning safety, we only require the human to terminate trajectories when they become unsafe, a simpler task than direct control/intervention.


\subsection{Conformal Prediction in Robotics}


Conformal prediction has gained popularity in robotics as a distribution-free method to improve robot safety. It has been used in two related applications: for anomaly detection and collision avoidance.

\subsubsection{Conformal Prediction for Anomaly Detection}

In the context of anomaly detection, which aims to flag instances unlike the training data, conformal prediction is used to calibrate the rate of false alarms without imposing distributional assumptions. Early works applied conformal prediction to nearest neighbor and density-based anomaly detectors \cite{laxhammar2010conformal, laxhammar2011, smith2014anomaly}. More recent efforts calibrate anomaly scores from learned models, such as trajectory forecasters \cite{contreras2024outofdistributionruntimeadaptationconformalized} or image autoencoders \cite{sinha2023closingloopruntimemonitors}. Instead of calibrating false alarm rate, \cite{luo2023sampleefficient} calibrates miss rate by applying conformal prediction to a predefined safety function using unsafe samples. They assume a predefined scalar safety function, while we directly learn safety from observed safe and unsafe states.

\subsubsection{Conformal Prediction for Collision Avoidance}

Conformal prediction has been used to improve robot collision avoidance by accounting for prediction and environment uncertainty. \cite{lindemann2023safe, strawn2023conformal,cleaveland2023conformal,stamouli2024recursively} use split conformal prediction to produce uncertainty sets for predicted motion of other agents which their robot policy then avoids. \cite{dixit2022adaptive, muthali2023multiagent} adaptively learn forecasting error within one trajectory, relying on time-series extensions to conformal prediction~\cite{gibbs2021, gibbs2022conformal}. In contrast with these works, we use full conformal prediction to avoid reserving calibration data and learn policy safety directly without pretrained models. By extracting the first unsafe state encountered in policy execution, we provide a guarantee on trajectory-level error probability without using time-series conformal methods that provide somewhat weaker results.

\section{Overview of Conformal Prediction}
\label{sec: cp_overview}
Conformal prediction \cite{shafer2007tutorial, angelopoulos2022gentle} starts with the question: Given $N+1$ random variables (scores) $s_1, ..., s_N, s_{N+1}$ which are exchangeable, equally likely under permutation, \footnote{Independent and identically distributed scores are also exchangeable.} what can be said about the value of $s_{N+1}$ relative to $s_1, ..., s_N$? If we assume no ties, the answer is that 
\begin{equation}
\label{eq: marginal}
\Pr(s_{N+1} \leq s_{(k)}) = \frac{k}{N+1}
\end{equation}
since $s_{N+1}$ is equally likely to achieve any rank in a sorting of $s_1, ..., s_{N+1}$. Here $s_{(k)}$ refers to the $k$'th order statistic of $s_1, ..., s_N$ (the $k$'th smallest value). If ties may occur, then the above equality is relaxed to $\geq$. 

Using this result, we can produce a probabilistic upper bound on $s_{N+1}$ using $s_1, ..., s_N$. Since $s_1, ..., s_N$ are random, the bound is random and so the user specifies with what probability the bound should hold: $1 - \epsilon$. We call $1-\epsilon$ the coverage probability and $\epsilon$ the miscoverage probability. Taking 
\begin{equation}
\label{eq: k_epsilon}
k(\epsilon) = \lceil(N+1)(1-\epsilon)\rceil,
\end{equation}
\begin{equation}
\label{eq: eps_marginal}
\Pr(s_{N+1} \leq s_{(k)}) \geq 1-\epsilon
\end{equation} where this holds for any distribution. Provided exchangeability holds, the scores may be obtained by applying any function to the original data. In this case, bounds on $s_{N+1}$ correspond to confidence sets for new data.

In split (or inductive) conformal prediction \cite{vovkinductiveCP}, the scores are computed by applying a fixed function $s_i = s(X_i, Y_i)$ to each IID datum $(X_i, Y_i)$. This approach is used in much of the recent robotics literature which requires held-out calibration data to compute $s_1, ..., s_N$. While Equation \ref{eq: eps_marginal} holds in expectation (over repeated draws of the calibration set), for one calibration set the coverage is known (assuming continuous IID scores) to follow a Beta distribution \cite{hulsman2022distributionfreefinitesampleguaranteessplit}. 

In contrast, full (or transductive) conformal prediction \cite{vovktransductiveCP, lei2018distribution}, used in this work, achieves exchangeability by swapping data ordering. Given IID points $x_1, ..., x_N$ and a candidate point $x$, the score function $s(D; x)$ measures the dissimilarity (as user-defined) between $x$ and the dataset $D$ of the remaining $N$ points (lower scores being more similar). \footnote{Split conformal prediction is a special case where the score depends only on the excluded point $x = (X, Y)$ so $s(D; x) = s(X, Y)$.} The scores are obtained by swapping $x$ with the different $x_i$ i.e., looking at each of the leave-one-out scores \footnote{Instead of replacing $x_i$ with $x$, the score may also keep both, using an augmented dataset of all $N+1$ points \cite{lei2018distribution}.}:
\begin{align}
\label{eq: swapping}
s_1^x = s(x, x_2, ..., x_N; x_1), s_2^x = s(x_1, x, x_3, ..., x_N; x_2), ..., \\ \nonumber
s_N^x = s(x_1, x_2, ..., x_{N-1}, x; x_N), s_{N+1}^x = s(x_1, x_2, .., x_N; x)
\end{align}
Assuming the score function is symmetric in the first $N$ arguments \cite{lei2018distribution} (i.e., $s(D; x)$ is invariant under reordering $D$), the resulting scores in Eq.~\ref{eq: swapping} will appear exchangeable when evaluating at $x = x_{N+1}$ exchangeable with $x_1,...,x_N$. Thus, using $k(\epsilon)$ as in Eq.~\ref{eq: k_epsilon}, the confidence set 
\begin{equation}
\label{eq: full conformal set}
C(\epsilon) = \{x \mid s_{N+1}^x \leq s_{(k)}^x\}
\end{equation}
satisfies, for $x_{N+1}$ exchangeable with $x_1,...,x_N$
\cite{lei2018distribution},
\begin{equation}
\label{eq: x_marginal}
    \Pr(x_{N+1} \in C(\epsilon)) \geq 1 - \epsilon.
\end{equation}

Full conformal prediction allows for more flexible score functions and does not require separate calibration data. However, the dependence on $x$ in both sides of Eq.~\ref{eq: full conformal set} often makes $C(\epsilon)$ intractable to compute in closed-form (although circumventions exist \cite{vovk2012crossconformal, ndiaye2019computing, ndiaye2022}). 
We avoid such complexity by using the nearest neighbor distance as the score function. In this case, we prove that we can pre-compute a single threshold $r$ defining $C(\epsilon)$. At inference, we need only check $s_{N+1}^x \leq r$ to check whether $x \in C(\epsilon)$. Furthermore, we can easily geometrically describe and visualize $C(\epsilon)$. 


\section{Problem Setting}
\label{sec: problem_setting}

Let $\cur{X}$ denote the state space of the robot system and $\cur{U}$ refer to the associated action (control input) space. We assume a Markovian, time-invariant system evolving via, possibly stochastic, transition dynamics $\cur{T}: \cur{X} \times \cur{U} \rightarrow \cur{X}$ and with starting state distribution $\cur{D}_0$. Let $\cur{A} \subset \cur{X}$ be the true, unknown subset of states which are unsafe. We assume that $\cur{A}$ is time-invariant. 

We are given an original closed-loop policy $\pi : \cur{X} \rightarrow \cur{U}$ mapping from state to action. When executing $\pi$, we assume it is terminated when
\begin{itemize}
    \item Reaching a set $\cur{G} \subset \cur{X}$ of goal states.
    \item First reaching the unsafe set $\cur{A}$.
    \item Neither of the first two conditions have occurred within $T_{max}$ execution steps (a time-out condition).
\end{itemize}
We make no assumptions about the policy $\pi$, it may be feedback or model predictive control, obtained via imitation or reinforcement learning, and may be deterministic or stochastic.

Given policy $\pi$, we refer to a trajectory rollout $\tau$ of $\pi$ as a closed-loop execution of $\pi$. We obtain $\tau$ by initializing $x_0 \sim \cur{D}_0$, repeatedly execute action $u_t \sim \pi(x_t)$, and update the state $x_{t+1} \sim \cur{T}(x_t, u_t)$ until termination. We write $\tau = (x_0, x_1, ..., x_T)$. This stochastic procedure for obtaining $\tau$ defines an (implicit) associated distribution over resulting trajectories $\cur{D}_{\tau}$. In general, trajectories may differ in length so let $\tau(-1) = x_T$ denote the last state before termination. By definition $\tau(-1) \in \cur{A}$ if and only if $\tau$ is unsafe.

As the unsafe set $\cur{A}$ is unknown, we rely on a human labeler to declare trajectories unsafe. We collect safety data $\tau_i \sim \cur{D}_\tau$ by repeatedly demonstrating $\pi$ $P$ times to a human to obtain trajectory dataset $\{\tau_1, \tau_2, ..., \tau_P\}$. These rollouts may be collected in simulation or, when feasible, in the real world. To minimize distribution shift between data collection and policy deployment, the initial states $x_0 \sim \cur{D}_0$ should be sampled to match expected deployment conditions (e.g., quadcopter positions/velocities typical of uninterrupted autonomous flight).

In this demonstration phase, the observer is assumed to stop the trajectory whenever it reaches $\cur{A}$. Stopped trajectories are declared unsafe and otherwise declared safe. In terminating unsafe trajectories, the labeler can act preemptively and the learned SUS region will reflect their level of caution. For instance,  trajectories may be terminated when the robot's motion makes collision seem imminent. Preemptive labeling can also be achieved retroactively, by marking as unsafe the state recorded several timesteps before failure.


Let $D_{\cur{A}}$ store the $N$ unsafe trajectories collected and $D_{\overline{\cur{A}}}$ store the $P - N$ safe trajectories. Although we assume $N > 0$, our approach provides the same form of safety guarantees regardless of the size of $N$. \footnote{If we observed $N = 0$ and took a large $P$ then the original policy is presumably safe enough without needing modification or a warning system.} For unsafe trajectories, we also know the individual state, flagged by the labeler, where the trajectory first became unsafe. In each of the $N$ unsafe trajectories, we extract this final state, termed the error state: 
\begin{equation}
\label{eq: error_data}
    D = \{x_1 = \tau_1(-1), ..., x_N = \tau_N(-1)\} = \{\tau_i(-1) \ \forall \tau_i \in D_{\cur{A}}\}.
\end{equation}

Because we know individual error states $D$, we can reason about future policy safety at the level of individual states, instead of full higher-dimensional trajectories. Using $D$ we will learn a broader region $C(\epsilon)$ of suspected unsafe states where we anticipate future policy errors may occur.

Given policy trajectory distribution $\cur{D}_{\tau}$ and unknown unsafe set $\cur{A}$, there is an induced, unknown and possibly complex, distribution $F$ over the resulting error states $\tau(-1) \in \mathcal{A}$ reached by policy $\pi$ i.e., $F$ characterizes the distribution of terminal states $\tau(-1)$ conditioned on $\tau(-1) \in \cur{A}$. We refer to $F$ as the policy error distribution.~\footnote{$F$ has unknown probability density
\begin{equation*}
    \rho_F(x) = \frac{1}{\Pr[\tau(-1) \in \cur{A}]} \mathds{1}[x \in \cur{A}] \rho(x \mid \tau \sim \cur{D}_\tau, \tau(-1) = x)
\end{equation*}
where $\rho$ captures the density over terminal states (depending on $\cur{D}_{\tau}$).} By demonstrating $\pi$ to the human labeler, we have obtained $N$ IID samples $D$ from this unknown distribution $F$. Our objective is to use these observed error states to identify a set $C(\epsilon)$ such that a future policy error $\tau(-1) = x_{N+1} \sim F$ will likely be contained in $C(\epsilon)$:
\begin{equation}
\label{eq: problem_def}
    \Pr[x_{N+1} \in C(\epsilon)] = \Pr[\tau(-1) \in C(\epsilon) \mid \tau(-1) \in \cur{A}] \geq 1 - \epsilon
\end{equation}
i.e., $C(\epsilon)$ should cover/contain $\geq 1-\epsilon$ of future error states.

\section{Nearest Neighbor Conformal Prediction}
\label{sec: nncp}

\subsection{Main Results}
In this section, we describe how we use full conformal prediction applied to the collected error states $D$ to identify the SUS region $C(\epsilon)$ that in expectation contains at least a $1-\epsilon$ fraction of error states that would be reached in future executions of $\pi$ i.e., satisfying Eq.~\ref{eq: problem_def}. $C(\epsilon)$ contains states whose nearest neighbor among the error states $D$ falls below a distance threshold $r$ i.e., in the sublevel set of the nearest neighbor classifier fit using $D$. We thus term $C(\epsilon)$ the suspected unsafe sublevel (SUS) region. To guarantee that $C(\epsilon)$ covers $1-\epsilon$ of future policy error states, satisfying Eq.~\ref{eq: problem_def}, we use full conformal prediction to select/calibrate the threshold $r$. Notably, we do so by re-using the points in $D$, and do not require a separate calibration dataset (as in split conformal prediction), which is especially useful when we have just a few unsafe data points. 

To use full conformal prediction, we define a nearest neighbor score function:

\begin{equation}
\label{eq: score func}
    s(D; x) := \min_{x' \in D} d(x', x)
\end{equation}
where $d(x', x)$ is any measure of dissimilarity between two states $x$ and $x'$ (smaller values indicate more similarity). Thus, the score function is low when a new candidate state $x$ (observed during deployment) is similar to the previously observed error states in $D$. Regardless of how similarity is defined, through  $d$, our analysis guarantees $C(\epsilon)$ will achieve at least $1-\epsilon$ coverage (Eq.~\ref{eq: problem_def}). While $d$ can be non-metric or asymmetric, certain choices can more efficiently discriminate safety, yielding a smaller SUS region and fewer false alarms.

Typically in full conformal prediction, $C(\epsilon)$ is described implicitly and requires knowing the query point $x$ to check inclusion \cite{angelopoulos2024theoretical, lei2018distribution}. Using all $N+1$ points, we would swap $x$ with each $x_i$ in $D$ (Eq.~\ref{eq: swapping}), computing Eq.~\ref{eq: score func} $s_i^x = s(D_{-i}; x_i)$ between $x_i$ and the other $N$ points $D_{-i} = \{x_1, x_2, ..., x_{i-1}, x,  x_{i+1}, x_N\}$, including $x$. We would also compute $s_{N+1}^x = s(D; x)$ and conclude $x \in C(\epsilon)$ when $s_{N+1}^x \leq s_{(k)}^x$ (Eq.~\ref{eq: full conformal set}).

In Theorem~\ref{thm: main_cp} we show that, in particular for the nearest neighbor score, we can modify the standard procedure to only compare $s_{N+1}^x$ against intra-dataset scores $\alpha_i = \min_{x' \in D, x' \neq x_i} d(x', x_i)$ i.e., we can replace $s_i^x$ by $\alpha_i$ and exclude $x$. Theorem~\ref{thm: main_cp} enables us to perform full conformal prediction in closed-form offline (without knowing $x$ and swapping), yielding an explicit $C(\epsilon)$ defined by precomputed threshold $r = \alpha_{(k)}$. We defer the proof for this and subsequent theorems to the Appendix.

\begin{theorem}[Closed-Form Conformal Prediction]
    \label{thm: main_cp}
    Let $D = \{x_1, ..., x_N\}$ be states drawn IID from any, possibly unknown, distribution $F$ and suppose we are given a miscoverage rate $\epsilon$. Let $\alpha_1, ..., \alpha_N$ be the intra-data nearest neighbor values
        \begin{equation}
            \alpha_i = \min_{x' \in D, x' \neq x_i} d(x', x_i)
        \end{equation}
    and let $k = k(\epsilon) \leq N$ (Eq.~\ref{eq: k_epsilon}).
    Using $r = \alpha_{(k)}$ to form 
    \begin{equation}
         C(\epsilon) = \{x \mid s(D; x) \leq r\}
    \end{equation} 
    satisfies for new states $x_{N+1} \sim F$
    \begin{equation}
        \Pr[x_{N+1} \in C(\epsilon)] \geq 1 - \epsilon.
    \end{equation}
\end{theorem}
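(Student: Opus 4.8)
The plan is to reduce the claim to the elementary conformal identity of Equation~\ref{eq: eps_marginal} by augmenting the dataset with the fresh sample $x_{N+1}\sim F$ and working with nearest-neighbor dissimilarities computed \emph{within the augmented set of $N+1$ points}. For $i=1,\dots,N+1$ define $\beta_i := \min_{j\in\{1,\dots,N+1\},\,j\neq i} d(x_j,x_i)$, the dissimilarity from $x_i$ to its nearest neighbor among the remaining $N$ points. First I would observe that the vector $(\beta_1,\dots,\beta_{N+1})$ is exchangeable: relabeling $x_1,\dots,x_{N+1}$ by a permutation $\sigma$ relabels the $\beta_i$ the same way, since each $\beta_i$ depends only on the unordered collection of ``other $N$ points'' paired with $x_i$, and the $x_i$ are IID hence exchangeable. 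Equation~\ref{eq: eps_marginal}, in the version with ``$\ge$'' that tolerates ties, then yields $\Pr[\beta_{N+1}\le\beta_{(k)}]\ge k/(N+1)\ge 1-\epsilon$, where $\beta_{(k)}$ is the $k$-th smallest of $\beta_1,\dots,\beta_N$ and $k=k(\epsilon)$ as in Equation~\ref{eq: k_epsilon}. (Equivalently, one can invoke the full-conformal guarantee of Equation~\ref{eq: x_marginal} directly, since the $\beta_i$ are exactly its leave-one-out scores evaluated at $x=x_{N+1}$.)

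Next I would connect the event $\{\beta_{N+1}\le\beta_{(k)}\}$ to membership in $C(\epsilon)$. On one side, $\beta_{N+1}=\min_{j\le N} d(x_j,x_{N+1})=s(D;x_{N+1})$, precisely the quantity thresholded in the definition of $C(\epsilon)$. On the other, for every $i\le N$, splitting the minimum over $\{1,\dots,N+1\}\setminus\{i\}$ into $D\setminus\{x_i\}$ and $\{x_{N+1}\}$ gives $\beta_i=\min(\alpha_i,\,d(x_{N+1},x_i))\le\alpha_i$: adjoining $x_{N+1}$ can only pull $x_i$'s nearest neighbor closer. Since $\beta_i\le\alpha_i$ for all $i\le N$, the same holds for their order statistics, so $\beta_{(k)}\le\alpha_{(k)}=r$; hence $\{\beta_{N+1}\le\beta_{(k)}\}\subseteq\{s(D;x_{N+1})\le r\}=\{x_{N+1}\in C(\epsilon)\}$, and chaining with the inequality from the previous paragraph gives $\Pr[x_{N+1}\in C(\epsilon)]\ge 1-\epsilon$. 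Nothing in this chain uses symmetry, positivity, or a triangle inequality for $d$, which is why the guarantee is indifferent to the chosen dissimilarity; when $d$ happens to be symmetric one checks the inclusion above is an equality, so $C(\epsilon)$ coincides with the standard full-conformal set of Equation~\ref{eq: full conformal set} and nothing is lost by precomputing the single threshold $r$.

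The crux is this second step: recognizing that the ``honest'' threshold $\beta_{(k)}$ --- which would require re-inserting the test point into the calibration set, as in the usual swapping of Equation~\ref{eq: swapping} --- is always upper bounded by the \emph{pre-computable} intra-data threshold $\alpha_{(k)}$, and that enlarging the threshold from $\beta_{(k)}$ to $\alpha_{(k)}$ only enlarges the acceptance region, so it can only help coverage. The remaining ingredients, permutation-equivariance of the $\beta$-vector and monotonicity of order statistics under coordinate-wise domination, are routine. The only edge cases to flag are the regime $k(\epsilon)>N$ (very small $\epsilon$), which the hypothesis $k(\epsilon)\le N$ excludes and where one would simply take $C(\epsilon)=\cur{X}$, and possible ties among the $\beta_i$ or $\alpha_i$, which is exactly why Equations~\ref{eq: marginal} and~\ref{eq: eps_marginal} are stated with ``$\ge$''.
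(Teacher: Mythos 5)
Your proof is correct and follows essentially the same route as the paper's: the leave-one-out scores you call $\beta_i$ are exactly the paper's swapped scores $s_i^x$, the key inequality $\beta_i = \min\{\alpha_i, d(x_{N+1},x_i)\} \le \alpha_i$ is the paper's Eq.~\ref{eq: s_to_alpha_bound}, and your order-statistic domination $\beta_{(k)} \le \alpha_{(k)}$ is an equivalent packaging of the paper's Lemma~\ref{lemma: order_stat_comp_2}. The remark that the inclusion becomes an equality for symmetric $d$ also matches the paper's treatment in Theorem~\ref{thm: overcoverage_sym}.
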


In our case, the previously observed error states in $D$ are IID samples from the policy error distribution $F$, so applying Theorem~\ref{thm: main_cp} guarantees that the resulting SUS region $C(\epsilon)$ constructed from $D$ satisfies Eq.~\ref{eq: problem_def}.

Theorem~\ref{thm: main_cp} applies regardless of $N$, but requires $k(\epsilon) \leq N$ i.e., $\epsilon \geq 1/(N+1)$. Choice of $N$ can also impact the conservatism of $C(\epsilon)$. With too few points nearest neighbor classification can fail to effectively distinguish safe data (i.e., $s(D; x)$ could be similar for both new safe and unsafe points). 

Because $s(D; x) = \min_{x' \in D} d(x', x)$, we can equivalently describe $C(\epsilon)$ as a union of sublevel sets of $d$ about each datum $x_i$. We formalize this result in Theorem~\ref{thm: union_cp}. 

\begin{theorem}[Geometric Coverage Set]
    \label{thm: union_cp}
    We may equivalently describe $C(\epsilon)$ from Theorem~\ref{thm: main_cp} (using the cutoff value $r$) as
    \begin{subequations}
        \begin{gather}
            C(\epsilon) = \cup_{i=1}^N C_i \\
            C_i = \{x \mid d(x_i, x) \leq r\}.
        \end{gather}
    \end{subequations}
\end{theorem}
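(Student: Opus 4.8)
The plan is to unfold the definition of the nearest neighbor score function and observe that the coverage set of Theorem~\ref{thm: main_cp} is already a min-over-data sublevel set, which is the same thing as a union of per-datum sublevel sets. Concretely, I would start from the identity $s(D; x) = \min_{x' \in D} d(x', x)$ (Eq.~\ref{eq: score func}), fix the threshold $r = \alpha_{(k)}$ produced by Theorem~\ref{thm: main_cp}, and then establish the set equality $\{x \mid s(D;x) \le r\} = \cup_{i=1}^N \{x \mid d(x_i, x) \le r\}$ by a two-way inclusion argument.

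For the forward inclusion, suppose $x$ satisfies $s(D; x) \le r$. Since $D = \{x_1,\dots,x_N\}$ is finite, the minimum defining $s(D;x)$ is attained at some index $i^\star$, so $d(x_{i^\star}, x) = \min_{x' \in D} d(x', x) = s(D;x) \le r$, hence $x \in C_{i^\star} \subseteq \cup_{i=1}^N C_i$. For the reverse inclusion, suppose $x \in C_i$ for some $i$, i.e., $d(x_i, x) \le r$. Then $s(D; x) = \min_{x' \in D} d(x', x) \le d(x_i, x) \le r$, so $x \in C(\epsilon)$. Combining the two inclusions gives the claimed equality, and substituting back the definitions of $C(\epsilon)$ and $C_i$ completes the argument.

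Since the statement only concerns a rewriting of the set $C(\epsilon)$ and makes no new probabilistic claim, no further work is needed — the coverage guarantee $\Pr[x_{N+1} \in C(\epsilon)] \ge 1-\epsilon$ is inherited verbatim from Theorem~\ref{thm: main_cp}. The only thing to be careful about is that the minimum in $s(D;x)$ is genuinely attained (so that the "there exists a nearest $x_i$" step is valid); this is immediate because $|D| = N < \infty$, and no continuity or topological assumption on $d$ is required. I expect essentially no obstacle here: the result is a definitional restatement, and its value is purely expository/geometric, making explicit that the SUS region is a union of $d$-balls of common radius $r$ centered at the error states.
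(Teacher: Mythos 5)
Your proposal is correct and follows essentially the same route as the paper: both unfold $s(D;x)=\min_{x'\in D} d(x',x)$ and note that the sublevel condition $\min_i d(x_i,x)\le r$ is equivalent to $\exists i:\ d(x_i,x)\le r$, i.e., membership in the union of the $C_i$. Your version just spells out the two inclusions and the attainment of the minimum over the finite set $D$, which the paper leaves implicit.
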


With this representation, we can geometrically describe $C(\epsilon)$ for many standard choices of $d$. With Euclidean distance, $C(\epsilon)$ is a union of closed balls $C_i$, of shared radius $r$, each centered at one of the states $x_i$. Figure~\ref{fig: geom_fig} (top) visualizes the $C(\epsilon)$ geometry with Euclidean distance.

\begin{figure}
     \centering
     \includegraphics[width=0.9\linewidth]{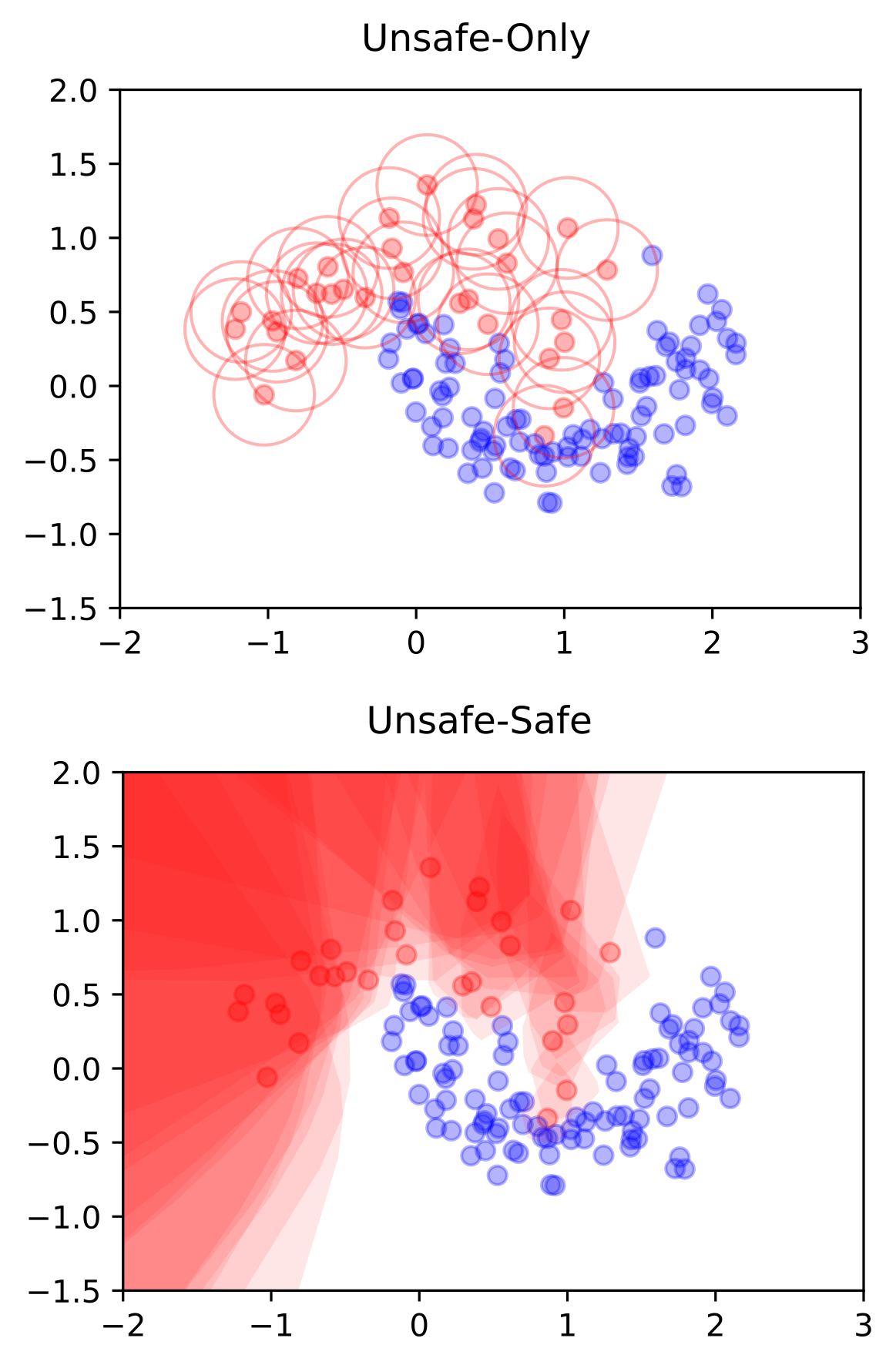}
    \caption{Visualizing the geometry of the unsafe-only (Eq.~\ref{eq: score func}) and unsafe-safe (Eq.~\ref{eq: asym_score_func}) conformal covering set $C(\epsilon)$. We sample $N = 30$ unsafe points (shown in red) and $M = 100$ safe points (shown in blue) and request miscoverage of $\epsilon = 0.1$. In the top subfigure, we plot $C(\epsilon)$ for the unsafe-only case using Euclidean distance. In the bottom subfigure, we use the difference of squared Euclidean distance for the unsafe-safe case (Eq.~\ref{eq: two_sample_special}) resulting in a union of polyhedra (see subsection \ref{subsec: two_sample}).}
    \label{fig: geom_fig}
\end{figure}

While Theorem \ref{thm: main_cp} ensures $C(\epsilon)$ achieves at least $1-\epsilon$ coverage, we can provide a theoretical upper bound on the coverage, distinguishing between the case of a symmetric versus asymmetric choice of $d$. The upper bound holds when the probability of having distance ties is $0$. Informally, this holds in the general setting of a continuous state space and for non-trivial $d$ (e.g., avoiding $d(x,y) = 0$).

\begin{theorem}[Overcoverage Bound: Symmetric Case]
    \label{thm: overcoverage_sym}
    Consider a symmetric pairwise function $d$ which satisfies for IID samples $x_1, x_2, ..., x_N, x_{N+1} \sim F$, the associated pairwise distances $\{d(x_i, x_j)\}_{i < j}$ feature no repeated values with probability 1.
    Then, $C(\epsilon)$ as constructed in Theorem \ref{thm: main_cp} satisfies:
    \begin{equation}
        \Pr[x_{N+1} \in C(\epsilon)] \leq 1 - \epsilon + 2/(N+1).
    \end{equation}
\end{theorem}

In establishing Theorem~\ref{thm: overcoverage_sym}, we build on similar logic (see Lemma~\ref{lemma: cp_ties}) as existing upper bounds for conformal prediction \cite{lei2018distribution, angelopoulos2022gentle}. However, the proof requires special care regarding the possibility of score ties. With symmetric $d$, ties occur when two points are mutual nearest neighbors, yielding overcoverage of $2/(N+1)$, instead of the typical $1/(N+1)$ without ties \cite{lei2018distribution}. As in Theorem~\ref{thm: main_cp}, we must reason about the effect of replacing the scores $s_i^x$, which depend on query point $x$, with their intra-dataset counterparts $\alpha_i$ (which exclude $x$). With symmetric $d$, this incurs no additional overcoverage. For asymmetric $d$ (Theorem~\ref{thm: overcoverage_asym}), this increases the bound by $1/N$.

Theorem~\ref{thm: main_cp} ensures that, in expectation, $C(\epsilon)$ covers at least $1-\epsilon$ of future unsafe states, regardless of $N$, while Theorem~\ref{thm: overcoverage_sym} guarantees a coverage at most $1-\epsilon+2/(N+1)$. Together, these results establish tight finite-sample generalization bounds on the SUS region’s test classification of unsafe states, in expectation over the collected training data $D$. However, the realized coverage i.e., $\Pr[x_{N+1} \in C(\epsilon)]$ for $x_{N+1} \sim F$ fluctuates across repeated draws of the data $D$. In Figure~\ref{fig: hist_cov} (top) we plot a histogram of $C(\epsilon)$ coverage across repeated $D$ draws. The average coverage indeed lies between $[1-\epsilon, 1-\epsilon + 2/(N+1)]$, supporting our theoretical analysis. As an assessment of the learned SUS region's stability, we observe that the realized coverage across repetitions generally remains close to $1-\epsilon$ (i.e., without fluctuating wildly). We repeated the procedure with larger $N = 60, 120$ and overlay silhouettes of the resulting histograms. With increasing $N$, the histogram shifts towards $1 - \epsilon$. Although not directly applicable, related analysis for split conformal prediction with IID scores~\cite{hulsman2022distributionfreefinitesampleguaranteessplit, angelopoulos2024theoretical} has shown that coverage follows a Beta distribution which asymptotically concentrates around $1-\epsilon$.

Theorem~\ref{thm: overcoverage_sym} assumed symmetric $d$. Theorem~\ref{thm: overcoverage_asym} provides a similar overcoverage bound for asymmetric $d$. This result applies in the unsafe-safe nearest neighbor approach (subsection~\ref{subsec: two_sample}) and Figure~\ref{fig: hist_cov} (bottom) illustrates the bound's validity.

\begin{theorem}[Overcoverage Bound: Asymmetric Case]
    \label{thm: overcoverage_asym}
    Consider an asymmetric pairwise function $d$ such that for IID samples $x_1, x_2, ..., x_N, x_{N+1} \sim F$, the associated pairwise distances $\{d(x_i, x_j)\}_{i \neq j}$ feature no repeated values with probability 1.
    Then, $C(\epsilon)$ as constructed in Theorem~\ref{thm: main_cp} satisfies:
    \begin{equation}
        \Pr[x_{N+1} \in C(\epsilon)] \leq 1 - \epsilon + 1/(N+1) + 1/N.
    \end{equation}
\end{theorem}


\begin{figure}
     \centering
     \includegraphics[width=0.9\linewidth]{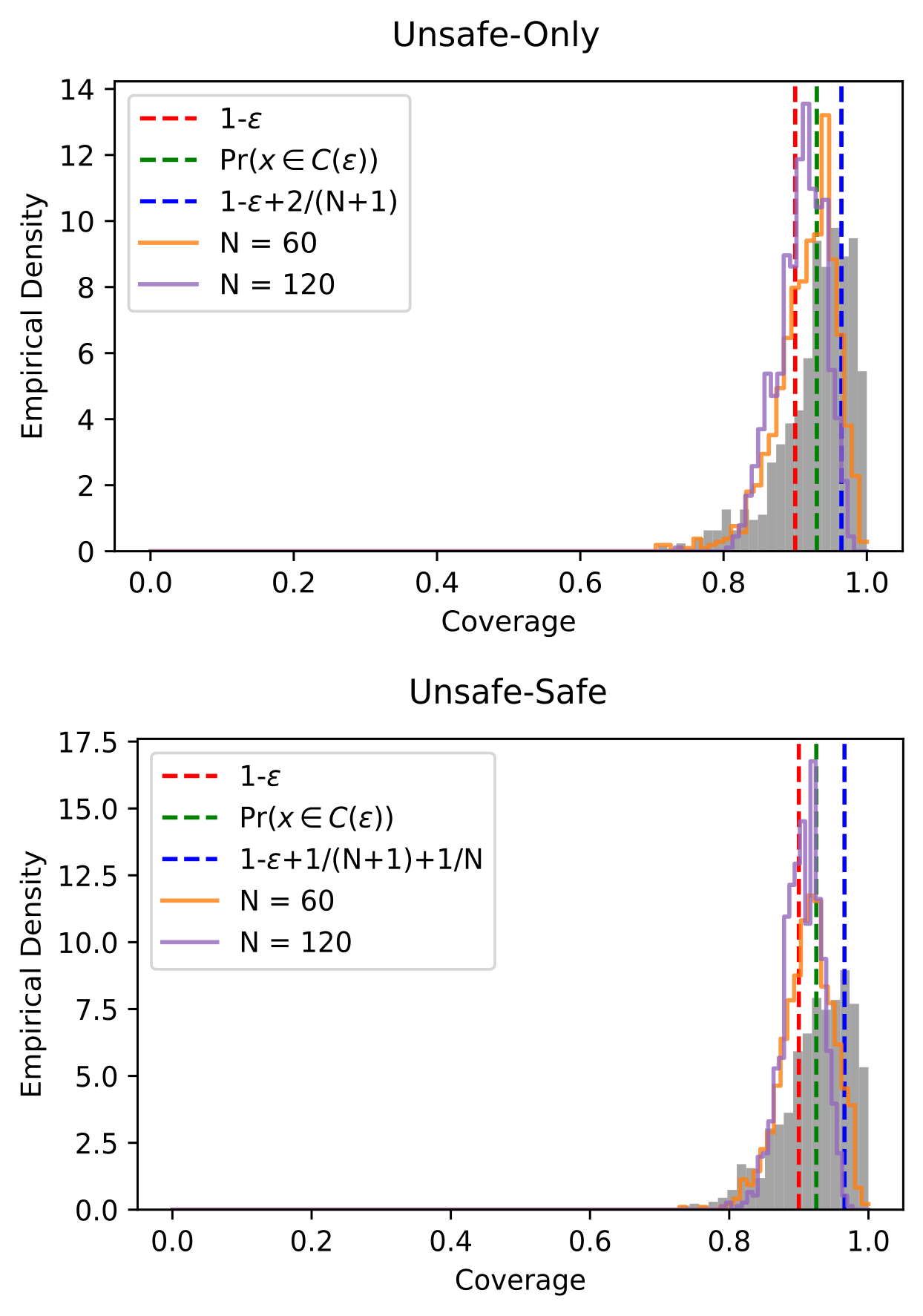}
    \caption{Empirical verification of the theoretical coverage guarantees. For both unsafe-only and unsafe-safe cases, we construct $C(\epsilon)$ $1000$ times with fresh data (using the $N = 30$ setup of Figure~\ref{fig: geom_fig}) and evaluate coverage, i.e., the probability a new unsafe state is contained in $C(\epsilon)$, using $1000$ unsafe test points. We plot a histogram (in gray) of the coverage over these repetitions and plot the the average coverage as a green dashed line. The average coverage lies between the theoretical lower (Theorem~\ref{thm: main_cp}) and upper bounds (Theorem~\ref{thm: overcoverage_sym} for the unsafe-only case and Theorem~\ref{thm: overcoverage_asym} for the unsafe-safe case), shown respectively as red and blue dashed lines. We repeat the experiment for $N = 60, 120$ and show silhouettes of the resulting histograms (in orange/purple). Increasing $N$ shifts the histograms towards $1 - \epsilon$.}
    \label{fig: hist_cov}
\end{figure}

\subsection{Unsafe-Safe Nearest Neighbor Extension}
\label{subsec: two_sample}

Until now we constructed $C(\epsilon)$ only using the error states from unsafe trajectories $D_F = \{x_1, ..., x_N\}$ (previously referred to as $D$), sampled from the unknown distribution $F$ over policy errors. For $C(\epsilon)$ to better distinguish safe from unsafe data, we can also use data from safe trajectories. Let $D_G = \{y_1, ..., y_M\}$ be points generated under the safe distribution $G$ i.e., states not in $\cur{A}$ sampled from within the stochastically-generated safe trajectories induced under $\cur{D}_{\tau}$. By using this safe data, $C(\epsilon)$ will still contain $1-\epsilon$ of future unsafe points drawn from $F$ but hopefully fewer safe points drawn from $G$. To incorporate the safe data, we difference the nearest neighbor distances for the unsafe and safe data,
\begin{equation}
\label{eq: asym_score_func}
s(D_F; x) = \min_{x' \in D_F} d(x', x) - \min_{y' \in D_G} d(y', x).
\end{equation}
This margin-style score function provides an intuitive approach to discriminating the unsafe from safe data and can perform surprisingly well (see Section~\ref{sec: warning_exp}) in the setting we face of limited and imbalanced data. It can also be motivated via Gaussian kernel density estimation and likelihood ratio testing (see the Appendix).

However, Eq.~\ref{eq: asym_score_func} seemingly no longer follows the nearest neighbor form (Eq.~\ref{eq: score func}) key to obtaining the SUS region $C(\epsilon)$ in closed form (Theorem~\ref{thm: main_cp}), via a modified full conformal prediction procedure. In fact, the unsafe-safe score can be reduced to nearest neighbor form with an appropriately modified dissimilarity measure. To do so, we exploit that $d$ may be asymmetric in the nearest neighbor score: 

\begin{subequations}
\begin{align}
    s(D_F; x) = \min_{x' \in D_F} d(x', x) - \min_{y' \in D_G} d(y', x) = \\
        \min_{x' \in D_F} [d(x', x) - \min_{y' \in D_G} d(y', x)] = \\
        \min_{x' \in D_F} [\Tilde{d}(x', x)].
\end{align}
\end{subequations}
We have thus recast this as a nearest neighbor score with an asymmetric measure $\Tilde{d}$:
\begin{equation}
    \label{eq: asym_dist}
    \Tilde{d}(x', x) = d(x', x) - \min_{y' \in D_G} d(y', x).
\end{equation}
We may now again rely on Theorem~\ref{thm: main_cp} to obtain $C(\epsilon)$ in closed-form, guaranteeing miscoverage at most $\epsilon$ (Eq.~\ref{eq: problem_def}) by calibrating using the samples from $D_F$, treating the unsafe-safe score Eq.~\ref{eq: asym_score_func} as simply using a more complicated dissimilarity measure $\Tilde{d}$ in Eq.~\ref{eq: score func}.
For the unsafe-safe score, the SUS region $C(\epsilon)$ cutoff $r$ is now obtained by computing 
\begin{subequations}
\begin{gather}
\alpha_i = \min_{x' \in D_F, x' \neq x_i} \Tilde{d}(x', x_i) = \\ \label{two_sided_alpha}
\min_{x' \in D_F, x' \neq x_i} d(x', x_i) - \min_{y' \in D_G} d(y', x_i)
\end{gather}
\end{subequations}
for $i=1,2,...,N$ and setting $r = \alpha_{(k)}$.

A notable special case is squared Euclidean distance: 
\begin{equation}
\label{eq: two_sample_special}
s(D_F; x) = \min_{x' \in D_F} ||x' - x||^2 - \min_{y' \in D_G} ||y' - y||^2.
\end{equation} In this case, the resulting conformal set $s(D_F; x) \leq r$ again has convenient geometry. Applying Theorem~\ref{thm: union_cp} to the asymmetric distance Eq.~\ref{eq: asym_dist}, we obtain $C(\epsilon) = \cup_{i=1}^N C_i$ where 
\begin{subequations}
\begin{gather}
    C_i = \{x \mid \Tilde{d}(x_i, x) \leq r\} = \\
    \{x \mid d(x_i, x) - \min_{y' \in D_G} d(y', x) \leq r\} = \\
    \{x \mid d(x_i, x) - d(y_j, x) \leq r \ \forall j=1:M\}.
\end{gather}
\end{subequations}
With squared Euclidean distance, the constraints
\begin{equation}
    ||x_i - x||^2 - ||y_j - x||^2 \leq r \ \forall j=1:M
\end{equation}
reduce to half-spaces $a_{ij}^T x \leq b_{ij}$ with $a_{ij} = y_j - x_i$, $b_{ij} = \frac{r + ||y_j||^2 - ||x_i||^2}{2}$. Thus, each
$C_i$ is a polyhedron and $C(\epsilon)$ is a union of $N$ polyhedra each with $M$ constraints. 


Fig.~\ref{fig: geom_fig} (bottom) shows the geometry of $C(\epsilon)$ constructed using the unsafe-safe score (Eq.~\ref{eq: asym_score_func}). Qualitatively $C(\epsilon)$ constructed with the unsafe-safe score intrudes less into the region of safe points. In our later warning system experiments (Section~\ref{sec: warning_exp}) we show this results in fewer false alarms.

\subsection{Probabilistic Interpretation}
\label{subsec: p_val}

Beyond hard classification of whether a state is considered unsafe, we can use the nearest neighbor conformal approach to provide a probabilistic interpretation of safety, using the conformal prediction $p$-value \cite{shafer2007tutorial, angelopoulos2024theoretical}.

\begin{definition}[Conformal $p$-Value]
For state $x$, the $p$-value is the smallest $\epsilon$ for which $x \notin C(\epsilon)$, denoted $\epsilon^*$. 
\end{definition}

This matches the standard definition of $p$-value as the smallest size $\epsilon$ under which a hypothesis test rejects. In this setting, we default to the null hypothesis that $x \sim F$ (i.e., $x$ is like $\{x_i\}_{i=1}^N$) and reject when $x \notin C(\epsilon)$. Intuitively, smaller $\epsilon^*$ reflects stronger evidence that $x_{N+1}$ is unlike $\{x_i\}_{i=1}^N$. Hence, for $\{x_i\}_{i=1}^N$ unsafe a larger $\epsilon^* \in [0,1]$ indicates $x_{N+1}$ is seemingly less safe. For a given cutoff $\epsilon$ and state $x_{N+1}$ with $p$-value $\epsilon^*$, by definition $x_{N+1} \in C(\epsilon)$ if and only if $\epsilon^* \geq \epsilon$. Thus, the $p$-values can be viewed as describing the increasing growth of the suspected unsafe region $C(\epsilon)$ as $\epsilon$ is decreased. Figure~\ref{fig: p_fig} visualizes how the $p$-value varies throughout space when using the unsafe-only and unsafe-safe nearest neighbor approaches. Qualitatively, lower $p$-values are attained in the safe region using the unsafe-safe approach, suggesting it can better distinguish safe from unsafe data.


\begin{figure}
     \centering
     \includegraphics[width=0.9\linewidth]{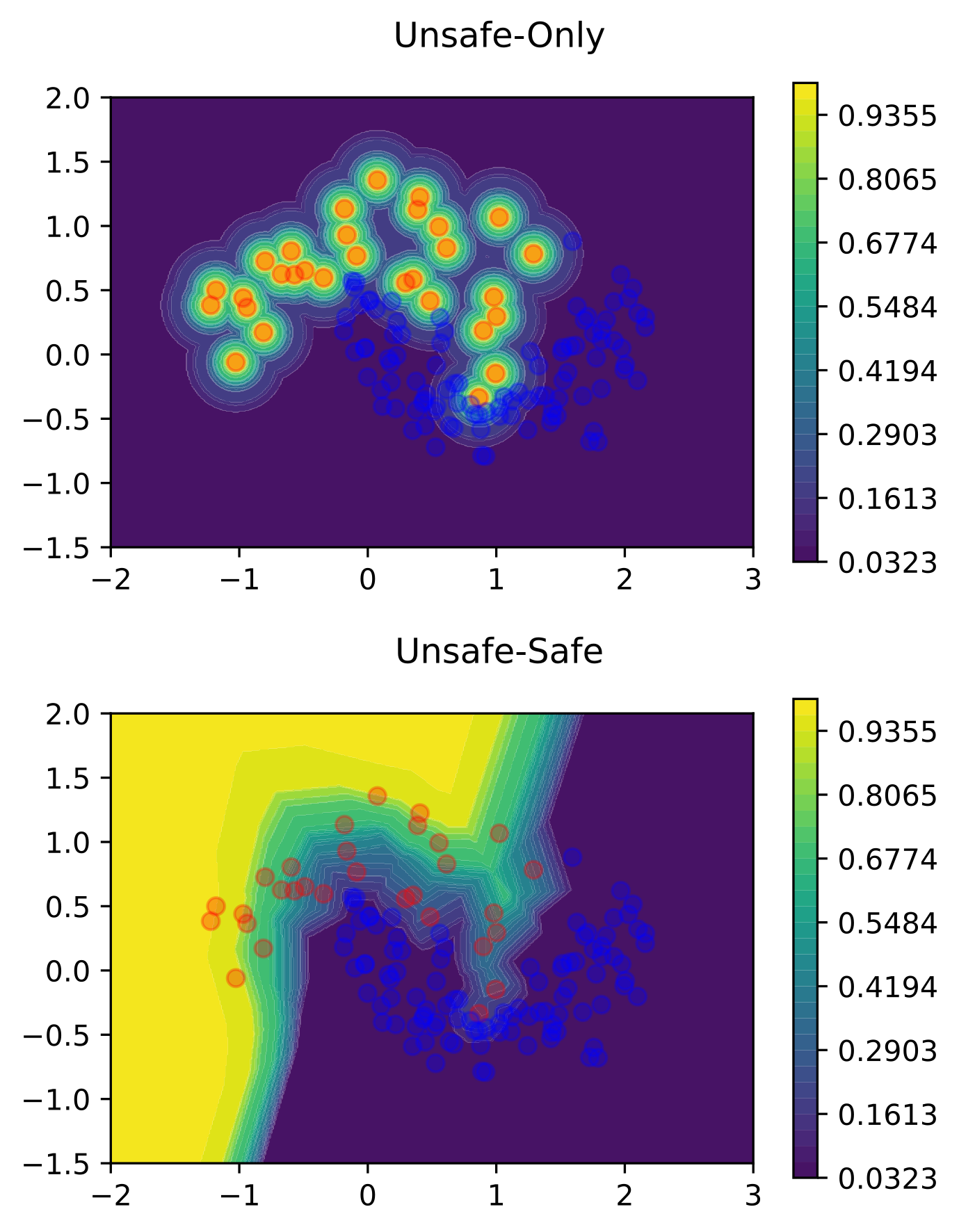}
    \caption{Visualizing the $p$-value associated with $C(\epsilon)$. Using the same setup as in Fig.~\ref{fig: geom_fig}, we vary $\epsilon$ (from $1/(N+1)$ to $1$ in increments of $1/(N+1)$, where $N = 30$) to visualize the $p$-value level sets for the unsafe-only (top) and unsafe-safe (bottom) cases. While both cases provide valid coverage of at least $1-\epsilon$, based on the $p$-value visualizations we qualitatively observe that the unsafe-safe approach better distinguishes the unsafe and safe samples.}
    \label{fig: p_fig}
\end{figure}

\subsection{Combining with Representation Learning}
\label{subsec: representations}

The conformal guarantee of $1-\epsilon$ coverage holds for any $d$, but better choices can yield fewer false alarms i.e., declaring a state $x$ unsafe when it is actually safe. We can therefore learn more effective $d$ by transforming the original data $\phi(x)$ to better distinguish safe and unsafe data. The conformal nearest neighbor score function post-transformation is
\begin{equation}
\label{eq: transformed_score}
    s(D; x) = \min_{x'} d(\phi(x'), \phi(x)).
\end{equation}
which may be viewed as simply an alternative choice of $d$. As long as $\phi$ is trained without using the calibration data i.e., the error states $D$, coverage remains valid.

Effective strategies for learning representation $\phi(x)$ include classical methods: principle component analysis (PCA), its kernelized variant~\cite{turk1991face, kernelface}, and large margin nearest neighbor (LMNN)~\cite{lmnn} and modern approaches: DINO/DINOv2~\cite{dino, dinov2} and CLIP~\cite{clip}. Even in the big-data regime, these recent methods have shown good performance for images by combining representation learning and nearest neighbor classification. Representation learning has also been combined effectively with conformal prediction; \cite{ghosh2023improving} used learned representations of neural network classifiers in a localized conformal procedure \cite{guan2023localized} to produce provably tighter confidence sets. In Section~\ref{sec: warning_exp}, we use representation learning with our nearest neighbor conformal approach to scale to image data and detect unsafe behavior for a visuomotor quadcopter policy.




\section{Warning System}
\label{sec: warning_sys}

Using Section~\ref{sec: nncp}, we augment the original policy $\pi$ with a warning system whose miss rate, the probability of failing to alert when the human would, is at most a user-specified $\epsilon$.


Offline, using $N$ error states $D$ obtained when demonstrating $\pi$, we compute the intra-dataset nearest neighbor distances $\{\alpha_i\}_{i=1}^N$ (Eq.~\ref{two_sided_alpha}), $k(\epsilon) = \lceil(1-\epsilon)(N+1)\rceil$, and $r = \alpha_{(k)}$. During policy execution, for each state $x$, we compute the nearest neighbor score $s(D; x)$ and alert if $s(D; x) \leq r$ i.e., $x \in C(\epsilon)$. Alerts may be used to abort policy execution (e.g., transition a quadcopter to hover), signal for human takeover, or trigger fallback to historical safe data (Section~\ref{sec: policy_mod}).


In Section~\ref{sec: nncp}, we used $C(\epsilon)$ to reason about safety of an individual state. However, at runtime we repeatedly query the warning system at each state in the ongoing trajectory. Although successive trajectory states are not IID, the error states in $D$ are independent, and if the current state is an error state it will be IID with $D$. Consequently, the single-state guarantee lifts to a multi-timestep trajectory guarantee:


\begin{theorem}[Warning Miss Rate]
\label{thm: warning_miss}
Let $\tau$ be a new trajectory generated by executing $\pi$. 
\begin{equation}
\label{eq: miss_rate_informal}
\Pr(\mbox{$\tau$ yields no alert} \mid \tau \mbox{ reaches $\cur{A}$}) \leq \epsilon.
\end{equation}
\end{theorem}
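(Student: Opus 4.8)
The plan is to reduce this multi-query statement to the single-query coverage bound of Theorem~\ref{thm: main_cp}, using the fact that the warning system only needs to fire \emph{once} and that it suffices for it to fire at the terminal state $\tau(-1)$, which is exchangeable with the calibration error states. First I would fix the probability space: the demonstration phase produces error states $E = \{e_1, \dots, e_N\}$ which, by the IID assumption on demonstration rollouts, are IID draws from the error-state distribution $F$ (the law of $\tau(-1)$ conditional on a rollout being unsafe), and the cutoff $r = \alpha_{(k)}$ together with the SUS region $C(\epsilon)$ are deterministic functions of $E$. The new trajectory $\tau$ comes from an independent rollout of $\pi$, so $\tau$ is independent of $E$ and hence of $C(\epsilon)$.

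Next I would condition on the event $\{\tau \text{ reaches } \cur{A}\}$. On this event $\tau(-1) \in \cur{A}$ is, by the very definition of $F$, an additional draw from $F$ that remains independent of $E$; write $e_{N+1} := \tau(-1)$. Thus $e_1, \dots, e_N, e_{N+1}$ are IID from $F$, exactly the hypotheses of Theorem~\ref{thm: main_cp} (assuming $N$ is large enough that $k(\epsilon) \le N$, as elsewhere in the paper). The key observation is then purely logical: the warning system yields no alert on $\tau$ if and only if $s(E; x) > r$ for every state $x$ visited along $\tau$; since $e_{N+1} = \tau(-1)$ is one such state, ``no alert'' implies $s(E; e_{N+1}) > r$, i.e. $e_{N+1} \notin C(\epsilon)$. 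Hence the event $\{\tau \text{ yields no alert}\}$ is contained in $\{e_{N+1} \notin C(\epsilon)\}$.

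Finally I would take probabilities conditional on $\{\tau \text{ reaches } \cur{A}\}$ and chain the inclusion with Theorem~\ref{thm: main_cp}: $\Pr(\tau \text{ yields no alert} \mid \tau \text{ reaches } \cur{A}) \le \Pr(e_{N+1} \notin C(\epsilon) \mid \tau \text{ reaches } \cur{A}) = \Pr(x_{N+1} \notin C(\epsilon)) \le \epsilon$, where $x_{N+1} \sim F$ is independent of $E$ and the last step is the complement of the coverage guarantee. This yields Eq.~\eqref{eq: miss_rate_informal}.

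I expect the main (and essentially only) obstacle to be the conditioning argument: one must check that conditioning on the rollout being unsafe does not disturb the independence and identical-distribution structure needed to invoke Theorem~\ref{thm: main_cp}. This goes through precisely because we extract the \emph{first} unsafe state, so the conditional law of $\tau(-1)$ given unsafety is $F$ by construction, matching the calibration distribution — which is exactly what lets us avoid a union bound over the (variable, possibly large) trajectory length that would otherwise inflate the miss rate.
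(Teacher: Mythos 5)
Your proposal is correct and follows essentially the same route as the paper's proof: reduce ``no alert anywhere along $\tau$'' to ``no alert at the terminal state $\tau(-1)$'' by monotonicity of events, note that conditional on reaching $\cur{A}$ the terminal state is an exchangeable/IID draw from the error distribution used for calibration, and invoke the coverage guarantee of Theorem~\ref{thm: main_cp}. Your writeup is somewhat more explicit than the paper's about why conditioning on unsafety preserves the IID structure, but the argument is the same.
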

\begin{proof}
Since trajectories are terminated upon reaching $\cur{A}$, $\tau$ reaches $\cur{A}$ if and only if $\tau(-1) \in \cur{A}$. 
$\tau$ yields no alert when $\forall i: \tau(i) \notin C(\epsilon)$. Thus, the left side of Eq.~\ref{eq: miss_rate_informal} is equivalent to 
\begin{align}
\Pr(\tau(i) \notin C(\epsilon) \ \forall i \mid \tau(-1) \in \cur{A}).
\end{align}

No alert at all times implies no alert at the final time hence
\begin{align}
    \Pr(\tau(i) \notin C(\epsilon) \ \forall i \mid \tau(-1) \in \cur{A}) \leq \\ \nonumber \Pr(\tau(-1) \notin C(\epsilon) \mid \tau(-1) \in \cur{A})
\end{align}

By the conformal guarantee, 
\begin{equation}
    \Pr(\tau(-1) \notin C(\epsilon) \mid \tau(-1) \in \cur{A}) \leq \epsilon.
\end{equation}
\end{proof}

As a corollary, we can bound the overall probability $\pi$ becomes unsafe without warning.

\begin{corollary}
\label{cor: warning_error_rate}
With $\beta \coloneqq \Pr(\tau \mbox{ reaches $\cur{A}$})$,
\begin{equation}
\Pr(\tau \mbox{ reaches $\cur{A}$ without warning}) \leq \epsilon \beta.
\end{equation}
\end{corollary}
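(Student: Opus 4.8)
\textbf{Proof plan for Corollary~\ref{cor: warning_error_rate}.} The plan is to simply decompose the joint event into the conditional miss event and the failure event, and then invoke Theorem~\ref{thm: warning_miss}. First I would observe that the event ``$\tau$ reaches $\cur{A}$ without a warning'' is exactly the intersection of the two events ``$\tau$ reaches $\cur{A}$'' and ``$\tau$ yields no alert.'' Next I would apply the multiplication rule of probability to write
\begin{align*}
\Pr(\tau \mbox{ reaches } \cur{A} \mbox{ without a warning}) = \Pr(\tau \mbox{ yields no alert} \mid \tau \mbox{ reaches } \cur{A}) \cdot \Pr(\tau \mbox{ reaches } \cur{A}).
\end{align*}
Then I would substitute the bound $\Pr(\tau \mbox{ yields no alert} \mid \tau \mbox{ reaches } \cur{A}) \leq \epsilon$ from Theorem~\ref{thm: warning_miss} and the definition $\beta = \Pr(\tau \mbox{ reaches } \cur{A})$, yielding the claimed bound $\epsilon \beta$.

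This is essentially a one-line consequence of the previous theorem, so there is no substantive obstacle; the only care needed is to confirm that the two events really do intersect to give the target event (they do, by the definitions of ``reaches $\cur{A}$'' and ``no alert'' used in the proof of Theorem~\ref{thm: warning_miss}) and that the multiplication rule applies even when $\beta = 0$ (in which case both sides are zero and the inequality is trivial). I would note in passing that the bound is useful precisely because it shows the warning system reduces the unflagged-failure probability by a factor of $\epsilon$ relative to the raw failure probability $\beta$.
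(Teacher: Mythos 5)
Your proposal is correct and matches the paper's proof: both apply the multiplication rule to factor the joint event into the conditional miss probability times $\beta$, then invoke Theorem~\ref{thm: warning_miss}. Your extra remarks on the event intersection and the $\beta = 0$ edge case are fine but not needed.
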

\begin{proof}
The left side equals 
\begin{gather}
\nonumber \Pr(\mbox{$\tau$ yields no alert} \mid \tau \mbox{ reaches $\cur{A}$}) \Pr(\tau \mbox{ reaches $\cur{A}$}) \leq \epsilon \beta
\end{gather}
\end{proof}

Using Corollary~\ref{cor: warning_error_rate} we can decide what $\epsilon$ to select. To enforce $\Pr(\tau \mbox{ reaches $\cur{A}$ without warning}) \leq \eta$, take $\epsilon \leq \eta / \hat{\beta}$ where $\hat{\beta} = N/P \approx \beta$ is the fraction of original unsafe rollouts (or a more conservative error rate bound~\cite{vincent2024guaranteesrobotperformanceusing}). Smaller $\epsilon$ tightens the miss rate, but can increase the number of false alarms (i.e., alerting during a safe trajectory).






\section{Warning System Experiments}
\label{sec: warning_exp}

We evaluate the warning system in two simulated quadcopter settings: (i) an MPC policy with unknown obstacles and (ii) a visuomotor policy flying through a gate. We use this MPC also in our hardware experiments (Section~\ref{sec: policy_mod_exp}), and the visuomotor policy uses photorealistic images rendered from a Gaussian Splat model of the room used in hardware experiments.


\subsection{Model Predictive Control Example}
\label{subsec: warning_MPC_sim}
We initialize the quadcopter at random starting states and use an MPC policy to steer to a given goal state. The 9-dimensional quadcopter state includes position $p = (p_x, p_y, p_z)$, Euler angles $\Theta = (\phi,\theta,\psi)$, and linear velocity $v = (\dot{p}_x,\dot{p}_y,\dot{p}_z)$. The policy outputs a 4-dimensional control action consisting of total thrust $F$ and body angular rates $\Omega$: $u = (F,\omega_x, \omega_y,\omega_z)$. At each step, MPC optimizes an open-loop plan $H$ steps into the future and applies the first resulting action $u_0$. We optimize using sequential convex programming (SCP), repeatedly solving a quadratic program (QP) obtained via affine approximation of the Euler-discretized, nonlinear dynamics about the previous intermediate solution $(\overline{x}, \overline{u})$:

\begin{equation}
\label{eq: traj_opt}
\scalebox{0.9}{$
\begin{gathered}
    \min_{u_{0:H-1}, x_{1:H}} \ \sum_{t=0}^{H-1} ||x_t - x_g||^2_{Q_t} + ||u_t - u_g||^2_{R_t} + ||x_H - x_g||^2_{Q_H} \\
    \text{s.t.} \ x_{t+1} = A_t x_t + B_t u_t + C_t, \quad \forall t = 0:H-1, \\
    F_u(t) u_t \leq g_u(t), \quad \forall t = 0:H-1, \\
    F_x(t) x_t \leq g_x(t), \quad \forall t = 1:H.
\end{gathered}
$}
\end{equation}

$x_0$ is the current state, $x_g$ the goal state, $u_g$ a reference action, $Q_t, R_t$ are positive-definite cost matrices, $A_t, B_t, C_t$ are found via affine approximation of $x_{t+1} = f(x_t, u_t)$ about $(\overline{x}_t, \overline{u}_t)$, $F_u(t), g_u(t)$, $F_x(t), g_x(t)$ specify control and state constraints.

As an illustrative example with a simple definition of safety, we add position obstacles unknown to the MPC policy and terminate trajectories upon collision. Figure~\ref{fig: warning_demo} shows example results of fitting the warning system. We treat position obstacles as unknown to MPC and terminate on collision. Figure~\ref{fig: warning_demo} shows (left) labeled demonstration trajectories (with $N = 25$ unsafe),~\footnote{We randomly initialize $x_0$ in a ring outside the four obstacles, set the goal state to hover at the room center, and plan with MPC horizon $H = 20$.} (middle) a 3-dimensional view of the unsafe-safe SUS region $C(\epsilon)$ ($\epsilon = 0.2$) which actually exists in the full 9-dimensional state space where it is learned,~\footnote{To visualize in 3 dimensions, we visualize a slice of each polytope $C_i$ by assuming $x$ matches the associated error state $x_i$ in all but position.} and (right) $50$ test rollouts with the warning system. As the original policy is often unsafe (crashing in $52\%$ of demonstrations), the warning system is necessarily often triggered, and no collisions result.

\begin{figure*}
     \centering
     \includegraphics[width=\textwidth]{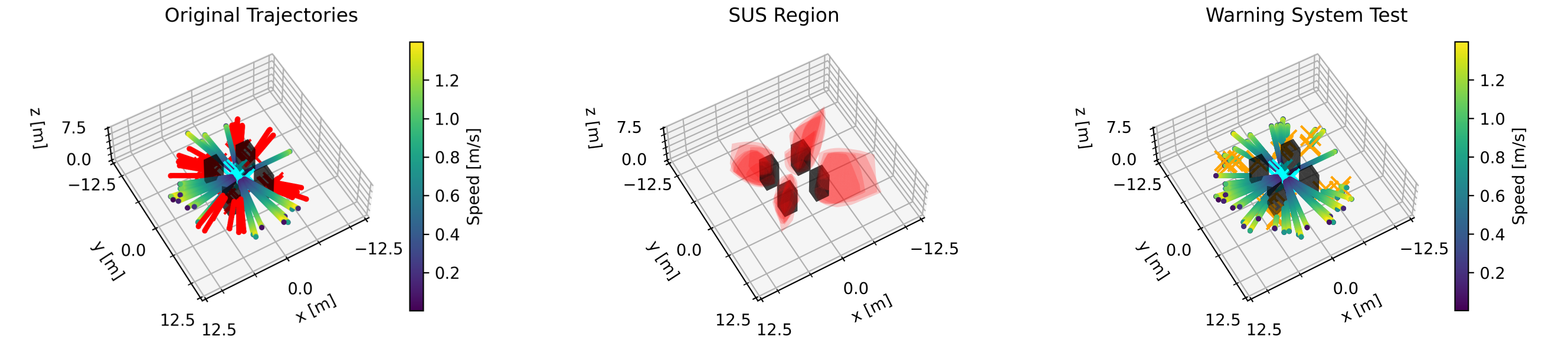}
    \caption{Quadcopter MPC warning system for learned obstacle avoidance using $C(\epsilon)$ ($\epsilon = 0.2$, $N = 25$ unsafe samples). The left subfigure shows original trajectories, of which $52\%$ collide (marked in red). The middle subfigure shows a 3D visualization of the SUS region $C(\epsilon)$ which actually exists in the 9D state space. The right subfigure tests the warning system for 50 trajectories and no collisions occur. States triggering alert are marked with an orange `x.'}
    \label{fig: warning_demo}
\end{figure*}

To systematically assess our approach, we fit the warning system over $20$ repetitions, each time collecting $N = 25$ unsafe trajectories and a variable number of associated safe trajectories, and evaluate using a test set of $500$ trajectories. For fair comparison, all methods use the same data. 


\begin{figure}
    \centering
     \includegraphics[width=0.9\linewidth]{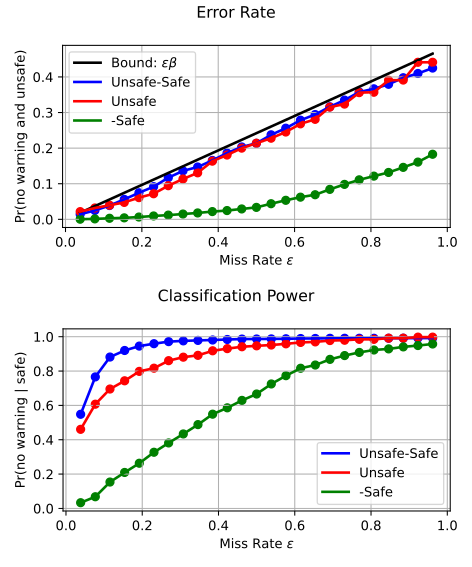}
    \caption{Systematic warning system results for a quadcopter MPC policy generated over $20$ repetitions. In each repetition, we fit our unsafe-safe conformal prediction warning system to new $N = 25$ unsafe trajectories and a variable number of associated safe trajectories. We compare against performing conformal prediction using either just the safe or unsafe data. While all methods retain the error rate guarantee (Corollary~\ref{cor: warning_error_rate}), as assessed in the top subfigure using a test set of $500$ trajectories, the unsafe-safe approach yields fewer false alarms as shown in the bottom subfigure.}
    \label{fig: warning_systematic}
\end{figure}

In Figure~\ref{fig: warning_systematic} (top) we verify that for $\epsilon \in [\frac{1}{N+1}, \frac{N}{N+1}]$, the fraction of test trajectories resulting in error with no alert is $\leq \epsilon \beta$ (Corollary~\ref{cor: warning_error_rate}). Besides the unsafe-safe nearest neighbor approach, using Eq.~\ref{eq: two_sample_special} (labeled Unsafe-Safe), we consider ablations only using distance to the nearest unsafe state: $s(D_F; x) = \min_{x' \in D_F} ||x' - x||^2$ (labeled Unsafe), or instead flagging if distance to the nearest safe state is too large: $s(D_F; x) = -\min_{y' \in D_G} ||y' - x||^2$ (labeled -Safe). In all cases Corollary~\ref{cor: warning_error_rate} holds as each approach is nearest neighbor conformal prediction but with a different dissimilarity measure.~\footnote{In the -Safe case, $s(D_F; x) = s(x)$ depends only on $x$, not the points $x'$ in $D_F$, so we can treat the dissimilarity measure itself as the score $d(x',x) = s(x)$ in Eq.~\ref{eq: score func} and Theorem~\ref{thm: main_cp} reduces to split conformal prediction.} In Figure~\ref{fig: warning_systematic} (bottom) we plot an ROC-like curve studying classification power as $\epsilon$ varies. The y-axis plots the complement to the false alarm rate i.e., $\Pr(\mbox{no warning} \mid \mbox{safe})$ and should remain high for an effective safety classifier. Across $\epsilon$, the unsafe-safe approach outperforms ablations. Using only safe data performs poorly as new initial conditions can be far from recorded safe trajectories, triggering false alarms. 

Because of problem setting differences, a direct comparison to many alternative approaches for learning safety (Section~\ref{sec: lit_rev}) is infeasible; such methods often rely on constraint-satisfying expert trajectories, learn safety interactively, or collect dense labels from expert intervention. In contrast, we focus on learning safety in a single step from a limited offline dataset with binary unsafe labels. We therefore consider widely used machine learning classifiers as a natural point of comparison, shown in Figure~\ref{fig: ml_baselines}. This comparison helps assess to what extent we can reliably predict safety in the limited data setting, illustrating the impact of data imbalance and data size on the performance of standard classifiers. Although the problem settings differ, several of the works in Section~\ref{sec: lit_rev} also train machine learning classifiers to predict safety, making this comparison relevant within the robotics context.

As baselines in Figure~\ref{fig: ml_baselines}, we use a random forest (RF), support vector machine (SVM), and neural network (NN), each with/without class balancing (which equally weights the total loss contribution from unsafe and safe samples). We fit on the entire data and plot an `x' at the point $x = \Pr(\mbox{no warning} \mid \mbox{ unsafe})$, $y = \Pr(\mbox{no warning} \mid \mbox{ safe})$. These standard techniques do not allow us to control for the miss rate, $\epsilon$, so only one `x' appears per method. To better compare with our method, we also calibrated each baseline using split conformal prediction (see Section~\ref{sec: cp_overview}), using $10$ unsafe trajectories to calibrate the alert threshold and the remaining $15$ for model fitting.~\footnote{We reserve, in particular, $N = 10$ unsafe samples to achieve sufficient resolution for the resulting conformal bounds: $k/(N+1), k = 1, .., N$.} Unlike our method, full conformal prediction with these baselines is not closed-form and is thus intractable, requiring $N+1$ model fits for each new query state. For method comparison, we plot ROC-like curves as a function of $\epsilon$ in Figure~\ref{fig: ml_baselines}. Across $\epsilon$, our nearest neighbor conformal approach outperforms all but the random forest with class-balancing, which performs similarly. 

Despite similar performance, our approach has some interpretability and theoretical benefits compared with the random forest in this limited data setting. Our approach avoids parameter tuning (e.g., random forest tree depth) and provides an explicit geometric description for the learned SUS region (which we later exploit for policy modification). From a theoretical perspective, conformal prediction is performed in our method using all $N = 25$ error states, reusing the same data for classification and calibration. In contrast, for the random forest, we must reserve a held-out calibration set of smaller size $N = 10$ for conformal prediction. While conformal prediction guarantees average coverage at least $1 - \epsilon$, increasing $N$ reduces the minimum enforceable miss rate $\epsilon = 1/(N+1)$, reduces overcoverage bounds (Theorems~\ref{thm: overcoverage_sym},~\ref{thm: overcoverage_asym}), and shifts the coverage distribution closer to $1 - \epsilon$ (see Figure~\ref{fig: hist_cov}).

Of course, nearest neighbor conformal prediction will not always outperform standard and powerful machine learning alternatives. Rather, we propose it as a simple, interpretable, and data-efficient approach that provides theoretical guarantees without parameter tuning or separate calibration data.

\begin{figure}
    \centering
    \includegraphics[width=0.9\linewidth]{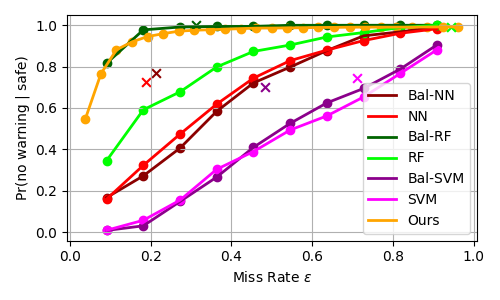}
    \caption{Comparison of our warning system against machine learning baselines. Using the same data as in Fig.~\ref{fig: warning_systematic}, we repeatedly fit several machine learning baselines (random forests (RF), neural networks (NN), support vector machines (SVM)) with and without class balancing, shown as `x's on the plot. Furthermore, to control the miss rate $\epsilon$ of these classifiers, we use split conformal prediction which requires holding out calibration data, unlike in our method. Curves show the ROC performance of each method across $\epsilon$. Despite its simplicity, our approach performs similarly to random forest with class-balancing and outperforms all the other methods.}
    \label{fig: ml_baselines}
\end{figure}

\subsection{Visuomotor Quadcopter Policy Example}

\begin{figure}
     \centering
     \includegraphics[width=\linewidth]{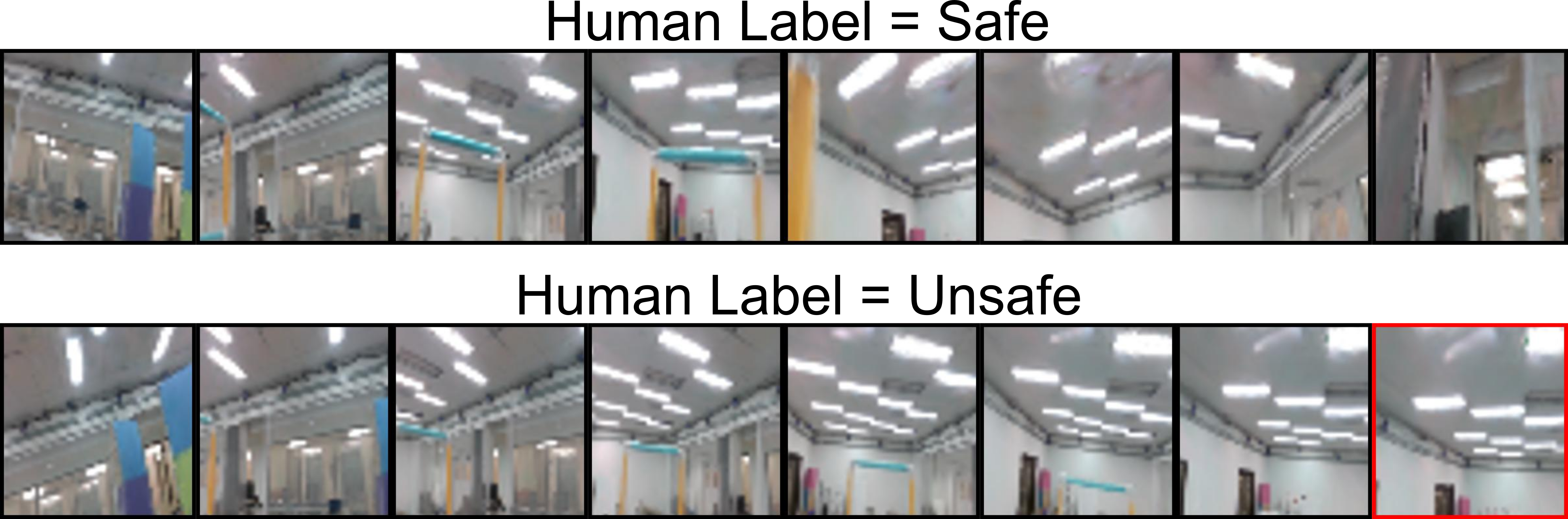}
    \caption{Example trajectories used for fitting the warning system to a visuomotor quadcopter policy. The top row shows frames from a safe trajectory that successfully passes through the gate and is not stopped by the human. The bottom row shows frames from an unsafe trajectory that is terminated by the human labeler when it is deemed the quadcopter will not pass through the gate. The final image where the human triggers termination is outlined in red.}
    \label{fig: visuo_train_rollout}
\end{figure}

We also developed our warning system to augment a visuomotor policy tasked with flying a quadcopter through a designated gate. The policy we use is an early version of that developed in \cite{LowEtAlRAL24_SousVide}. The policy is parameterized as a neural network that takes as input an RGB image, a history of partial state and action information, and outputs normalized thrust and body angular rates. To generate training data for the policy, a Gaussian Splat \cite{kerbl20233d, Ye_gsplat, nerfstudio} representation of the physical environment is fit using real image data to create a visually realistic simulator. Specifically, the Gaussian Splat represents the scene as a set of 3D Gaussian primitives whose parameters (position, covariance, opacity, and color) are optimized from multi-view real images. Novel RGB views are then rendered by rasterizing and compositing these Gaussians in pixel space, enabling photorealistic reconstruction of the environment from arbitrary viewpoints. The policy is trained via behavior cloning to mimic a model-based trajectory optimizer which has privileged information regarding state and parameters (mass, thrust coefficient) that are randomized at initialization. 

To fit the warning system, we simulated $200$ trajectories while randomizing starting state/parameters and watched replayed videos. Using a simple user interface, a single annotator (an author) terminated any trajectories if they believed the policy had veered too far and was unlikely to pass through the gate, yielding $45$ unsafe trajectories. We stored $50 \times 50 \times 3$ RGB image data for use in the conformal warning system. Figure~\ref{fig: visuo_train_rollout} shows images from an example safe and unsafe trajectory, as labeled by the human. 


\begin{figure}
     \centering
     \includegraphics[width=0.9\linewidth]{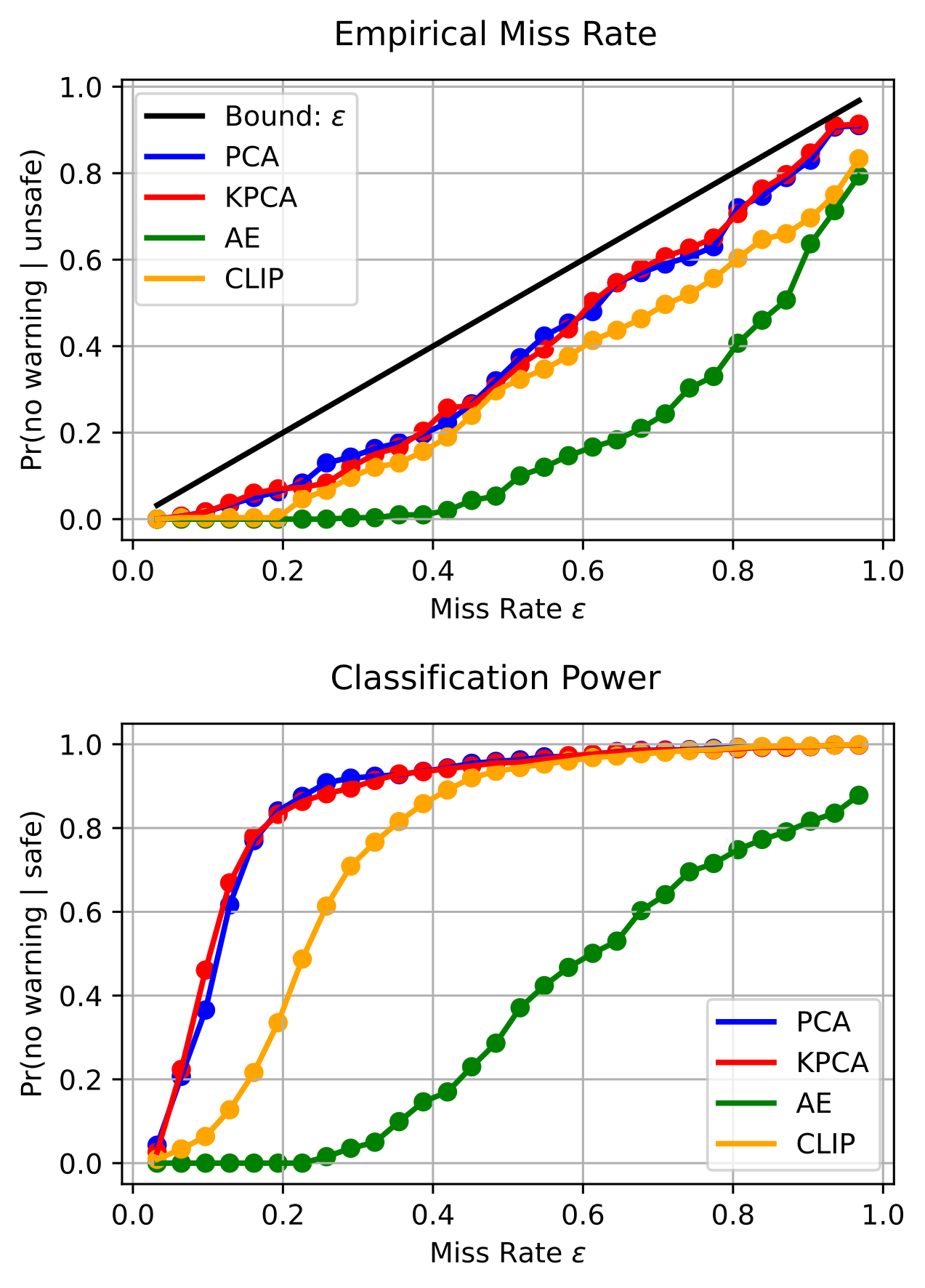}
    \caption{Results for conformal warning systems applied to a visuomotor quadcopter policy with different image representations. The results average across $20$ train-test splits of $200$ trajectories. In each split, $30$ unsafe trajectories and $100$ safe trajectories are used to fit the conformal warning system. The safe trajectories are also used to fit the representation model (except pre-trained CLIP which is fixed). Both PCA and KPCA perform well, but all methods retain Theorem~\ref{thm: warning_miss}'s miss rate guarantee.}
    \label{fig: visuomotor_systematic}
\end{figure}


Instead of directly using images in the nearest neighbor procedure, we reuse the safe trajectories to learn a $400$ dimensional latent representation. We fit PCA, KPCA, and neural network autoencoder models to learn the transformation as well as a pretrained CLIP model~\cite{clip}. We compare simple models (PCA, KPCA) to expressive ones (autoencoder) to assess the effect of model capacity under limited data. Richer representations can help the nearest neighbor classifier better distinguish safe from unsafe data, but fitting these representations can require more data to generalize. We include a pretrained/frozen model (CLIP) as a potential method for obtaining expressive features without fitting to our small dataset. 


Over $20$ repetitions we select a random subset of $30$ unsafe and $100$ safe trajectories to fit the warning system and assess performance using the remaining trajectories. Even with the $400$-dimensional latent space, the per-query runtimes, including transformation via PCA/KPCA/autoencoder/CLIP, remain low, averaging $4.9$, $23.8$, $0.4$, and $3.0$ milliseconds respectively. Figure~\ref{fig: visuomotor_systematic} (top) shows the average warning system miss rate for varying $\epsilon$. Regardless of transformation, the empirical average falls below the $\epsilon$ value as guaranteed in Theorem~\ref{thm: warning_miss}. In Figure~\ref{fig: visuomotor_systematic} (bottom), we show the average empirical estimate of $\Pr(\mbox{no warning} \mid \mbox{safe})$ for each warning system. PCA/KPCA perform best with the limited data, outperforming the autoencoder and providing a task-specific representation more effective than pre-trained CLIP. With more data, higher-capacity models may become advantageous. 


For one warning system fit, Figure~\ref{fig: visuo_test_rollout} shows frames from test safe and unsafe trajectories. We visualize the associated $p$-value as a border color with warmer colors indicating higher values, i.e., images deemed more unsafe by the warning system. In the (top) safe trajectory the $p$-values remain low while in the (bottom) unsafe trajectory the $p$-values increase as the quadcopter starts flying too high above the gate. Using the $p$-value as a calibrated safety measure, the warning system can thus act as a runtime safety monitor.

\begin{figure}
     \centering
     \includegraphics[width=\linewidth]{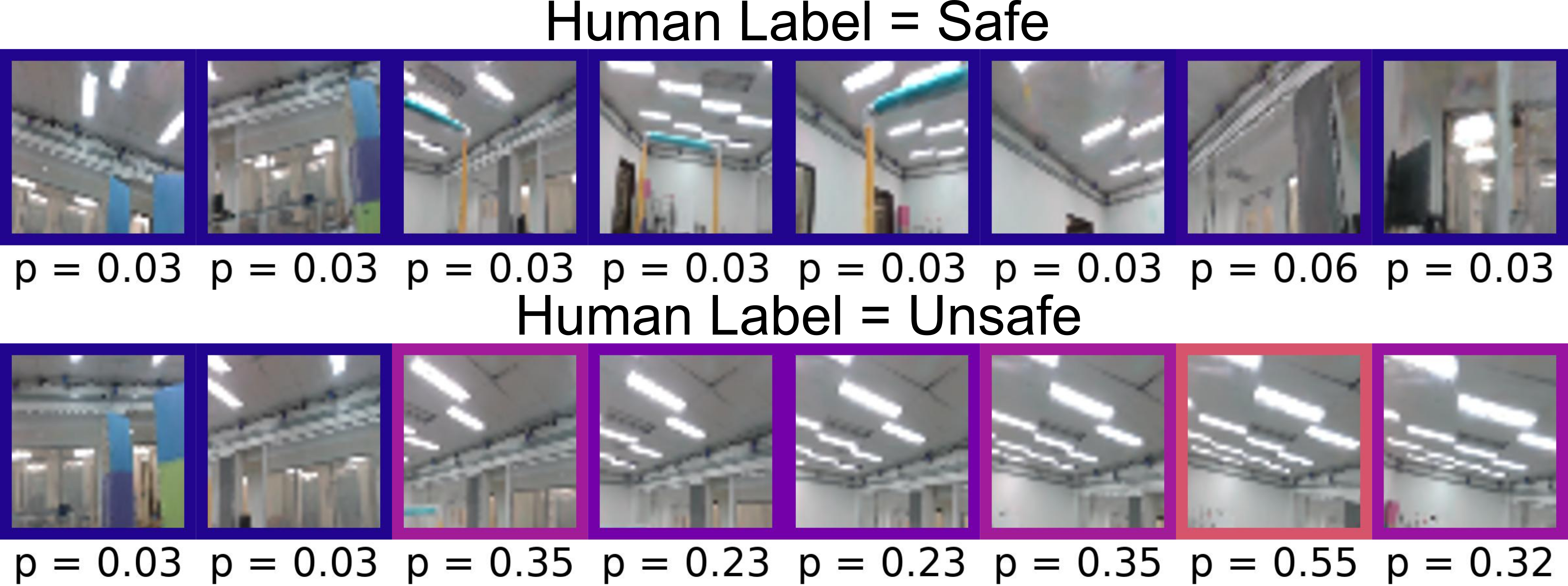}
    \caption{Test trajectories in a Gaussian Splat simulator with associated warning system $p$-values. The top row shows a safe trajectory with low $p$-values throughout. The bottom row shows an unsafe trajectory where the $p$-values increase as the quadcopter begins to fly above the gate, not through as desired.}
    \label{fig: visuo_test_rollout}
\end{figure}

Although the visuomotor policy is limited to one environment, these experiments demonstrate our approach can scale to learning safety from high-dimensional image data, particularly when coupled with feature transformation. Using visual features makes the SUS region independent of global coordinates, providing a way to transfer to multiple environments.




\section{Distribution Shift and Multiple Labelers}
\label{sec: dist_shift}

Previous sections assumed IID conditions between data collection and policy deployment, ensuring $1-\epsilon$ coverage (Theorem~\ref{thm: main_cp}). Here, we provide theoretical and empirical analysis for how distribution shift affects SUS region coverage.


Firstly, distribution shift may occur internally while demonstrating $\pi$ to the human, making error states in $D$ not IID. This could occur if the data are collected by multiple labelers (for simplicity two), with different safety definitions. To detect inconsistency, one can first form SUS regions $C_1(\epsilon), C_2(\epsilon)$ separately using the data $D_1, D_2$ produced by each labeler, and then check how well each covers the other labeler's data (i.e., checking if the fraction of contained points is near $1 - \epsilon$). If both SUS regions maintain coverage, the data can be pooled to form a joint $C(\epsilon)$. Otherwise, it suggests the labelers disagree on their safety definition, and instead of merging the data, we can keep the separate SUS regions (effectively group-balanced conformal prediction~\cite{angelopoulos2022gentle, vovk2012conditional}). $C_1(\epsilon)$ and $C_2(\epsilon)$ then achieve $1 - \epsilon$ coverage for their respective labelers and the union $C_1(\epsilon) \cup C_2(\epsilon)$ simultaneously satisfies both labelers. 

To check for distribution shift on a point-by-point basis, one can also monitor the sequence of conformal $p$-values (see subsection~\ref{subsec: p_val}) when comparing each new datum to the already collected data~\cite{angelopoulos2024theoretical}. A growing supermartingale indicates distribution shift and can be used to prompt data splitting i.e., we can build $C_1(\epsilon)$ using the already collected data and treat new points as dataset $D_2$.

There can also be distribution shift between data collection and deployment, e.g., if the initial condition distribution $\cur{D}_0$ or dynamics $\cur{T}$ change. To analyze the impact on $C(\epsilon)$ coverage, the source of distribution shift is irrelevant; we need only consider how such a change shifts the resulting distribution over error states i.e., on the policy error distribution $F$ to a modified $\Tilde{F}$. Directly from existing conformal prediction results under distribution shift \cite{barber2023conformal}, we obtain:

\begin{theorem}[Coverage Loss under Distribution Shift]
\label{thm: dist_shift}
Let $D = \{x_1,...,x_N\}$ be states drawn IID from $F$ and let $x_{N+1} \sim \Tilde{F}$ be independently sampled from a modified distribution. Then, for $C(\epsilon)$ constructed as in Theorem~\ref{thm: main_cp},
\begin{equation}
    \Pr[x_{N+1} \in C(\epsilon)] \leq 1 - \epsilon - \frac{N}{N+1} (2 d_{TV}(F, \Tilde{F}) - d_{TV}(F, \Tilde{F})^2)
\end{equation}
where $d_{TV}(F, \Tilde{F})$ denotes total variation distance.
\end{theorem}
This coverage loss grows monotonically with $d_{TV}(F,\Tilde{F})$. Substituting $\Tilde{\epsilon} = \epsilon + \frac{N}{N+1} (2 d_{TV}(F, \Tilde{F}) - d_{TV}(F, \Tilde{F})^2)$ into Theorem~\ref{thm: warning_miss} and Corollary~\ref{cor: warning_error_rate} yields trajectory-level guarantees under distribution shift.

Because coverage loss depends only on the distribution shift between policy error distributions, $C(\epsilon)$ is robust to shifts in initial conditions or dynamics that leave $F$ largely unchanged (e.g., starting farther from obstacles). Robustness can be further improved by applying transformations $\phi$ which make $\phi(x)$ less sensitive to a distribution shift over $F$ (e.g., projecting to just position makes $C(\epsilon)$ insensitive to velocity changes). By the data-processing inequality~\cite{polyanskiy2015strong}:
\begin{equation}
\label{eq: dpi_phi}
    d_{TV}(F_{\phi}, \Tilde{F}_{\phi}) \leq d_{TV}(F, \Tilde{F})
\end{equation}
so building $C(\epsilon)$ using transformed states $\{\phi(x_i)\}_{i=1}^N$ yields a coverage loss bound no worse than if building $C(\epsilon)$ with the original $D$.~\footnote{$F_{\phi}$ is the distribution for $\phi(x), x \sim F$ (similarly for $\Tilde{F}_\phi$ and $\Tilde{F}$).} Of course, a poor choice of $\phi$ may destroy key differences between unsafe and safe data, resulting in a conservative $C(\epsilon)$ and many false alarms.

Distribution shifts can also change the relative frequency of different failure modes (e.g., how often the quadcopter collides with each pillar in Figure~\ref{fig: warning_demo}). Even without knowing each component $F_i$ of the error distribution $F$, we can bound the total variation distance, and thus the conformal coverage gap:

\begin{theorem}[Mixture Distribution Total Variation Bound]
\label{thm: mix_shift}
Let $F = \sum_{k=1}^m w_k F_k$ be a mixture distribution, with components $\{F_k\}_{k=1}^m$ and associated nonnegative weights $w = (w_1,...,w_m)$ summing to $1$. Let $\Tilde{F} = \sum_{k=1}^m \Tilde{w}_k F_k$ have same components but different weights $\Tilde{w} = (\Tilde{w}_1,...,\Tilde{w}_m)$. Then
\begin{equation}
    d_{TV}(F, \Tilde{F}) \leq 1/2 ||w - \Tilde{w}||_1.
\end{equation}
\end{theorem}

In Figure~\ref{fig: shift_exp}, we reuse the setup of Figure~\ref{fig: warning_demo} but change the initial condition distribution, varying how often we start on either side of the room, which varies the frequency of collisions with upper $y > 0$ versus lower pillars (failure modes $F_1, F_2$ respectively). We build $C(\epsilon)$ under different data collection conditions: generating $D$ with mixture $w = (0,1)$ starting only with $y \leq 0$, a skewed split $w = (0.25, 0.75)$, and even split $w = (0.5,0.5)$. We deploy under $5$ test distributions, varying $\Tilde{w}_1 \in [0,1]$, and plot the resulting miss rate. When both modes are present in $D$ ($w_1 = 0.25, 0.5$), $C(\epsilon)$ remains robust across deployment mixtures (retaining miss rate close to nominal $\epsilon = 0.05$). If the collected data omits a mode ($w_1 = 0$), coverage deteriorates as the unseen mode becomes more frequent and the bound (Theorem~\ref{thm: dist_shift}) is fairly tight. Thus, even for novel failure modes (e.g., unseen obstacles), which are absent from the collected data and consequently missed by the warning system, we can still characterize the miss rate increase.

Theorem~\ref{thm: mix_shift} is also relevant if we decide to forgo data splitting and form a joint $C(\epsilon)$ for multiple distinct labelers; we can view each labeler $k$ as contributing a component $F_k$.

\begin{figure}
     \centering
     \includegraphics[width=0.9\linewidth]{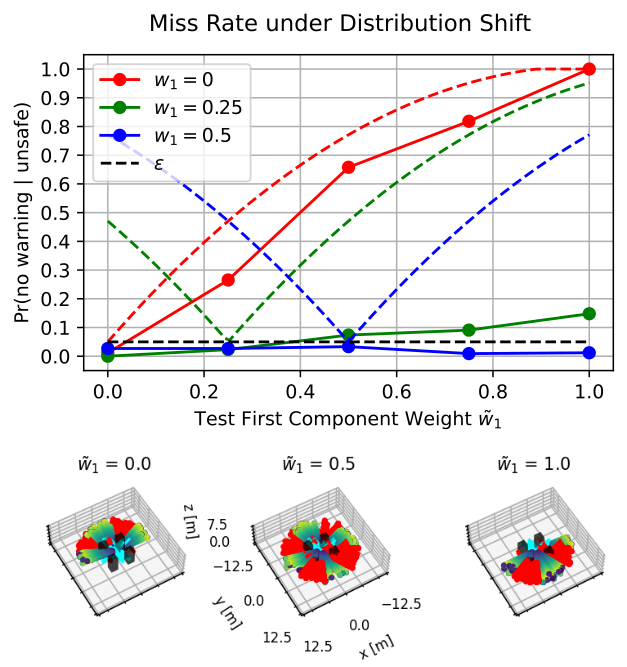}
    \caption{Distribution shift impact on warning system miss rate. We change the frequency $\Tilde{w}_1$ of starting in the upper half of the room (visualized for $\Tilde{w}_1 = 0, 0.5, 1$ in the bottom subfigure). We build three SUS regions $C(\epsilon)$ (with $\epsilon = 0.05$, $N = 25$, and unsafe-safe score) under different data collection conditions: $w_1 = 0, 0.25, 0.5$. For each $C(\epsilon)$, the top subfigure shows the miss rate (evaluated from $100$ test trajectories, over $10$ fitting repetitions) of the resulting warning system across test conditions (different $\Tilde{w}_1$). In each case, we overlay the theoretical bound (Theorems~\ref{thm: dist_shift},~\ref{thm: mix_shift}) as a similarly colored dashed line. For $w_1 = 0$ the miss rate increases significantly and the bound is fairly tight but for $w_1 = 0.25, 0.5$ the miss rate remains near $\epsilon$, illustrating that $C(\epsilon)$ can remain robust to changes in frequency of different failure modes, so long as they are represented in the collected data.}
    \label{fig: shift_exp}
\end{figure}

\section{Policy Modification}
\label{sec: policy_mod}

In this section, we use the warning system (see Section~\ref{sec: warning_sys}) to modify the original policy and improve its safety. Our approach is to avoid the SUS region $C(\epsilon)$, using the warning system to trigger a backup safety controller that steers to historical safe data. We focus on the MPC setting as we will exploit the ability to 1. look ahead into the future using the open-loop plan and 2. easily add/enforce constraints by modifying trajectory optimization.

Besides $\pi$, we use an additional backup policy $\pi_B$ to track to historical safe trajectories. Offline, we search through the set of safe collected trajectories $D_{\overline{\cur{A}}}$ and identify the ``backup" subset, denoted $D_B$, which avoid $C(\epsilon)$. At runtime, we start by using $\pi$, repeatedly solving the optimization in Eq.~\ref{eq: traj_opt}. If any state in the open-loop plan $x_{1:H}$ enters $C(\epsilon)$ i.e., triggers a warning, we record the future alert timestep $t_w \in \{1, ..., H\}$ and switch to the backup safety mode. When first switching, we identify the nearest safe trajectory $\tau_B \in D_B$ containing state $x_B$, at index $t_B$, closest to the current state $x_0$. Our backup policy $\pi_B$ will aim to drive the current trajectory towards $\tau_B$. At each time while in backup safety mode, we increment $t_B$ and extract the relevant safe states from $\tau_B$ as tracking targets: $x_B^0 = \tau_B(t_B), ..., x_B^H = \tau_B(t_B+H)$. $\pi_B$ solves a similar optimization to Eq.~\ref{eq: traj_opt} except instead of a single goal state $x_g$, we track the selected backup states $\{x_B^t\}_{t=0}^H$. We release the backup safety mode and return to using $\pi$ if either (i) the current state $x_0$ is close to $\tau_B$ and at least $t_w$ timesteps have elapsed, or (ii) a maximum number of iterations have elapsed. Since we do not track $\tau_B$ indefinitely, our approach applies even if the goal region changes run to run, as shown in our hardware experiments.


We add state constraints to ensure $\pi_B$ avoids the SUS region $C(\epsilon)$ while steering to $\tau_B$. Using the SUS region's geometry (Theorem~\ref{thm: union_cp}) as a union of balls/polyhedra $\cup_{i=1}^N C_i$, we treat $C(\epsilon)$ as a set of learned obstacles which we constrain $\pi_B$ to avoid. We include these constraints in SCP by conservatively approximating $C_i$ using the previous guess, selecting one halfspace constraint per $C_i$ to enforce avoidance.~\footnote{To remain outside $C_i = \{x \mid A_i x \leq b_i\}$ we enforce $a_{ij}^T x > b_j$, selecting the farthest in signed distance from previous guess $\overline{x}_t$: $j^* = \arg \max_{j=1:M} d_{ij}$, $d_{ij} = (a_{ij}^T \overline{x}_t - b_{ij}) / ||a_{ij}||_2$.} 



We summarize our approach to policy modification below:
\begin{enumerate}
    \item Offline, collect a backup dataset $D_B$ of safe trajectories which do not enter $C(\epsilon)$. 
    \item Online, use $\pi$ but switch to the backup safety mode if the look-ahead plan $x_{1:H}$ enters $C(\epsilon)$ (at a timestep $t_w$).
    \item When first switching to the backup safety mode, find the nearest trajectory $\tau_B$ to the current state $x_0$ with nearest state $x_B \in \tau_B$ achieved at index $t_B$.
    \item While in the backup safety mode, increment $t_B$ and apply $\pi_B$ to steer to tracking targets $\{x_B^t)\}_{0:H} \in \tau_B$. Also, add linear constraints to avoid $C(\epsilon)$, one per $C_i$.
    \item Release the backup safety mode after a maximum elapsed time or if the current state $x_0$ is now close enough to $\tau_B$ and the original alert time $t_w$ has passed.
\end{enumerate}

A straightforward approach to improving the safety of $\pi$ would be to add constraints to avoid $C(\epsilon)$ at all times and forgo using a backup policy. However, modifying $\pi$ to avoid $C(\epsilon)$ causes a distribution shift over the visited states, and in particular, significant distribution shift over the resulting error distribution $F$. By design $C(\epsilon)$ covers none of the new errors under the constrained policy and new errors in the region $\cur{A} \setminus C(\epsilon)$ can yield potentially high error rates (as shown in Section~\ref{sec: policy_mod_exp}). By nominally using $\pi$ and switching to $\pi_B$ upon alert, there is no distribution shift until alert. This keeps the warning system calibrated, reliably detecting unsafe states in the look-ahead plan. Because we switch to $\pi_B$ preemptively using alerts in the look-ahead plan, we can hope there is time for a well-tuned $\pi_B$ to steer to $\tau_B$ before reaching an unsafe state. Implicitly, this requires diverse backup trajectories in $D_B$ so that $\tau_B$ is close enough to be reached using $\pi_B$.




\section{Policy Modification Experiments}
\label{sec: policy_mod_exp}

We perform simulation and hardware experiments to evaluate our approach for policy modification. Our simulation experiments extend Section~\ref{sec: warning_exp}; we now show the modified policy can safely navigate to the goal. For our hardware experiments, we use human video feedback collected with a Gaussian Splat simulator to modify a quadcopter MPC policy to navigate more cautiously in the test environment.

\subsection{Simulation Experiments}

\begin{figure*}
     \centering
    \includegraphics[width=\textwidth]{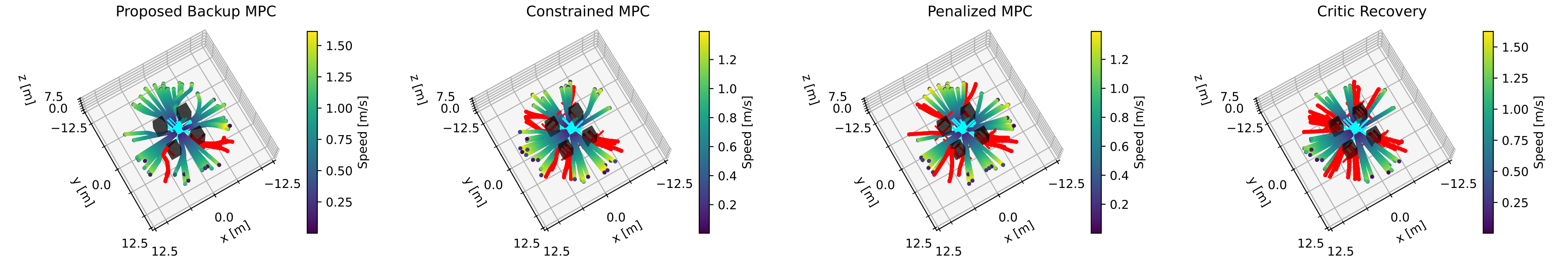}
    \caption{Approaches for modifying a MPC policy using collected safety data. From left to right: our backup MPC, constrained MPC avoiding $C(\epsilon)$, penalized MPC with slack on avoiding $C(\epsilon)$, and a safety critic approach from \cite{Thananjeyan2021} that switches to minimizing risk when $\hat{Q}(x_t, u_t) > \epsilon$. We set $\epsilon = 0.1$ and collect $N = 25$ errors. Each subfigure shows $50$ trajectories with collision rates of $14\%$, $32\%$, $30\%$, and $50\%$ (vs. $52\%$ for the unmodified policy). Backup MPC reduces errors by steering toward historical safe data, while constrained/penalized MPC shifts the error distribution to obstacle corners. Similarly, we conjecture that with the safety critic, querying at out-of-distribution inputs can minimize learned risk but still result in collision.}
    \label{fig: mod_demo}
\end{figure*}

\begin{figure}
    \centering
    \includegraphics[width=\linewidth]{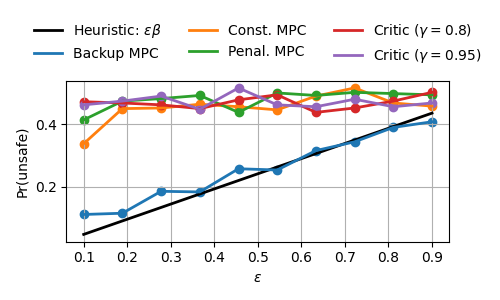}
    \caption{Comparing our policy modification approach (Backup MPC) against alternatives. For different $\epsilon$ and $20$ repetitions, we form the modified policy using each method and execute $25$ trajectories (totaling $500$ per $\epsilon$). By tracking to historical safe data upon alert our approach yields a heuristic safety guarantee, approximately achieving the error rate of $\epsilon \beta$ from Corollary~\ref{cor: warning_error_rate}. The alternative approaches fail to significantly reduce error rate with decreasing $\epsilon$, likely because of distribution shift over visited states and selected actions.}
    \label{fig: mod_systematic}
\end{figure}

Figure~\ref{fig: mod_demo} demonstrates our modified quadcopter MPC policy for $50$ test runs, where the base MPC policy is augmented with our backup safety mode. We construct the warning system offline using $\epsilon = 0.1$ and extract the associated polytopes defining $C(\epsilon)$ to constrain the backup policy $\pi_B$. Modifying the policy reduces the empirical collision rate from $52\%$ to $14\%$, and $\pi_B$ often successfully steers around obstacles. We compare this approach against three alternatives:
\begin{itemize}
    \item Constrained MPC: directly constrains the base policy $\pi$ to avoid $C(\epsilon)$, using a large slack penalty of $100$ to make avoidance act like a hard constraint (while still ensuring feasibility during SCP iteration).~\footnote{We add cost $c_{\delta} ||\delta||_1$ and relax constraints to $F_x(t) x \leq g_x(t) + \delta$.}
    \item Penalized MPC: reduces the slack penalty to $10$ to allow some cost-penalized entrance into $C(\epsilon)$.~\footnote{$p$-values above $\epsilon$ occur but not often, indicating non-negligible penalty.}
    \item Critic Recovery: offline pretraining of \cite{Thananjeyan2021}, which fits safety critic $\hat{Q}(x_t, u_t)$ to predict $\gamma$-discounted safety risk and switches to a recovery policy minimizing this risk when $\hat{Q} > \epsilon$.~\footnote{We minimize $\hat{Q}$ over a horizon via the cross entropy method (CEM).}
\end{itemize}
Constrained and penalized MPC still yield high collision rates ($32\%, 30\%$); the modified policies sometimes curve away from obstacles, but errors concentrate near obstacle corners. The critic recovery approach yields a $50\%$ error rate despite maintaining low predicted risk; we conjecture this occurs because the recovery policy queries $\hat{Q}$ at out-of-distribution inputs. In \cite{Thananjeyan2021}, we believe this problem is ameliorated because the initial offline data collects many more error states than in our limited data setting (whereas we use just $25$ unsafe states, they collect several hundred). Additionally, \cite{Thananjeyan2021} then iteratively improves the critic and policy through environment interaction. In our setting, we aim to provide a significant reduction in policy error rate one-shot from limited offline data.

Figure~\ref{fig: mod_systematic} provides a systematic comparison across $10$ $\epsilon$ values over $20$ data collection repetitions. For each repetition and $\epsilon$, we execute $25$ trajectories under each approach (totaling $500$ per $\epsilon$) and record the resulting policy error rate. While alternative methods show little to no improvement as $\epsilon$ decreases, our backup MPC consistently lowers the error rate. In fact, the modified policy approximately achieves the heuristic bound of $\epsilon \beta$ from Corollary~\ref{cor: warning_error_rate}. This heuristic assumes errors are predominantly caused by failing to alert, which is often reasonable since upon alert we switch to tracking historical safe data and should therefore fail rarely. Thus, we provide a heuristic safety guarantee that allows users to select $\epsilon$ to achieve a desired error rate $\eta$ for the modified policy: given estimated original error rate $\hat{\beta}$, take $\epsilon \leq \eta / \hat{\beta}$.

\begin{figure}
     \centering
     \includegraphics[width=0.9\linewidth]{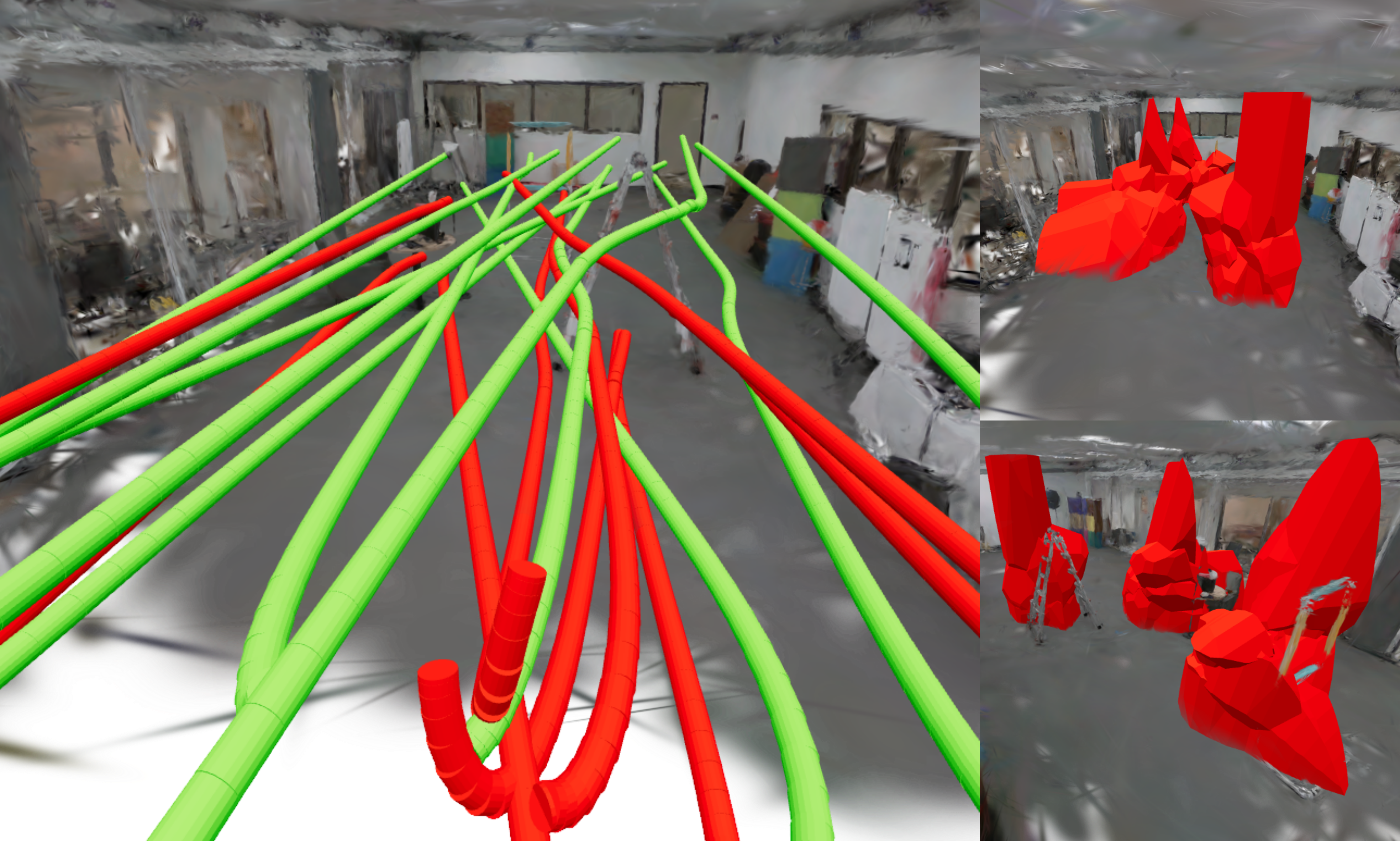}
    \caption{We modify a quadcopter MPC policy to more cautiously navigate around obstacles using human video feedback collected in a Gaussian Splat simulator. The labeled safe and unsafe trajectories are colored green and red respectively in the left subfigure. Using this data, we form the SUS region $C(\epsilon)$ in 9D state space, of which a 3D view is displayed on the right.}
    \label{fig: hardware_results}
\end{figure}


\subsection{Hardware Experiments}

For our hardware experiments, we start with a quadcopter MPC policy that has a minimal level of obstacle avoidance and use human feedback to modify the initial policy to navigate more cautiously around obstacles.~\footnote{Starting with some avoidance was necessary as an obstacle-unaware policy was found to almost always collide, yielding no backup trajectories.} To embed obstacle avoidance directly into the starting MPC policy, we at runtime approximate the local point cloud, derived from a Gaussian Splat simulator of the test environment, as a union of balls and impose these as MPC constraints.~\footnote{\cite{chen2024splatnavsaferealtimerobot} explores collision-free navigation with Gaussian Splats.} To perform this point cloud approximation, we treat the 3D points in the local point cloud as unsafe samples, and (as we lack complementary safe points) perform unsafe-only nearest neighbor conformal prediction (further detail in the Appendix).

We then refine this initial obstacle-aware policy using human-labeled data, building the SUS region via the unsafe-safe procedure in the full quadcopter state space. Using the Gaussian Splat of the test environment, the user watches simulated flight trajectories and terminates when the quadcopter is perceived to become unsafe. Acting as human labeler, one author reviewed $100$ quadcopter trajectories and flagged $52$ of them as unsafe. These failures were caused by the quadcopter (i) passing too close above the table or trying to go underneath it, (ii) going through the ladder instead of around it, or (iii) going too near/under the gate, and reflect the subjectivity associated with human-determined safety. Figure~\ref{fig: hardware_results} (left) shows $20$ of these labeled trajectories overlaid on a Gaussian Splat rendering of the test environment, marking safe trajectories in green and unsafe in red. When fitting $C(\epsilon)$, using the unsafe-safe score (Eq.~\ref{eq: two_sample_special}), we selected $\epsilon = 0.096$ so the heuristic error rate after policy modification would be $\eta = 0.05$. We visualize a 3-dimensional view of $C(\epsilon)$ (which exists in 9 dimensions) overlaid on the Gaussian Splat in Figure~\ref{fig: hardware_results} (right). $C(\epsilon)$ covers the main obstacles (table, lamp, ladder, and gate) flagged by the human. Furthermore, it reflects the preemptive human labeling as the polyhedra protrude outwards from the front of obstacles.


Using the approach described in Section~\ref{sec: policy_mod}, we modify the original MPC policy and perform $30$ flight tests in hardware, performing $5$ trials for $6$ start-goal configurations. We used an off-board desktop computer and streamed the resulting action to a low-level PX4 controller onboard the quadcopter using ROS2, and used an EKF on motion capture pose estimates for state information. We manually piloted to the starting location, allowed MPC to run for $25$ seconds, then manually landed.

We tested from two start locations (roughly facing the table or the ladder) and targeting three different goals across the room (left, center, or right side). Figure~\ref{fig: hardware_comparison} compares these flight tests (in green) to simulations using the modified (in blue) or original (in red) policy. The trajectories using the modified policy avoid some of the original unsafe behavior; they go higher above the table (see chair side left goal), do not get stuck near the gate (see chair side center goal), and do not go through the ladder (see ladder side center goal). 

For each configuration, Figure~\ref{fig: hardware_comparison} plots the associated $p$-value over time (see subsection~\ref{subsec: p_val}). In simulation, the modified policy (in blue) keeps the $p$-values below the $\epsilon$ threshold while the original policy (in red) exceeds this threshold when nearing the ladder or gate. When the red and blue $p$-value curves fully overlap this reflects that the original policy was deemed safe enough that the backup safety mode was never triggered. The green $p$-value curves associated with the hardware tests, $5$ for each configuration, generally remain below the $\epsilon$ cutoff. However, they occasionally violate (unlike the modified policy in simulation) which may be attributed to sim-to-real gap e.g., deviation from the modeled dynamics due to low-level flight control, state estimation error etc. For instance, there is an outlier in ladder side right goal where one of the $5$ hardware tests features a $p$-value spike. Nonetheless, the associated trajectory avoids going under the ladder; it starts moving to the ladder center but then shifts right.



\begin{figure}
    \centering
    \includegraphics[width=0.68\linewidth]{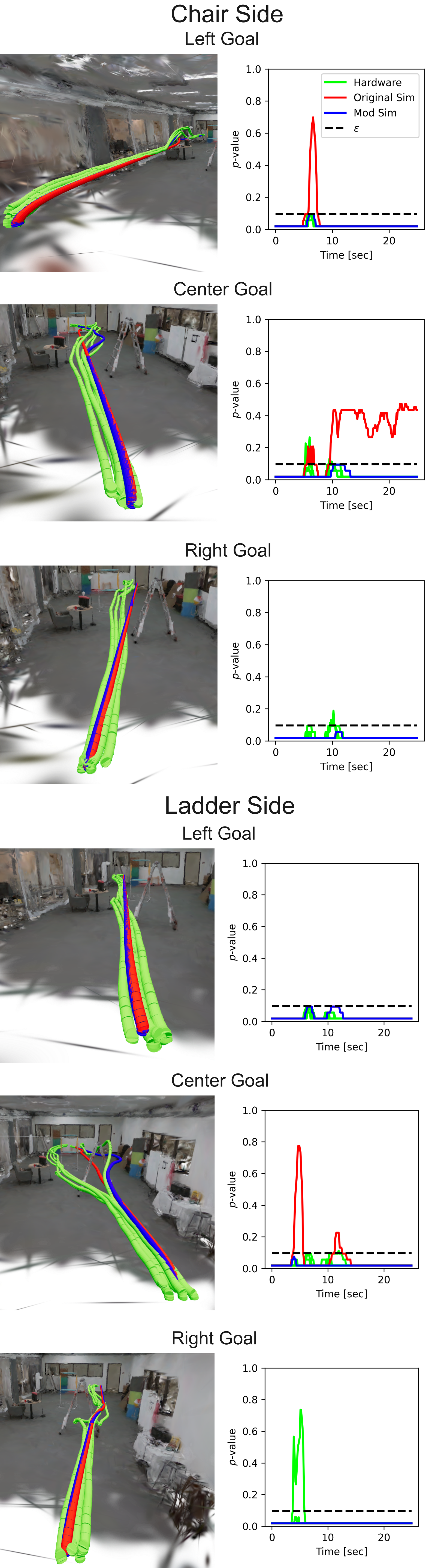}
    \caption{Hardware flight results (in green) using an MPC policy modified via human feedback. We compare to simulated trajectories of the modified (in blue) or original (in red) policy. We performed $5$ tests from $6$ configurations (2 starting locations and 3 goals). The modified policy avoids unsafe behavior flagged by the human (e.g., going under the ladder) and generally keeps $p$-values below the user-specified $\epsilon$ threshold.}
    \label{fig: hardware_comparison}
\end{figure}

\section{Conclusion}
\label{sec: conclusion}

Defining and ensuring safety for a robot policy can be challenging due to environmental uncertainty, policy complexity, or human subjectivity. In this work, we developed an approach for learning about robot safety from human-supplied labels. The human observes a handful of policy executions and terminates if they believe the robot reaches an unsafe state. Using this data, we fit a nearest neighbor classifier and calibrate it using conformal prediction to correctly flag a user-defined fraction of future errors. We derive novel theory showing we can calibrate without reserving unsafe states (of which there may be few) and provide an equivalent geometric description of the suspected unsafe sublevel (SUS) region. Using this region, we provide a warning system to mimic the safety preferences of the human labeler, up to a user-defined miss rate. The associated scalar $p$-value can serve as an interpretable safety score for runtime monitoring. We demonstrate the warning system for simulated quadcopter MPC and visuomotor policies. Lastly, we use the SUS region for policy modification by adding a backup safety mode which triggers upon warning system alert. We test this approach in quadcopter hardware experiments.

\section{Limitations}
\label{sec: limitations}

There are several avenues to extend our work and overcome some of its limitations. Our work assumed a static unsafe set but extending to open-world dynamic scenarios is an important next step. This may be achieved by using our method to learn about safety directly from visual data \cite{dino, dinov2}, as initially explored in Section~\ref{sec: warning_exp}. More generally, reasoning about safety in the robot’s body frame can enable transfer across environments (with novel obstacle geometries). For example, instead of learning to avoid a specific table, the warning system could ingest embedded point clouds in quadcopter body frame and learn that flying under any table is unsafe. Another extension would be to use nearest neighbor conformal prediction for policy modification beyond MPC. For instance, the calibrated $p$-value could be used as a penalty in reinforcement learning. Lastly, it would be valuable to re-calibrate the SUS region for new policies or conditions without re-collecting data, possibly by weighting samples in conformal calibration \cite{tibshirani2020conformalpredictioncovariateshift}. 

\appendix

\section*{Connection with Gaussian Kernel Density Estimation and Likelihood Ratio Testing}
\label{subsec: kde}

$k$-nearest neighbor distance has long been used for nonparametric density estimation \cite{Akaike1954AnAT, Biau2015}. Here we make a separate connection between Gaussian kernel density estimation (KDE) and nearest neighbor distance that motivates Euclidean distance in the unsafe-only score and the difference of squared Euclidean distances for the unsafe-safe score (Eq.~\ref{eq: two_sample_special}).

Given $\{x_i\}_{i=1}^N$ IID from $F$, we seek $C(\epsilon)$ containing $1-\epsilon$ of future samples. Given density $\rho_F$, we could set $C = \{x \mid \rho_F(x) \geq r\}$ and find $r$ so that $\int_C \rho_F(x) = 1-\epsilon$. $\rho_F$ is unknown, so we estimate $\hat{\rho}_F$ from samples e.g., via Gaussian KDE. Building $C(\epsilon)$ with Euclidean nearest neighbor conformal score can be viewed as approximately conformalizing Gaussian KDE for a small bandwidth $h$~\cite{Lei2013DistributionFreePS}. In Gaussian KDE, $\rho_F$ over $\mathbb{R}^p$ is approximated by replacing the Dirac with a Gaussian $N(0, hI)$ density (converging as $h$ shrinks):
\begin{subequations}
\begin{gather}
\rho_F(x) = \int \rho_F(x') \delta(x'-x) dx' \approx \\
\int \rho_F(x') \frac{1}{(2 \pi h)^{(p/2)}} \exp(-\frac{1}{2h} ||x - x'||^2) dx'.
\end{gather}
\end{subequations}
Equating to an expectation and applying Monte Carlo yields
\begin{subequations}
\begin{gather}
\mathbb{E}_{x' \sim F}\big[\frac{1}{(2 \pi h)^{(p/2)}} \exp(-\frac{1}{2h} ||x - x'||^2)\big] \approx \\
1/N \sum_{i=1}^N \frac{1}{(2 \pi h)^{(p/2)}} \exp(-\frac{1}{2h} ||x - x_i||^2) = \hat{\rho}_F(x).
\end{gather}
\end{subequations}

To define $C(\epsilon)$, we replace $\rho_F(x) \geq r$ by $\hat{\rho}_F(x) \geq r$. Applying a logarithm (which is monotonic), dropping data-independent constants, and performing positive re-scaling yields equivalent characterization \footnote{We repeatedly overload $r$'s value as it is calibrated for $1-\epsilon$ coverage.}
\begin{equation}
LSE(-\gamma ||x - x_1||^2, ..., -\gamma ||x - x_N||^2) \geq r
\end{equation}
where $LSE(a_1, ..., a_n)$ denotes log-sum-exp and $\gamma = \frac{1}{2h}$. As 
\begin{subequations}
\begin{gather}
LSE(-\gamma ||x - x_1||^2, ..., -\gamma ||x - x_N||^2) \in \\ \nonumber [\max_i -\gamma ||x - x_i||^2,\ \max_i -\gamma ||x - x_i||^2 + \log(N)]
\implies \\
LSE(-\gamma ||x - x_1||^2, ..., -\gamma ||x - x_N||^2) \approx -\gamma \min_i ||x - x_i||^2
\end{gather}
\end{subequations}
for small $h$ (large $\gamma$). Dropping $-\gamma$ yields equivalent test $\min_i ||x - x_i||^2 \leq r$. 

In the unsafe-safe case, we have $\{y_i\}_{i=1}^M$ from $G$ and want to determine if point $x$ is sampled from $F$ or $G$. Given $\rho_F, \rho_G$, a classic approach is the likelihood ratio test~\cite{mood1974introduction}, forming score $s(x) = \log(\rho_F(x)) - \log(\rho_G(x)) \geq r$ and declaring $x$ from $F$ if $s(x) \geq r$ and else $G$. We apply our previous approximation twice to the test $\log(\hat{\rho}_F(x)) - \log(\hat{\rho}_G(x)) \geq r$, concluding $x$ drawn from $F$ when $\min_i ||x - x_i||^2 - \min_i ||x - y_i||^2 \leq r$, for appropriately calibrated $r$. In summary, Eq.~\ref{eq: two_sample_special} can be viewed as an approximate likelihood ratio test using small-bandwidth Gaussian KDE. Since the likelihood ratio test is uniformly most powerful~\cite{mood1974introduction}, we can hope this score will perform well.



\section*{Point Cloud Obstacle Avoidance}
For initial obstacle avoidance in our hardware experiments we used our unsafe-only nearest neighbor method to approximate the point cloud (the Gaussian Splat ellipsoid centers). We extract all $N$ nearby points (there may be several thousand) to the current quadcopter position. To contain $1-\overline{\epsilon} = 0.95$ of these $N$ points, we randomly subselect $n = 200$ points and build balls around them. $C(\epsilon)$ will contain the $n$ subselected points and in expectation $(1-\epsilon) (N - n)$ of the remaining points. Thus, we choose $\epsilon$ such that $n + (1-\epsilon) (N - n) = (1-\overline{\epsilon}) N$ and set radius $r$ using Theorem~\ref{thm: main_cp}. To avoid each ball in MPC, we linearize $h_i(x) = (x - x_i)^T (x - x_i) - r > 0$ about previous guess $\overline{x}_t$. Figure~\ref{fig: point_cloud} visualizes our $C(\epsilon)$ point cloud approximation for a quadcopter position near the ladder.


\begin{figure}
    \centering
    \includegraphics[width=0.6\linewidth]{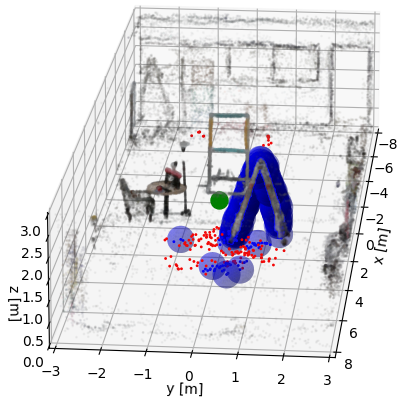}
    \caption{Visualization of the point cloud obtained from a Gaussian Splat of the test environment. For a quadcopter position (shown in green) near the ladder, we show (in blue) the union of balls $C(\epsilon)$ approximation to the nearby point cloud. We color nearby points blue if contained in $C(\epsilon)$ and red otherwise.}
    \label{fig: point_cloud}
\end{figure}

\section*{Proofs}

\begin{lemma}[Conformal Guarantee with Ties]
\label{lemma: cp_ties}
Let $S = \{s_i\}_{i=1}^{N+1}$ be exchangeable scores. Assume almost surely $Z_2 \coloneqq \sum_{i=1}^N \1{s_i = s_{N+1}} \leq T$. Then,
\begin{equation}
    k/(N+1) \leq \Pr[s_{N+1} \leq s_{(k)}] \leq (k + T)/(N+1).
\end{equation}
\end{lemma}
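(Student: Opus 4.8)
The plan is to follow the classic rank-counting argument behind Eq.~\ref{eq: marginal}, but to bookkeep ties carefully. For each $j\in\{1,\dots,N+1\}$ let $L_j=\#\{i\ne j:\ s_i<s_j\}$ be the number of other scores lying \emph{strictly} below $s_j$. The first step is to verify the reformulation $\{s_{N+1}\le s_{(k)}\}=\{L_{N+1}\le k-1\}$: if at most $k-1$ of $s_1,\dots,s_N$ are strictly below $s_{N+1}$, then the $k$-th smallest of them is at least $s_{N+1}$; conversely, if at least $k$ of them are strictly below $s_{N+1}$, then $s_{(k)}<s_{N+1}$. One must be careful here about strict versus non-strict inequalities, since that is precisely the distinction ties can blur.

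Next I would average over positions. Exchangeability of $(s_1,\dots,s_{N+1})$ makes the law of $L_j$ the same for every $j$, so $\Pr[L_j\le k-1]=\Pr[s_{N+1}\le s_{(k)}]$ for all $j$, and therefore $(N+1)\,\Pr[s_{N+1}\le s_{(k)}]=\mathbb{E}\big[\sum_{j=1}^{N+1}\1{L_j\le k-1}\big]$. It thus suffices to sandwich the random count $\sum_{j=1}^{N+1}\1{L_j\le k-1}$ between $k$ and $k+T$ almost surely, then take expectations and divide by $N+1$.

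The heart of the argument is this deterministic sandwich, and I expect it to be the main obstacle. Grouping the $N+1$ scores by distinct value $v_1<\dots<v_m$ with block sizes $n_1,\dots,n_m$ (each $\le T$ almost surely, by hypothesis) and cumulative counts $N_\ell=n_1+\dots+n_\ell$, $N_0=0$, one has $L_j=N_{\ell-1}$ for every $j$ in block $\ell$, so $\sum_{j}\1{L_j\le k-1}=\sum_{\ell}n_\ell\,\1{N_{\ell-1}<k}$. Since $N_0<N_1<\dots<N_m=N+1$ is strictly increasing, the indices with $N_{\ell-1}<k$ form an initial segment $\{1,\dots,\ell^\ast\}$, and the sum collapses to $N_{\ell^\ast}$. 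Maximality of $\ell^\ast$ (using $k\le N<N+1$) forces $N_{\ell^\ast}\ge k$, while $N_{\ell^\ast}=N_{\ell^\ast-1}+n_{\ell^\ast}\le(k-1)+T$; hence $k\le\sum_{j}\1{L_j\le k-1}\le k-1+T$ pointwise.

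Taking expectations of this pointwise bound and dividing by $N+1$ yields $k/(N+1)\le\Pr[s_{N+1}\le s_{(k)}]\le(k-1+T)/(N+1)\le(k+T)/(N+1)$, which is the stated inequality (in fact slightly sharper on the right). Specializing to $T=1$ recovers the no-ties identity. The only genuinely delicate points are the tie bookkeeping in the reformulation of Step 1 and the block-collapse identity of Step 3 — where the ``at most $T$ tied'' hypothesis actually enters — while the averaging and the final arithmetic are routine.
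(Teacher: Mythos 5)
Your proof is correct, and it reaches the conclusion by a genuinely different mechanism than the paper's. The paper introduces a randomized tie-breaking device: it defines $Z_1$ (the number of scores strictly below $s_{N+1}$) and $Z_2$ (the number tied with $s_{N+1}$), writes the rank of $s_{N+1}$ as $Z_1+1+U$ with $U$ uniform on $\{0,\dots,Z_2\}$, invokes $\Pr(\mathrm{rank}\le k)=k/(N+1)$, and squeezes $\Pr(Z_1+1\le k)$ between $\Pr(Z_1+1+U\le k)$ and $\Pr(Z_1+1+T\le k)$. You instead avoid auxiliary randomization entirely: after the same reformulation $\{s_{N+1}\le s_{(k)}\}=\{L_{N+1}\le k-1\}$, you symmetrize over positions to reduce the target probability to $\frac{1}{N+1}\mathbb{E}\big[\sum_{j}\1{L_j\le k-1}\big]$, and then establish the deterministic, pointwise sandwich $k\le\sum_{j}\1{L_j\le k-1}\le k-1+T$ via the block decomposition. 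Your route buys two things: it sidesteps the need to justify that the randomized rank $Z_1+1+U$ is uniform on $\{1,\dots,N+1\}$ (which the paper asserts with little elaboration), and it yields the marginally sharper upper bound $(k-1+T)/(N+1)$ --- the paper gives up one unit by bounding $Z_2\le T$ where $Z_2\le T-1$ actually holds, since the tie group containing $s_{N+1}$ counts $s_{N+1}$ itself. The delicate steps you flagged all check out: the strict-versus-nonstrict bookkeeping in the reformulation is handled correctly in both directions, and the collapse of $\sum_{\ell}n_\ell\,\1{N_{\ell-1}<k}$ to $N_{\ell^\ast}$ with $k\le N_{\ell^\ast}\le (k-1)+T$ is exactly where the bounded-tie hypothesis does its work (noting only that $k\ge 1$ is needed so that $\ell^\ast\ge 1$, which is implicit throughout).
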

where $s_{(k)}$ is the $k$'th order statistic of $\{s_1, ..., s_N\}$.
\begin{proof}
Imagine an ascending sort of $S$, breaking ties arbitrarily. Let the rank of $s_i$ denote its index in the sorted version of $S$. By exchangeability, $\Pr(s_{N+1} \mbox{ ranked } k) = 1/(N+1)$ for each $k \in \{1, ..., N+1\}$ implying
\begin{equation}
\label{eq: informal_rank}
\Pr(s_{N+1} \mbox{ ranked } \leq k) = k/(N+1).
\end{equation}

Defining $Z_1 = \sum_{i=1}^N \1{s_i < s_{N+1}}$, the rank of $s_{N+1}$ is $Z_1 + 1 + U$: $Z_1 + 1$ as $s_{N+1}$ must appear after $s_i$ strictly less than it and $U$ is a uniform random variable on $\{0, 1, ..., Z_2\}$ to model random tie splitting. With this notation, Eq.~\ref{eq: informal_rank} becomes
\begin{equation}
\label{eq: formal_rank}
\Pr(Z_1 + 1 + U \leq k) = k/(N+1).
\end{equation}

Since $s_{N+1} \leq s_{(k)}$ when at most $k-1$ of $s_1, ..., s_N$ violate $s_{N+1} \leq s_i$ (i.e., $s_i < s_{N+1}$ holds),
\begin{align}
s_{N+1} \leq s_{(k)} \iff Z_1 = \sum_{i=1}^N \1{s_i < s_{N+1}} \leq k-1
\end{align}
\begin{align}
\label{eq: z1_to_s}
\implies \Pr(s_{N+1} \leq s_{(k)}) = \Pr(Z_1 + 1 \leq k).
\end{align}

Since $0 \leq U \leq Z_2 \leq T$,
\begin{align}
    \Pr(Z_1 + 1 \leq k) \geq \Pr(Z_1 + 1 + U \leq k) \\ \nonumber
    \geq \Pr(Z_1 + 1 + T \leq k).
\end{align}
By Eq.~\ref{eq: formal_rank} and the above,
\begin{align}
    \label{eq: z1_sandwich}
    \Pr(Z_1+1 \leq k) \geq k/(N+1) \geq \Pr(Z_1+1 \leq k-T).
\end{align}
Eq.~\ref{eq: z1_to_s} and the left side of Eq.~\ref{eq: z1_sandwich} yield $\Pr(s_{N+1} \leq s_{(k)}) \geq k/(N+1)$. Relabeling $k$ to $k+T$ in the right side of Eq.~\ref{eq: z1_sandwich} and applying Eq.~\ref{eq: z1_to_s} yields
\begin{equation}
(k+T)/(N+1) \geq \Pr(Z_1+1 \leq k) = \Pr(s_{N+1} \leq s_{(k)}).
\end{equation}
\end{proof}


\begin{lemma}
\label{lemma: order_stat_comp_2}
Consider sets $\{s_1, ..., s_N\}$ and $\{\alpha_1, ..., \alpha_N\}$. Suppose $\forall i: s_{N+1} \leq s_i \implies s_{N+1} \leq \alpha_i$. Then, $\forall k: s_{N+1} \leq s_{(k)} \implies s_{N+1} \leq \alpha_{(k)}$.
\end{lemma}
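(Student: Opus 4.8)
The plan is to translate both order-statistic inequalities into statements about the sizes of index sets, and then invoke the hypothesis purely as a set inclusion. The only genuine ``content'' is an elementary characterization of when a value lies at or below the $k$-th order statistic.

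First I would record that lemma: for any finite list $t_1,\dots,t_N$ and any value $v$, one has $v \le t_{(k)}$ if and only if $\#\{i : t_i \ge v\} \ge N-k+1$ (counting indices with multiplicity). The forward direction is immediate from $t_{(k)} \le t_{(k+1)} \le \cdots \le t_{(N)}$: if $v \le t_{(k)}$ then $v \le t_{(j)}$ for every $j \ge k$, exhibiting $N-k+1$ entries that dominate $v$. For the reverse direction, if fewer than $N-k+1$ entries satisfy $t_i \ge v$, then at least $k$ entries satisfy $t_i < v$, forcing $t_{(1)},\dots,t_{(k)}$ all below $v$, i.e. $t_{(k)} < v$.

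Now I apply this twice. Using the forward direction with $t_i = s_i$ and $v = s_{N+1}$, the assumption $s_{N+1} \le s_{(k)}$ gives $\#\{i : s_i \ge s_{N+1}\} \ge N-k+1$. The hypothesis of the lemma says exactly that $s_{N+1} \le s_i \implies s_{N+1} \le \alpha_i$, i.e. $\{i : s_i \ge s_{N+1}\} \subseteq \{i : \alpha_i \ge s_{N+1}\}$, so $\#\{i : \alpha_i \ge s_{N+1}\} \ge N-k+1$ as well. Applying the reverse direction with $t_i = \alpha_i$ and $v = s_{N+1}$ then yields $s_{N+1} \le \alpha_{(k)}$, which is the claim; since $k$ was arbitrary this holds for all $k$.

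I do not expect a real obstacle here. The one place warranting care is ties in the order-statistic characterization, so I would phrase that equivalence in terms of counting indices (with multiplicity) rather than reasoning loosely about ``sorted positions''; after that the argument is just bookkeeping with the set inclusion.
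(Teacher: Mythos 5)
Your proof is correct and follows essentially the same route as the paper's: both arguments reduce $s_{N+1} \leq t_{(k)}$ to a count of indices with $t_i \geq s_{N+1}$ (equivalently, at most $k-1$ indices with $t_i < s_{N+1}$), use the hypothesis as the set inclusion $\{i : s_i \geq s_{N+1}\} \subseteq \{i : \alpha_i \geq s_{N+1}\}$, and convert the count back into the order-statistic inequality for the $\alpha_i$. Your explicit care about ties via counting with multiplicity matches the paper's use of indicator sums.
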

\begin{proof}
Assuming $s_{N+1} \leq s_{(k)}$, $< k$ $\{s_i\}_{i=1}^N$ satisfy $s_i < s_{N+1}$, so $>N-k$ satisfy $s_{N+1} \leq s_i$:
\begin{equation}
\sum_{i=1}^N \1{s_{N+1} \leq s_i} > N-k.
\end{equation}
Since $s_{N+1} \leq s_i \implies s_{N+1} \leq \alpha_i$, 
\begin{equation}
\sum_{i=1}^N \1{s_{N+1} \leq \alpha_i} > N-k \implies \sum_{i=1}^N \1{\alpha_i < s_{N+1}} < k
\end{equation}
and so $s_{N+1} \leq \alpha_{(k)}$.
\end{proof}

\subsection{Proof of Theorem \ref{thm: main_cp}}
\begin{proof}
Let $D_{-i} = \{x_1, x_2, ..., x_{i-1}, x, x_{i+1}, ..., x_N\}$ formed by swapping $x \coloneqq x_{N+1}$ into the $i$'th position of $D$. Consider the $i$-th score generated in full conformal prediction with the nearest neighbor score function (Eq.~\ref{eq: score func}):
\begin{subequations}
\label{eq: s_to_alpha_bound}
\begin{align}
    s_i^x = s(D_{-i}; x_i) = \min_{x' \in D_{-i}} d(x', x_i) = \\ \min\{\alpha_i, d(x, x_i)\} \leq \alpha_i
\end{align}
\end{subequations}
by splitting to within the data $\alpha_i = \min_{x' \in D, x' \neq x_i} d(x', x_i)$ and with the test point $d(x, x_i)$. Since $s_i^x \leq \alpha_i \ \forall i$,
\begin{equation}
s_{N+1}^x \leq s_i^x \implies s_{N+1}^x \leq \alpha_i
\end{equation} and so by Lemma~\ref{lemma: order_stat_comp_2},
\begin{equation}
\label{eq: s_to_alpha}
s_{N+1}^x \leq s_{(k)}^x \implies s_{N+1}^x \leq \alpha_{(k)}.
\end{equation}
Hence, also applying Lemma~\ref{lemma: cp_ties}, 
\begin{equation}
    \Pr(s_{N+1}^x \leq \alpha_{(k)}) \geq \Pr(s_{N+1}^x \leq s_{(k)}^x) \geq k/(N+1).
\end{equation}
$k = k(\epsilon)$ from Eq.~\ref{eq: k_epsilon} satisfies $k/(N+1) \geq 1-\epsilon$ so as $s_{N+1}^x = S(D; x)$, $\alpha_{(k)} = r$,
\begin{equation}
\Pr(s(D; x) \leq r) \geq 1-\epsilon.
\end{equation}
\end{proof}

\subsection{Proof of Theorem \ref{thm: union_cp}}
\begin{proof}
The condition $s(D;x) \leq r$ defining $C(\epsilon)$ means
\begin{equation}
    \min_{x' \in D} d(x', x) \leq r \iff \exists i \in \{1, ..., N\}: d(x_i, x) \leq r
\end{equation}
Grouping by $x_i$, we obtain $C_i$ by requiring $d(x_i, x) \leq r$, yielding $C(\epsilon)$ as the union of $C_i$ sets.
\end{proof}

\subsection{Proof of Theorem \ref{thm: overcoverage_sym}}
\begin{proof}
With no pairwise distance ties, almost surely (with probability 1) each $x_i$ has a unique nearest neighbor and $F$ has zero probability of repeatedly sampling a point.



We'll show $Z_2 \leq 1$ almost surely. Suppose $Z_2 > 1$. Then $\exists j \neq i \in \{1, ..., N\}$ such that $s_i^x = s_j^x = s_{N+1}^x$. $s_i^x$, $s_j^x$, $s_{N+1}^x$ almost surely have unique minimizers $x_i^*, x_j^*, x_{N+1}^*$ satisfying
\begin{equation}
\label{eq: common_dist}
d(x_i^*, x_i) = d(x_j^*, x_j) = d(x_{N+1}^*, x_{N+1}).
\end{equation}
$d(x_i^*, x_i) = d(x_j^*, x_j)$ occurs (with nonzero probability) when $x_i^* = x_j, x_j^* = x_i$ (i.e., $x_i, x_j$ are each others nearest neighbors). Similarly, $d(x_j^*, x_j) = d(x_{N+1}^*, x_{N+1})$ when $x_j^* = x_{N+1}$, $x_{N+1}^* = x_j$. Therefore, \begin{equation}
    x_j^* = x_i, x_j^* = x_{N+1} \implies x_i = x_{N+1}
\end{equation}
which occurs with probability $0$. So $Z_2 > 1$ requires conditions occuring with probability zero. Lemma~\ref{lemma: cp_ties} with $T = 1$ implies 
\begin{equation}
\label{eq: s_upper_bound}
\Pr(s_{N+1}^x \leq s_{(k)}^x) \leq (k+1)/(N+1).
\end{equation}

Since $d$ is symmetric,
\begin{equation}
    s_{N+1}^x = \min_j d(x_j, x) \leq d(x_i, x) = d(x, x_i).
\end{equation}
Hence, if $s_{N+1}^x \leq \alpha_i$, then $s_{N+1}^x \leq \min \{\alpha_i, d(x, x_i)\} = s_i^x$ (see Theorem~\ref{thm: main_cp}). Conclude $s_{N+1}^x \leq \alpha_i \implies s_{N+1}^x \leq s_i^x$. The converse also holds since $s_i^x \leq \alpha_i$ (see Theorem~\ref{thm: main_cp}) so
\begin{equation}
s_{N+1}^x \leq s_i^x \iff s_{N+1}^x \leq \alpha_i.
\end{equation}
Applying Lemma~\ref{lemma: order_stat_comp_2} twice (for each implication direction),
\begin{equation}
s_{N+1}^x \leq s_{(k)}^x \iff s_{N+1}^x \leq \alpha_{(k)}
\end{equation}
and so by Eq.~\ref{eq: s_upper_bound}
\begin{equation}
\label{eq: alpha_bound}
\Pr(s_{N+1}^x \leq \alpha_{(k)}) = \Pr(s_{N+1}^x \leq s_{(k)}^x) \leq (k+1)/(N+1).
\end{equation}

$k(\epsilon) = \lceil(1-\epsilon)(N+1)\rceil \leq (1-\epsilon)(N+1) + 1$ implies $\frac{k+1}{N+1} \leq (1-\epsilon) + 2/(N+1)$. Eq.~\ref{eq: alpha_bound} with $s_{N+1}^x = S(D; x)$, $k = k(\epsilon), \alpha_{(k)} = r$ yields
\begin{equation}
\Pr(s(D; x) \leq r) \leq 1-\epsilon + 2/(N+1).
\end{equation}
\end{proof}

\begin{remark}
This bound can be tight. As $\epsilon \rightarrow 1$, $k(\epsilon) = 1$, so 
\begin{equation}
    r = \alpha_{(1)} = \min_{x_i \neq x_j \in D} d(x_i, x_j) = \alpha_{(2)}
\end{equation}
as the minimizing $(x_i, x_j)$ each have an $\alpha_i, \alpha_j$ term. As $r = \alpha_{(2)}$, using $k = 2$ shows $C(\epsilon)$ provides $\geq 2/(N+1)$ coverage.
\end{remark}

\subsection{Proof of Theorem \ref{thm: overcoverage_asym}}

\begin{proof}
Because $\{d(x_i,x_j)\}_{i \neq j}$ have no ties, $Z_2 = 0$ almost surely. Applying Lemma~\ref{lemma: cp_ties} with $T = 0$ yields \begin{equation}
\label{eq: s_bound_asym}
\Pr(s_{N+1}^x \leq s_{(k)}^x) = \frac{k}{N+1}.
\end{equation}


To bound the coverage we split into two cases:

\begin{align}
    \Pr(s_{N+1}^x \leq \alpha_{(k)}) = \Pr(s_{N+1}^x \leq \alpha_{(k)}, s_{(k)}^x \geq \alpha_{(k)}) + \\ \nonumber
    \Pr(s_{N+1}^x \leq \alpha_{(k)}, s_{(k)}^x < \alpha_{(k)})
\end{align}

We bound the first term using Eq.~\ref{eq: s_bound_asym}:
\begin{align}
\Pr(s_{N+1}^x \leq \alpha_{(k)}, s_{(k)}^x \geq \alpha_{(k)}) \leq \\ \nonumber
\Pr(s_{N+1}^x \leq s_{(k)}^x, s_{(k)}^x \geq \alpha_{(k)}) \leq \\ \nonumber
\Pr(s_{N+1}^x \leq s_{(k)}^x) = \frac{k}{N+1}
\end{align}

We bound the second term by
\begin{equation}
\Pr(s_{N+1}^x \leq \alpha_{(k)}, s_{(k)}^x < \alpha_{(k)}) \leq \Pr(s_{(k)}^x < \alpha_{(k)}).
\end{equation}

Imagine an ascending sort of $\{\alpha_i\}_{i=1}^N$ yielding sorted indices $(i_1, i_2, ..., i_N)$ where $\alpha_{i_k} = \alpha_{(k)}$. Since $s_i^x \leq \alpha_i \ \forall i$ (Eq.~\ref{eq: s_to_alpha_bound}), $s_{i_j}^x \leq \alpha_{i_j}$ for $j < k$. Without distance ties, $\alpha_{i_j} < \alpha_{i_k}$ for $j < k$, so $s_{i_j}^x < \alpha_{i_k} = \alpha_{(k)}$ almost surely. Almost surely,
\begin{subequations}
\label{eq: elim_min_equiv}
\begin{align}
    s_{(k)}^x < \alpha_{(k)} \iff \exists i_j, j \geq k: s_{i_j}^x < \alpha_{i_k} \iff \\
    \min\{s_{i_j}^x\}_{j=k:N} < \min \{\alpha_{i_j}\}_{j=k:N}.
\end{align}
\end{subequations}
Consider the distances used to form $s_{i_j}^x$ and $\alpha_{i_j}$ for $j = k, ..., N$. $\alpha_{i_j}$ takes the minimum over $N-1$ terms $D_j^\alpha := \{d(x_i, x_{i_j})\}_{i=1:N, i \neq i_j}$ while $s_{i_j}^x$ takes the minimum over $N$ terms $D_j^s = D_j^\alpha \cup \{d(x_{N+1}, x_{i_j})\}$. Thus, $\min\{s_{i_j}^x\}_{j=k:N}$ is the minimum among $(N - k + 1)N$ distances $D^s := \cup_{j=k:N} D_j^s$ while $\min\{\alpha_{i_j}\}_{j=k:N}$ is the minimum among subset $D^\alpha := \cup_{j=k:N} D_j^\alpha \subset D^s$ of size $(N - k + 1) (N - 1)$. When $\min\{s_{i_j}^x\}_{j=k:N} \in D^s - D^\alpha$, $\min\{s_{i_j}^x\}_{j=k:N} < \min\{\alpha_{i_j}\}_{j=k:N}$ (else they are equal). Each distance is equally likely to be $\min\{s_{i_j}^x\}_{j=k:N}$, so counting yields
\begin{subequations}
\begin{align}
\Pr(\min \{s_{i_j}^x\}_{j=k:N} < \min \{\alpha_{i_j}\}_{j=k:N}) = \\  \Pr(\min \{s_{i_j}^x\}_{j=k:N} \in D^s - D^\alpha) = \\ \frac{(N - k + 1)N - (N - k + 1)(N-1)}{(N - k + 1)N} = 1/N.
\end{align}
\end{subequations}

Using the equivalence in Eq.~\ref{eq: elim_min_equiv}, conclude
\begin{equation}
\Pr(s_{(k)}^x < \alpha_{(k)}) = 1/N.
\end{equation}

Combining with our bound on the first term,
\begin{equation}
\label{eq: alpha_bound_asym}
    \Pr(s_{N+1}^x \leq \alpha_{(k)}^x) \leq k/(N+1) + 1/N
\end{equation}

Using $k(\epsilon)$ as in Eq.~\ref{eq: k_epsilon} implies $\frac{k}{N+1} \leq 1-\epsilon + 1/(N+1)$. Eq.~\ref{eq: alpha_bound_asym} with $s_{N+1}^x = S(D; x)$, $k = k(\epsilon), \alpha_{(k)} = r$  yields
\begin{equation}
\Pr(s(D; x) \leq r) \leq 1-\epsilon + 1/(N+1) + 1/N.
\end{equation}

\end{proof}

\subsection{Proof of Theorem \ref{thm: dist_shift}}

\begin{proof}

From the proof of Theorem~\ref{thm: main_cp} (Eq.~\ref{eq: s_to_alpha}), 
\begin{equation}
\Pr[x_{N+1} \in C(\epsilon)] = \Pr[s_{N+1}^x \leq \alpha_{(k)}] \geq \Pr[s_{N+1}^x \leq s_{(k)}^x].
\end{equation}
The result follows from \cite{barber2023conformal} and using their Lemma 1.
\end{proof}

\subsection{Proof of Theorem \ref{thm: mix_shift}}

\begin{proof}

\begin{subequations}
\begin{gather}    
    d_{TV}(F, \Tilde{F}) 
    = \frac{1}{2} \int |F(x) - \Tilde{F}(x)| \ dx \\ 
    = \frac{1}{2} \int |\sum_{i=1}^m (w_i - \Tilde{w}_i) F_i(x)| \ dx \\ 
    \leq \frac{1}{2} \sum_{i=1}^m |w_i - \Tilde{w}_i| \int F_i(x) \ dx = \frac{1}{2} \sum_{i=1}^m |w_i - \Tilde{w}_i| 
\end{gather}
\end{subequations}

\end{proof}

\bibliographystyle{ieeetr} 
\bibliography{refs}

@article{LowEtAlRAL24_SousVide,
  title={Sous vide: Cooking visual drone navigation policies in a gaussian splatting vacuum},
  author={Low, JunEn and Adang, Maximilian and Yu, Javier and Nagami, Keiko and Schwager, Mac},
  journal={IEEE Robotics and Automation Letters},
  year={2025},
  publisher={IEEE}
}

@inproceedings{vovktransductiveCP,
author = {Gammerman, A. and Vovk, V. and Vapnik, V.},
title = {Learning by transduction},
year = {1998},
isbn = {155860555X},
publisher = {Morgan Kaufmann Publishers Inc.},
address = {San Francisco, CA, USA},
booktitle = {Proceedings of the Fourteenth Conference on Uncertainty in Artificial Intelligence},
pages = {148–155},
numpages = {8},
location = {Madison, Wisconsin},
series = {UAI'98}
}

@inproceedings{vovkinductiveCP,
  title={Inductive confidence machines for regression},
  author={Papadopoulos, Harris and Proedrou, Kostas and Vovk, Volodya and Gammerman, Alex},
  booktitle={Machine learning: ECML 2002: 13th European conference on machine learning Helsinki, Finland, August 19--23, 2002 proceedings 13},
  pages={345--356},
  year={2002},
  organization={Springer}
}

@inproceedings{vovk2012conditional,
  title={Conditional validity of inductive conformal predictors},
  author={Vovk, Vladimir},
  booktitle={Asian conference on machine learning},
  pages={475--490},
  year={2012},
  organization={PMLR}
}

@article{shafer2007tutorial,
  title={A tutorial on conformal prediction.},
  author={Shafer, Glenn and Vovk, Vladimir},
  journal={Journal of Machine Learning Research},
  volume={9},
  number={3},
  year={2008}
}

@article{angelopoulos2022gentle,
  title={A gentle introduction to conformal prediction and distribution-free uncertainty quantification},
  author={Angelopoulos, Anastasios N and Bates, Stephen},
  journal={arXiv preprint arXiv:2107.07511},
  year={2021}
}

@article{angelopoulos2024theoretical,
  title={Theoretical foundations of conformal prediction},
  author={Angelopoulos, Anastasios N and Barber, Rina Foygel and Bates, Stephen},
  journal={arXiv preprint arXiv:2411.11824},
  year={2024}
}

@article{barber2023conformal,
  title={Conformal prediction beyond exchangeability},
  author={Barber, Rina Foygel and Candes, Emmanuel J and Ramdas, Aaditya and Tibshirani, Ryan J},
  journal={The Annals of Statistics},
  volume={51},
  number={2},
  pages={816--845},
  year={2023},
  publisher={Institute of Mathematical Statistics}
}

@article{ndiaye2019computing,
  title={Computing full conformal prediction set with approximate homotopy},
  author={Ndiaye, Eugene and Takeuchi, Ichiro},
  journal={Advances in Neural Information Processing Systems},
  volume={32},
  year={2019}
}

@article{ndiaye2022,
author = {Ndiaye, Eugene and Takeuchi, Ichiro},
year = {2022},
month = {11},
pages = {},
title = {Root-finding approaches for computing conformal prediction set},
volume = {112},
journal = {Machine Learning},
doi = {10.1007/s10994-022-06233-5}
}

@article{vovk2012crossconformal,
  title={Cross-conformal predictors},
  author={Vovk, Vladimir},
  journal={Annals of Mathematics and Artificial Intelligence},
  volume={74},
  pages={9--28},
  year={2015},
  publisher={Springer}
}

@article{lei2018distribution,
  title={Distribution-free predictive inference for regression},
  author={Lei, Jing and G’Sell, Max and Rinaldo, Alessandro and Tibshirani, Ryan J and Wasserman, Larry},
  journal={Journal of the American Statistical Association},
  volume={113},
  number={523},
  pages={1094--1111},
  year={2018},
  publisher={Taylor \& Francis}
}

@article{gibbs2021,
  title={Adaptive conformal inference under distribution shift},
  author={Gibbs, Isaac and Candes, Emmanuel},
  journal={Advances in Neural Information Processing Systems},
  volume={34},
  pages={1660--1672},
  year={2021}
}

@article{gibbs2022conformal,
  title={Conformal inference for online prediction with arbitrary distribution shifts},
  author={Gibbs, Isaac and Candes, Emmanuel},
  journal={Journal of Machine Learning Research},
  volume={25},
  number={162},
  pages={1--36},
  year={2024}
}

@mastersthesis{hulsman2022distributionfreefinitesampleguaranteessplit,
    author = {Hulsman, Roel},
    title = {Distribution-Free Finite-Sample Guarantees and Split Conformal Prediction},
    school = {Department of Statistics, University of Oxford},
    year = {2022}
}

@article{guan2023localized,
  title={Localized conformal prediction: A generalized inference framework for conformal prediction},
  author={Guan, Leying},
  journal={Biometrika},
  volume={110},
  number={1},
  pages={33--50},
  year={2023},
  publisher={Oxford University Press}
}

@inproceedings{laxhammar2010conformal,
  title={Conformal prediction for distribution-independent anomaly detection in streaming vessel data},
  author={Laxhammar, Rikard and Falkman, G{\"o}ran},
  booktitle={Proceedings of the first international workshop on novel data stream pattern mining techniques},
  pages={47--55},
  year={2010}
}

@inproceedings{laxhammar2011,
  title={Sequential conformal anomaly detection in trajectories based on hausdorff distance},
  author={Laxhammar, Rikard and Falkman, G{\"o}ran},
  booktitle={14th international conference on information fusion},
  pages={1--8},
  year={2011},
  organization={IEEE}
}

@inproceedings{smith2014anomaly,
  title={Anomaly detection of trajectories with kernel density estimation by conformal prediction},
  author={Smith, James and Nouretdinov, Ilia and Craddock, Rachel and Offer, Charles and Gammerman, Alexander},
  booktitle={Artificial Intelligence Applications and Innovations: AIAI 2014 Workshops: CoPA, MHDW, IIVC, and MT4BD, Rhodes, Greece, September 19-21, 2014. Proceedings 10},
  pages={271--280},
  year={2014},
  organization={Springer}
}

@article{luo2023sampleefficient,
  title={Sample-efficient safety assurances using conformal prediction},
  author={Luo, Rachel and Zhao, Shengjia and Kuck, Jonathan and Ivanovic, Boris and Savarese, Silvio and Schmerling, Edward and Pavone, Marco},
  journal={The International Journal of Robotics Research},
  volume={43},
  number={9},
  pages={1409--1424},
  year={2024},
  publisher={SAGE Publications Sage UK: London, England}
}

@article{contreras2024outofdistributionruntimeadaptationconformalized,
  title={Out-of-Distribution Runtime Adaptation with Conformalized Neural Network Ensembles},
  author={Contreras, Polo and Shorinwa, Ola and Schwager, Mac},
  journal={arXiv preprint arXiv:2406.02436},
  year={2024}
}

@inproceedings{sinha2023closingloopruntimemonitors,
  title={Closing the loop on runtime monitors with Fallback-Safe MPC},
  author={Sinha, Rohan and Schmerling, Edward and Pavone, Marco},
  booktitle={2023 62nd IEEE Conference on Decision and Control (CDC)},
  pages={6533--6540},
  year={2023},
  organization={IEEE}
}

@article{lindemann2023safe,
  title={Safe planning in dynamic environments using conformal prediction},
  author={Lindemann, Lars and Cleaveland, Matthew and Shim, Gihyun and Pappas, George},
  journal={IEEE Robotics and Automation Letters},
  year={2023},
  publisher={IEEE}
}

@inproceedings{cleaveland2023conformal,
  title={Conformal prediction regions for time series using linear complementarity programming},
  author={Cleaveland, Matthew and Lee, Insup and Pappas, George and Lindemann, Lars},
  booktitle={Proceedings of the AAAI Conference on Artificial Intelligence},
  volume={38},
  number={19},
  pages={20984--20992},
  year={2024}
}

@article{strawn2023conformal,
  title={Conformal predictive safety filter for rl controllers in dynamic environments},
  author={Strawn, Kegan and Ayanian, Nora and Lindemann, Lars},
  journal={IEEE Robotics and Automation Letters},
  year={2023},
  publisher={IEEE}
}

@inproceedings{dixit2022adaptive,
  title={Adaptive conformal prediction for motion planning among dynamic agents},
  author={Dixit, Anushri and Lindemann, Lars and Wei, Skylar and Cleaveland, Matthew and Pappas, George and Burdick, Joel},
  booktitle={Learning for Dynamics and Control Conference},
  pages={300--314},
  year={2023},
  organization={PMLR}
}

@inproceedings{muthali2023multiagent,
  title={Multi-agent reachability calibration with conformal prediction},
  author={Muthali, Anish and Shen, Haotian and Deglurkar, Sampada and Lim, Michael H and Roelofs, Rebecca and Faust, Aleksandra and Tomlin, Claire},
  booktitle={2023 62nd IEEE Conference on Decision and Control (CDC)},
  pages={6596--6603},
  year={2023},
  organization={IEEE}
}

@inproceedings{stamouli2024recursively,
  title={Recursively feasible shrinking-horizon mpc in dynamic environments with conformal prediction guarantees},
  author={Stamouli, Charis and Lindemann, Lars and Pappas, George},
  booktitle={6th Annual Learning for Dynamics \& Control Conference},
  pages={1330--1342},
  year={2024},
  organization={PMLR}
}

@article{xie2024safe,
  title={Safe MPC Alignment with Human Directional Feedback},
  author={Xie, Zhixian and Zhang, Wenlong and Ren, Yi and Wang, Zhaoran and Pappas, George and Jin, Wanxin},
  journal={arXiv preprint arXiv:2407.04216},
  year={2024}
}

@inproceedings{Saveriano_2019,
	doi = {10.1109/iros40897.2019.8967981},
  
	url = {https://doi.org/10.1109%2Firos40897.2019.8967981},
  
	year = 2019,
	month = {nov},
  
	publisher = {{IEEE}
},
  
	author = {Matteo Saveriano and Dongheui Lee},
  
	title = {Learning Barrier Functions for Constrained Motion Planning with Dynamical Systems},
  
	booktitle = {2019 {IEEE}/{RSJ} International Conference on Intelligent Robots and Systems ({IROS})}
}

@article{chou2022gaussian,
  title={Gaussian Process Constraint Learning for Scalable Chance-Constrained Motion Planning from Demonstrations},
  author={Chou, Glen and Wang, Hao and Berenson, Dmitry},
  journal={IEEE Robotics and Automation Letters},
  volume={7},
  number={2},
  pages={3827--3834},
  year={2022},
  publisher={IEEE}
}

@ARTICLE{Thananjeyan2021,
  author={Thananjeyan, Brijen and Balakrishna, Ashwin and Nair, Suraj and Luo, Michael and Srinivasan, Krishnan and Hwang, Minho and Gonzalez, Joseph E. and Ibarz, Julian and Finn, Chelsea and Goldberg, Ken},
  journal={IEEE Robotics and Automation Letters}, 
  title={Recovery RL: Safe Reinforcement Learning With Learned Recovery Zones}, 
  year={2021},
  volume={6},
  number={3},
  pages={4915-4922},
  doi={10.1109/LRA.2021.3070252}}

@inproceedings{chou2019learning,
  title={Learning Constraints from Demonstrations},
  author={Chou, Glen and Berenson, Dmitry and Ozay, Necmiye},
  booktitle={Algorithmic Foundations of Robotics XIII: Proceedings of the 13th Workshop on the Algorithmic Foundations of Robotics 13},
  pages={228--245},
  year={2020},
  organization={Springer}
}

@article{srinivasan2020learning,
  title={Learning to be Safe: Deep RL with a Safety Critic},
  author={Srinivasan, Krishnan and Eysenbach, Benjamin and Ha, Sehoon and Tan, Jie and Finn, Chelsea},
  journal={arXiv preprint arXiv:2010.14603},
  year={2020}
}

@inproceedings{bharadhwaj2021conservative,
  title={Conservative Safety Critics for Exploration},
  author={Bharadhwaj, Homanga and Kumar, Aviral and Rhinehart, Nicholas and Levine, Sergey and Shkurti, Florian and Garg, Animesh},
  year={2021},
  booktitle={International Conference on Learning Representations}
}

@inproceedings{scobee2020maximum,
  title={Maximum Likelihood Constraint Inference for Inverse Reinforcement Learning},
  author={Scobee, Dexter and Sastry, Shankar},
  year={2020},
  booktitle={International Conference on Learning Representations}
}

@article{stocking2021discretizing,
  title={Discretizing Dynamics for Maximum Likelihood Constraint Inference},
  author={Stocking, Kaylene and McPherson, David and Matthew, Robert P and Tomlin, Claire},
  journal={arXiv preprint arXiv:2109.04874},
  year={2021}
}

@INPROCEEDINGS{stocking2022,
  author={Stocking, Kaylene and McPherson, Livingston and Matthew, Robert and Tomlin, Claire},
  booktitle={2022 International Conference on Robotics and Automation (ICRA)}, 
  title={Maximum Likelihood Constraint Inference on Continuous State Spaces}, 
  year={2022},
  volume={},
  number={},
  pages={8598-8604},
  doi={10.1109/ICRA46639.2022.9811705}}

@article{chou2020learning,
      title={Learning Constraints from Locally-Optimal Demonstrations under Cost Function Uncertainty},
  author={Chou, Glen and Ozay, Necmiye and Berenson, Dmitry},
  journal={IEEE Robotics and Automation Letters},
  volume={5},
  number={2},
  pages={3682--3690},
  year={2020},
  publisher={IEEE}
}

@inproceedings{anwar2021inverse,
  title={Inverse Constrained Reinforcement Learning},
  author={Malik, Shehryar and Anwar, Usman and Aghasi, Alireza and Ahmed, Ali},
  booktitle={International Conference on Machine Learning},
  pages={7390--7399},
  year={2021},
  organization={PMLR}
}

@inproceedings{chou2019learningHighDim,
  title={Learning Parametric Constraints in High Dimensions from Demonstrations},
  author={Chou, Glen and Ozay, Necmiye and Berenson, Dmitry},
  booktitle={Conference on Robot Learning},
  pages={1211--1230},
  year={2020},
  organization={PMLR}
}

@article{thananjeyan2020safety,
  title={Safety augmented value estimation from demonstrations (saved): Safe deep model-based rl for sparse cost robotic tasks},
  author={Thananjeyan, Brijen and Balakrishna, Ashwin and Rosolia, Ugo and Li, Felix and McAllister, Rowan and Gonzalez, Joseph E and Levine, Sergey and Borrelli, Francesco and Goldberg, Ken},
  journal={IEEE Robotics and Automation Letters},
  volume={5},
  number={2},
  pages={3612--3619},
  year={2020},
  publisher={IEEE}
}

@INPROCEEDINGS{negar2016,
  author={Mehr, Negar and Horowitz, Roberto and Dragan, Anca D.},
  booktitle={2016 IEEE 55th Conference on Decision and Control (CDC)}, 
  title={Inferring and assisting with constraints in shared autonomy}, 
  year={2016},
  volume={},
  number={},
  pages={6689-6696},
  doi={10.1109/CDC.2016.7799299}}

@inproceedings{Poletti2023,
author = {Poletti, Silvia and Testolin, Alberto and Tschiatschek, Sebastian},
title = {Learning Constraints From Human Stop-Feedback in Reinforcement Learning},
year = {2023},
isbn = {9781450394321},
publisher = {International Foundation for Autonomous Agents and Multiagent Systems},
address = {Richland, SC},
pages = {2328–2330},
numpages = {3},
keywords = {safety, constraint learning, reinforcement learning, human feedback},
location = {London, United Kingdom},
series = {AAMAS '23}
}

@article{spencer2022expert,
  title={Expert Intervention Learning: An online framework for robot learning from explicit and implicit human feedback},
  author={Spencer, Jonathan and Choudhury, Sanjiban and Barnes, Matthew and Schmittle, Matthew and Chiang, Mung and Ramadge, Peter and Srinivasa, Sidd},
  journal={Autonomous Robots},
  pages={1--15},
  year={2022},
  publisher={Springer}
}

@article{Akaike1954AnAT,
  title={An approximation to the density function},
  author={Akaike, Hirotugu},
  journal={Annals of the Institute of Statistical Mathematics},
  volume={6},
  number={2},
  pages={127--132},
  year={1954}
}

@article{Biau2015,
  title={The k-nearest neighbor density estimate},
  author={Biau, G{\'e}rard and Devroye, Luc and Biau, G{\'e}rard and Devroye, Luc},
  journal={Lectures on the Nearest Neighbor Method},
  pages={25--32},
  year={2015},
  publisher={Springer}
}

@book{mood1974introduction,
  author    = {Mood, A. M. and Graybill, F. A. and Boes, D. C.},
  title     = {Introduction to the Theory of Statistics},
  edition   = {3rd},
  year      = {1974},
  publisher = {McGraw-Hill},
  chapter   = {9.2}
}

@article{Lei2013DistributionFreePS,
  title={Distribution-Free Prediction Sets},
  author={Jing Lei and James M. Robins and Larry A. Wasserman},
  journal={Journal of the American Statistical Association},
  year={2013},
  volume={108},
  pages={278 - 287},
  url={https://api.semanticscholar.org/CorpusID:17499892}
}

@inproceedings{turk1991face,
  author    = {Turk, Matthew A. and Pentland, Alex P.},
  title     = {Face Recognition Using Eigenfaces},
  booktitle = {Proc. IEEE Conference on Computer Vision and Pattern Recognition},
  pages     = {586--591},
  year      = {1991},
  doi       = {10.1109/CVPR.1991.139758},
  isbn      = {0-8186-2148-6}
}

@INPROCEEDINGS{kernelface,
  author={Yang, M.-H. and Ahuja, N. and Kriegman, D.},
  booktitle={Proceedings 2000 International Conference on Image Processing (Cat. No.00CH37101)}, 
  title={Face recognition using kernel eigenfaces}, 
  year={2000},
  volume={1},
  number={},
  pages={37-40 vol.1},
  keywords={Face recognition;Kernel;Principal component analysis;Face detection;Higher order statistics;Pixel;Radio frequency;Vehicle detection;Object recognition;Support vector machines},
  doi={10.1109/ICIP.2000.900886}}

@inproceedings{lmnn,
 author = {Weinberger, Kilian Q and Blitzer, John and Saul, Lawrence},
 booktitle = {Advances in Neural Information Processing Systems},
 editor = {Y. Weiss and B. Sch\"{o}lkopf and J. Platt},
 pages = {},
 publisher = {MIT Press},
 title = {Distance Metric Learning for Large Margin Nearest Neighbor Classification},
 url = {https://proceedings.neurips.cc/paper_files/paper/2005/file/a7f592cef8b130a6967a90617db5681b-Paper.pdf},
 volume = {18},
 year = {2005}
}

@inproceedings{clip,
  title={Learning transferable visual models from natural language supervision},
  author={Radford, Alec and Kim, Jong Wook and Hallacy, Chris and Ramesh, Aditya and Goh, Gabriel and Agarwal, Sandhini and Sastry, Girish and Askell, Amanda and Mishkin, Pamela and Clark, Jack and others},
  booktitle={International conference on machine learning},
  pages={8748--8763},
  year={2021},
  organization={PMLR}
}

@inproceedings{dino,
  title={Emerging properties in self-supervised vision transformers},
  author={Caron, Mathilde and Touvron, Hugo and Misra, Ishan and J{\'e}gou, Herv{\'e} and Mairal, Julien and Bojanowski, Piotr and Joulin, Armand},
  booktitle={Proceedings of the IEEE/CVF international conference on computer vision},
  pages={9650--9660},
  year={2021}
}

@article{dinov2,
  title={Dinov2: Learning robust visual features without supervision},
  author={Oquab, Maxime and Darcet, Timoth{\'e}e and Moutakanni, Th{\'e}o and Vo, Huy and Szafraniec, Marc and Khalidov, Vasil and Fernandez, Pierre and Haziza, Daniel and Massa, Francisco and El-Nouby, Alaaeldin and others},
  journal={arXiv preprint arXiv:2304.07193},
  year={2023}
}

@article{Ye_gsplat,
  title={gsplat: An open-source library for Gaussian splatting},
  author={Ye, Vickie and Li, Ruilong and Kerr, Justin and Turkulainen, Matias and Yi, Brent and Pan, Zhuoyang and Seiskari, Otto and Ye, Jianbo and Hu, Jeffrey and Tancik, Matthew and others},
  journal={arXiv preprint arXiv:2409.06765},
  year={2024}
}

@inproceedings{nerfstudio,
  title={Nerfstudio: A modular framework for neural radiance field development},
  author={Tancik, Matthew and Weber, Ethan and Ng, Evonne and Li, Ruilong and Yi, Brent and Wang, Terrance and Kristoffersen, Alexander and Austin, Jake and Salahi, Kamyar and Ahuja, Abhik and others},
  booktitle={ACM SIGGRAPH 2023 Conference Proceedings},
  pages={1--12},
  year={2023}
}

@article{chen2024splatnavsaferealtimerobot,
  title={Splat-nav: Safe real-time robot navigation in gaussian splatting maps},
  author={Chen, Timothy and Shorinwa, Ola and Bruno, Joseph and Yu, Javier and Zeng, Weijia and Nagami, Keiko and Dames, Philip and Schwager, Mac},
  journal={arXiv preprint arXiv:2403.02751},
  year={2024}
}

@article{kerbl20233d,
  title={3D Gaussian splatting for real-time radiance field rendering.},
  author={Kerbl, Bernhard and Kopanas, Georgios and Leimk{\"u}hler, Thomas and Drettakis, George},
  journal={ACM Trans. Graph.},
  volume={42},
  number={4},
  pages={139--1},
  year={2023}
}

@article{polyanskiy2015strong,
  title={Strong data-processing inequalities for channels and Bayesian networks},
  author={Polyanskiy, Yury and Wu, Yihong},
  journal={arXiv preprint arXiv:1508.06025},
  year={2015}
}

@inproceedings{ghosh2023improving,
  title={Improving uncertainty quantification of deep classifiers via neighborhood conformal prediction: Novel algorithm and theoretical analysis},
  author={Ghosh, Subhankar and Belkhouja, Taha and Yan, Yan and Doppa, Janardhan Rao},
  booktitle={Proceedings of the AAAI Conference on Artificial Intelligence},
  volume={37},
  number={6},
  pages={7722--7730},
  year={2023}
}

@article{vincent2024guaranteesrobotperformanceusing,
  title={Guarantees on robot system performance using stochastic simulation rollouts},
  author={Vincent, Joseph A and Feldman, Aaron O and Schwager, Mac},
  journal={IEEE Transactions on Robotics},
  year={2024},
  publisher={IEEE}
}

@article{tibshirani2020conformalpredictioncovariateshift,
  title={Conformal prediction under covariate shift},
  author={Tibshirani, Ryan J and Foygel Barber, Rina and Candes, Emmanuel and Ramdas, Aaditya},
  journal={Advances in neural information processing systems},
  volume={32},
  year={2019}
}

\section{Biography Section}
 
\vspace{11pt}

\begin{IEEEbiography}[{\includegraphics[width=1in,height=1.25in,clip,keepaspectratio]{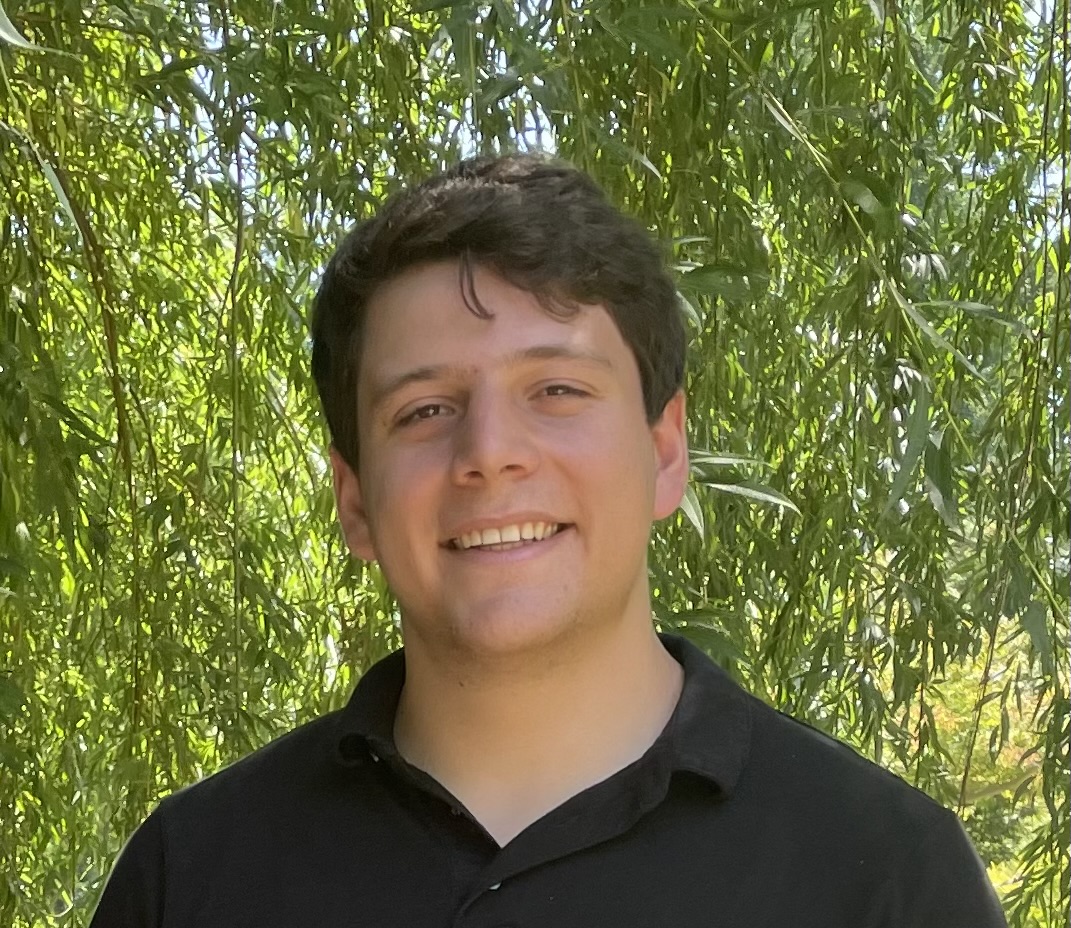}}]{Aaron O. Feldman}
received the B.S. degree in Information and Data Sciences from the California Institute of Technology in 2022 and is now pursuing a Ph.D. degree in Aeronautics and Astronautics at Stanford University. His current research focuses on statistical methods to guarantee safety and improve performance for robots operating under uncertainty.
\end{IEEEbiography}

\begin{IEEEbiography}
[{\includegraphics[width=1in,height=1.25in,clip,keepaspectratio]{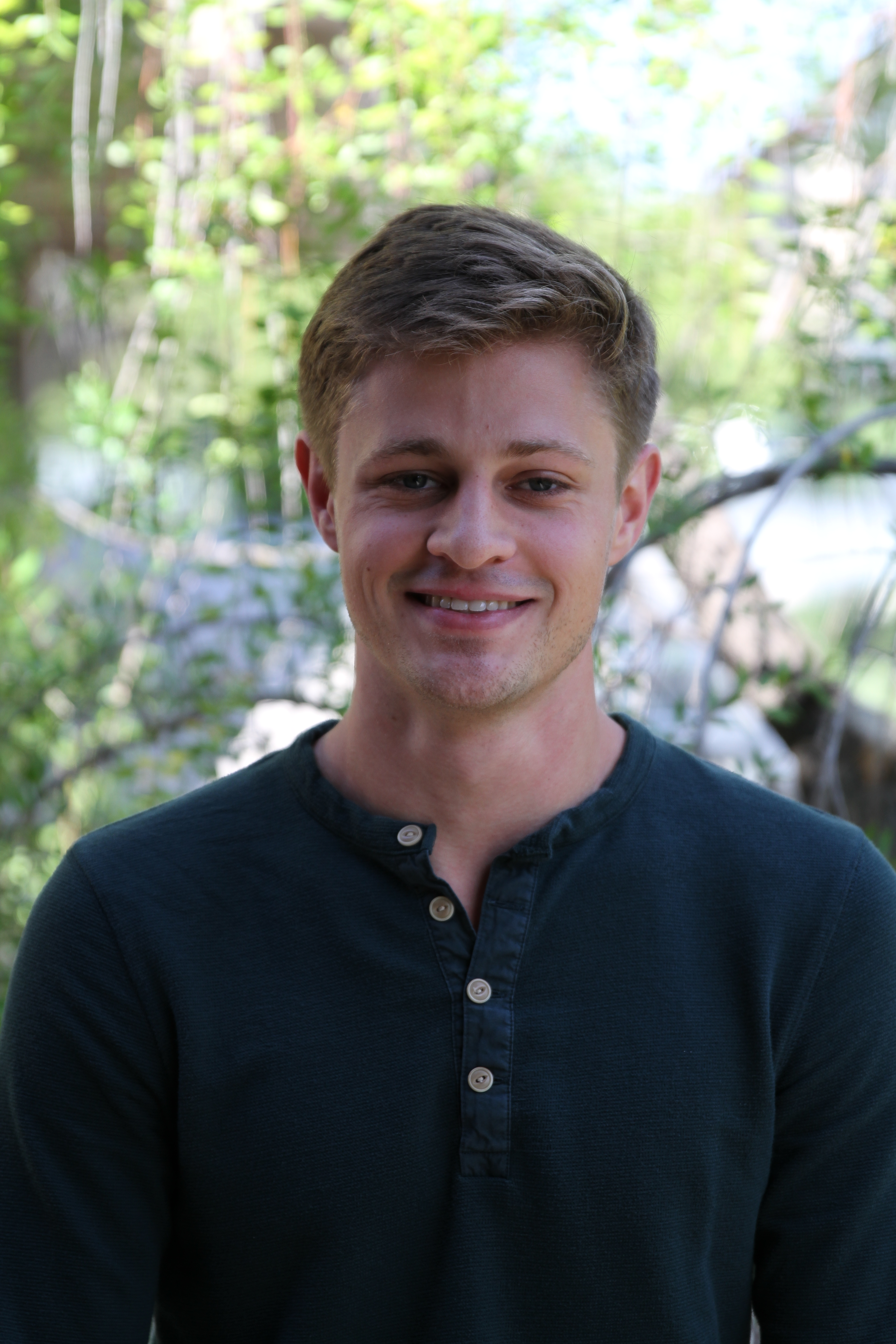}}]{Joseph A. Vincent}
received a B.S. degree in Aerospace Engineering from the University of Kansas in 2018, and M.S. and Ph.D. degrees in Aeronautics and Astronautics from Stanford University in 2020 and 2024, respectively. 
His research interests include evaluation of robotic systems using statistical and reachability-based methods.
\end{IEEEbiography}

\begin{IEEEbiography}
[{\includegraphics[width=1in,height=1.25in,clip,keepaspectratio]{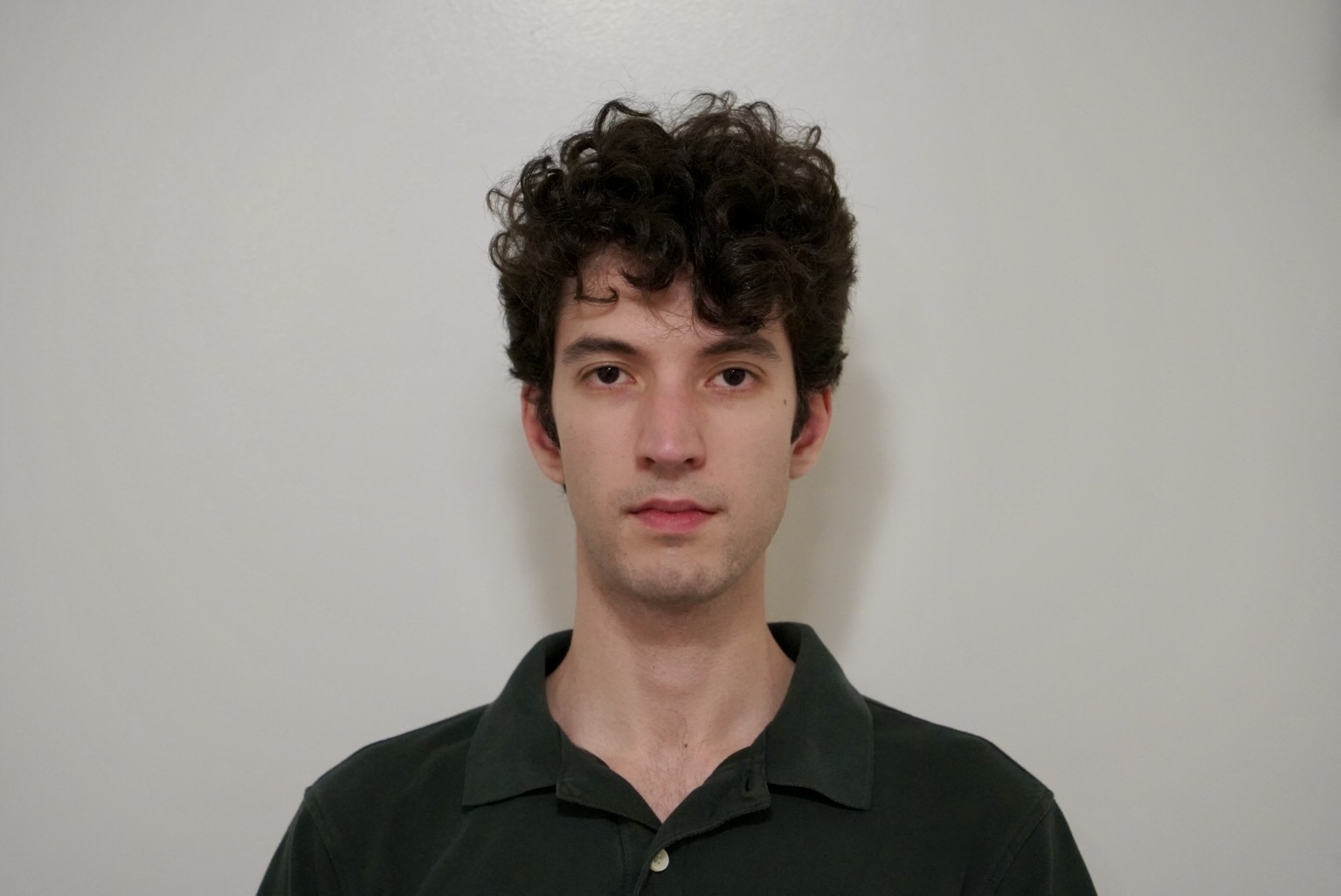}}]{Maximilian Adang}
received a B.S. degree in Mechanical Engineering from the California Institute of Technology in 2023 and is now a Ph.D. student in Aeronautics and Astronautics at Stanford University. His research investigates the application of end-to-end machine learning methods to guidance, navigation, and control systems used in UAVs and robotic manipulation on-orbit.
\end{IEEEbiography}

\begin{IEEEbiography}
[{\includegraphics[width=1in,height=1.25in,clip,keepaspectratio]{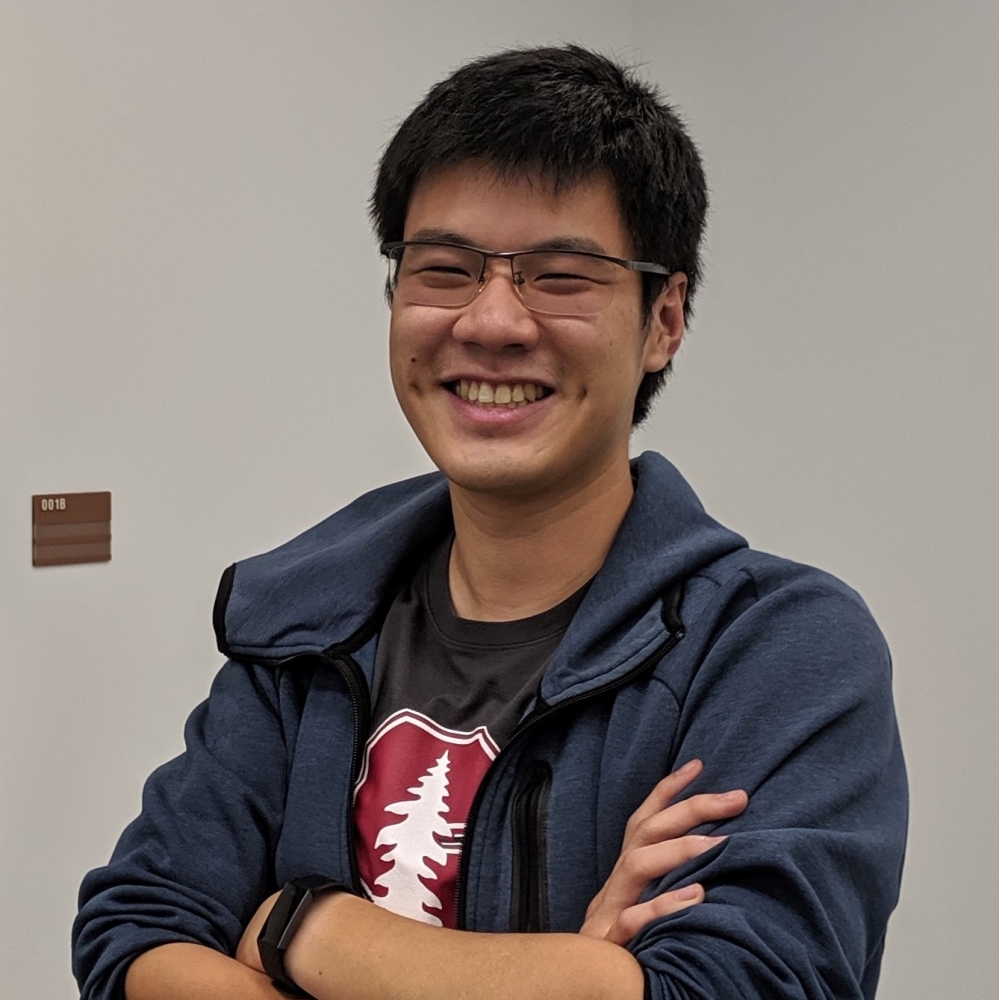}}]{JunEn Low}
is a graduate student in Mechanical Engineering at Stanford University. He received his B.Eng from the Singapore University of Technology and Design (SUTD) in 2015 where he worked on the dynamic modeling and control of unmanned aerial vehicles (UAVs). His current research interests are focused on zero-shot and end-to-end behavior cloning of visuomotor policies for UAVs.
\end{IEEEbiography}

\begin{IEEEbiography}[{\includegraphics[width=1in,height=1.25in,clip,keepaspectratio]{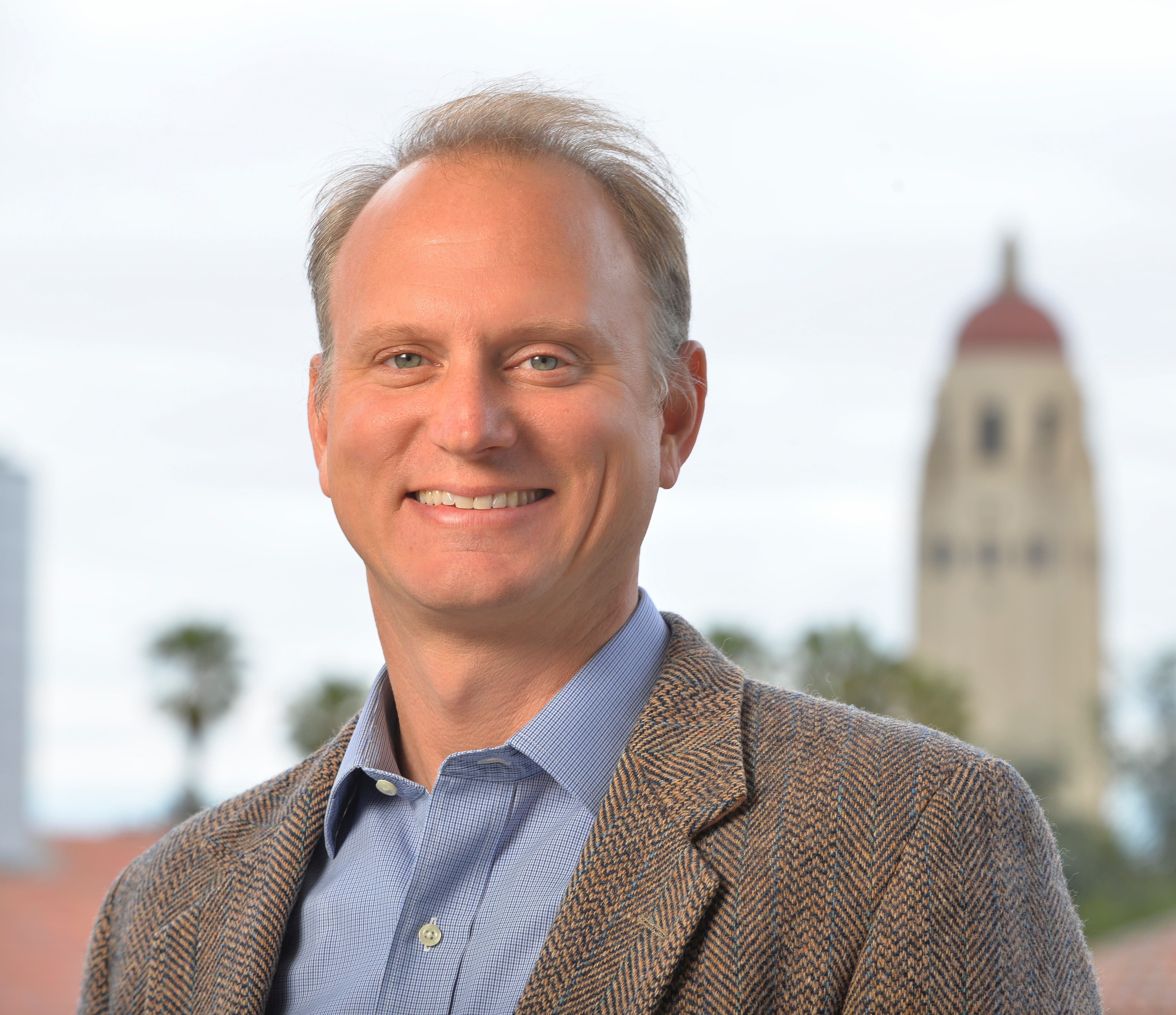}}]{Mac Schwager}
(Senior Member, IEEE) is Associate Professor of Aeronautics and Astronautics at Stanford University. He received a B.S. degree in Mechanical Engineering from Stanford University in 2000, and M.S. and Ph.D. degrees in Mechanical Engineering from the Massachusetts Institute of Technology in 2005 and 2009, respectively. His research interests include distributed algorithms for control, perception, and learning in groups of robots, and models of cooperation and competition in groups of engineered and natural agents. Dr. Schwager was the recipient of the NSF CAREER Award in 2014, the DARPA Young Faculty Award in 2018, and a Google Faculty Research Award
in 2018, and the IROS Toshio Fukuda Young Professional Award in 2019.
\end{IEEEbiography}



\vfill

\end{document}